\definecolor{darkred_f}{RGB}{182, 85, 85}
\definecolor{darkblue_f}{RGB}{86, 116, 172}
\definecolor{darkorange_f}{RGB}{209, 136, 92}
\definecolor{darkgreen_f}{RGB}{106, 165, 110}
\definecolor{plot_blue}{RGB}{66, 153, 225}
\definecolor{plot_orange}{RGB}{237, 137, 54} 
\definecolor{plot_red}{RGB}{245, 101, 101}
\definecolor{plot_green}{RGB}{72, 187, 120}
\definecolor{plot_purple}{RGB}{159, 122, 234}
\definecolor{plot_green2}{RGB}{56, 178, 172}
\definecolor{plot_pink}{RGB}{237, 100, 166}
\definecolor{goldenyellow}{rgb}{1.0, 0.87, 0.0}
\definecolor{Gray}{gray}{0.9}
\definecolor{midgreen}{rgb}{0.1,0.5,0.1}
\definecolor{darkgray}{gray}{0.25}
\definecolor{lightblue}{rgb}{0.25,0.25,0.8}
\definecolor{mydarkblue}{rgb}{0,0.08,0.45}
\def\eqref#1{equation~\ref{#1}}
\def\1{\bm{1}}
\def\0{\bm{0}}
\def\vb{{\bm{b}}}
\def\vd{{\bm{d}}}
\def\ve{{\bm{e}}}
\def\vf{{\bm{f}}}
\def\vg{{\bm{g}}}
\def\vh{{\bm{h}}}
\def\vi{{\bm{i}}}
\def\vj{{\bm{j}}}
\def\vk{{\bm{k}}}
\def\vl{{\bm{l}}}
\def\vu{{\bm{u}}}
\def\vx{{\bm{x}}}
\def\mA{{\bm{A}}}
\def\mB{{\bm{B}}}
\def\mC{{\bm{C}}}
\def\mD{{\bm{D}}}
\def\mH{{\bm{H}}}
\def\mI{{\bm{I}}}
\def\mK{{\bm{K}}}
\def\mM{{\bm{M}}}
\def\mQ{{\bm{Q}}}
\def\mR{{\bm{R}}}
\def\mT{{\bm{T}}}
\def\mW{{\bm{W}}}
\def\mX{{\bm{X}}}
\def\mY{{\bm{Y}}}
\def\mZ{{\bm{Z}}}
\def\mLambda{{\bm{\Lambda}}}
\DeclareMathAlphabet{\mathsfit}{\encodingdefault}{\sfdefault}{m}{sl}
\SetMathAlphabet{\mathsfit}{bold}{\encodingdefault}{\sfdefault}{bx}{n}
\def\gN{{\mathcal{N}}}
\def\gX{{\mathcal{X}}}
\def\sB{{\mathbb{B}}}
\def\sG{{\mathbb{G}}}
\def\sN{{\mathbb{N}}}
\def\sR{{\mathbb{R}}}
\def\sS{{\mathbb{S}}}
\def\sV{{\mathbb{V}}}
\newcommand{\E}{\mathbb{E}}
\DeclareMathOperator{\Tr}{Tr}
\newtheorem{theorem}{Theorem}
\newtheorem{lemma}{Lemma}
\newtheorem{claim}{Claim}
\newtheorem{corollary}{Corollary}
\newtheorem{example}{Example}
\newcommand{\op}{\mathrm{op}}
\def\vF{{\bm{F}}}
\newcommand{\spatial}{{\mathscr S}}
\newcommand{\frequency}{{\mathscr F}}
\definecolor{darkred_f}{RGB}{182, 85, 85}
\definecolor{darkblue_f}{RGB}{86, 116, 172}
\definecolor{darkorange_f}{RGB}{209, 136, 92}
\definecolor{darkgreen_f}{RGB}{106, 165, 110}
\definecolor{plot_blue}{RGB}{66, 153, 225}
\definecolor{plot_orange}{RGB}{237, 137, 54} 
\definecolor{plot_red}{RGB}{245, 101, 101}
\definecolor{plot_green}{RGB}{72, 187, 120}
\definecolor{plot_purple}{RGB}{159, 122, 234}
\definecolor{plot_green2}{RGB}{56, 178, 172}
\definecolor{plot_pink}{RGB}{237, 100, 166}
\definecolor{goldenyellow}{rgb}{1.0, 0.87, 0.0}
\newcommand{\upd}{{(d)}}
\def\myeqref#1{Eq.~(\ref{#1})}
\newcommand*\samethanks[1][\value{footnote}]{\footnotemark[#1]}
\title{Precise Learning Curves and Higher-Order Scaling Limits for Dot Product Kernel Regression}
\author{
    Lechao Xiao\thanks{LX, HH and TM contributed equally. HH's work was done while at Harvard University.}
    \\
  Google DeepMind\\
  \texttt{xlc@google.com} 
  \and
  Hong Hu\samethanks[1]
  \\
  University of Pennsylvania\\
  \texttt{huhong@wharton.upenn.edu}
  \and 
  Theodor Misiakiewicz\samethanks[1] \\ 
  Stanford University
  \\
  \texttt{misiakie@stanford.edu} 
  \and 
  Yue M. Lu
  \\
  Harvard University
  \\
  \texttt{yuelu@seas.harvard.edu}
  \and
  Jeffrey Pennington \\
  Google DeepMind\\
  \texttt{jpennin@google.com} 
}
\begin{document}

\maketitle

\begin{abstract}
As modern machine learning models continue to advance the computational frontier, it has become increasingly important to develop precise estimates for expected performance improvements under different model and data scaling regimes. Currently, theoretical understanding of the learning curves that characterize how the prediction error depends on the number of samples is restricted to either large-sample asymptotics ($m\to\infty$) or, for certain simple data distributions, to the high-dimensional asymptotics in which the number of samples scales linearly with the dimension ($m\propto d$). There is a wide gulf between these two regimes, including all higher-order scaling relations $m\propto d^r$, which are the subject of the present paper. We focus on the problem of kernel ridge regression for dot-product kernels and present precise formulas for the mean of the test error, bias, and variance, for data drawn uniformly from the sphere with isotropic random labels in the $r$th-order asymptotic scaling regime $m\to\infty$ with $m/d^r$ held constant. We observe a peak in the learning curve whenever $m \approx d^r/r!$ for any integer $r$, leading to multiple sample-wise descent and nontrivial behavior at multiple scales. We include a \href{https://colab.research.google.com/github/google/neural-tangents/blob/main/notebooks/learning_curves.ipynb
}{colab}\footnote{Available at: \href{https://tinyurl.com/2nzym7ym}{https://tinyurl.com/2nzym7ym}} notebook that reproduces the essential results of the paper. 
\end{abstract}

\section{Introduction}
Modern machine learning has entered an era in which scaling is arguably the most critical ingredient to improve performance. Recent breakthroughs such as GPT-3~\citep{kaplan2020scaling} and PaLM~\citep{chowdhery2022palm} have demonstrated that performance of various learning algorithms improves in a {\it predictable} manner as the amount of data and computational resources used in training increases. The functional relationships between performance and resources are loosely referred to as learning curves. While extrapolation of empirical learning curves is widely used to make predictions about how a model might perform when extra resources become available, a rigorous theoretical understanding is lacking. A fundamental obstacle in developing a detailed theoretical model of such learning curves is that they depend on many moving parts, e.g. the data distribution, the network architecture, the training algorithm, among others. In addition, even in the simplest possible settings, the learning curves can exhibit non-trivial structure that naive scaling laws fail to model, e.g. the well-known double-descent phenomenon~\citep{belkin2019reconciling, advani2020high}. 

In the past couple years, a large amount of effort from the community has improved our theoretical understanding of such phenomena and in some cases precise characterizations of learning curves have been obtained (see e.g., ~\citep{hastie2022surprises, adlam2020neural, mei2022generalization,montanari2019generalization,  loureiro2021learning}). These results have helped clarify several puzzling empirical observations, such as the origin of the double-descent peak~\citep{adlam2020understanding,lin2021causes,d2020double} and linear trends between in- and out-of-distribution generalization performance~\citep{tripuraneni2021covariate,tripuraneni2021overparameterization,mania2020classifier}, among many others. However, the precise predictions from many of these analyses have been possible only in the linear high-dimensional scaling regime in which the number of training samples $m$ scales linearly with the dimension $d$, i.e. $m\propto d$. In these asymptotics, the model's effective capacity is limited to linear functions of the features. In contrast, many state-of-the-art models operate in a regime where the amount of data is much larger than the data dimensionality; for example, large text corpora can contain trillions of tokens, whereas the effective input dimensionality of language models is at most millions. Therefore, going beyond the linear scaling regime ($m\propto d$) to higher-order scaling regimes ($m\propto d^r$) is essential in improving our understanding of modern machine learning systems, and is the focus of the current paper.

Several works have investigated the behavior of the learning curves for nonlinear scalings in the dot-product kernel or random features setting, but they have done so only in the noncritical regime where $m\not\propto d^{r}$~\citep{ghorbani2021linearized, mei2021generalization, liang2020multiple}.
\citep{bordelon2020spectrum, canatar2021spectral} also derive the closed-form predictions of the learning curves for both the critical and the noncritical scalings, but they have done so via nonrigorous statistical physics methods and a “Gaussian equivalence conjecture” \citep{dietrich1999statistical, hu2020universality,
gerace2020generalisation,goldt2020modeling, goldt2020gaussian,  goldt2022gaussian}. 
Rigorously extending these results to include the critical regime $m \propto d^{r}$ is nontrivial, both from the technical perspective, namely, proving a “Gaussian equivalence conjecture",  and also from the phenomenological perspective, as we shall see the critical behavior induces nonmonotonicity and multiple sample-wise descents.

\begin{figure}[t]
\centering
    \includegraphics[width=0.71\textwidth]{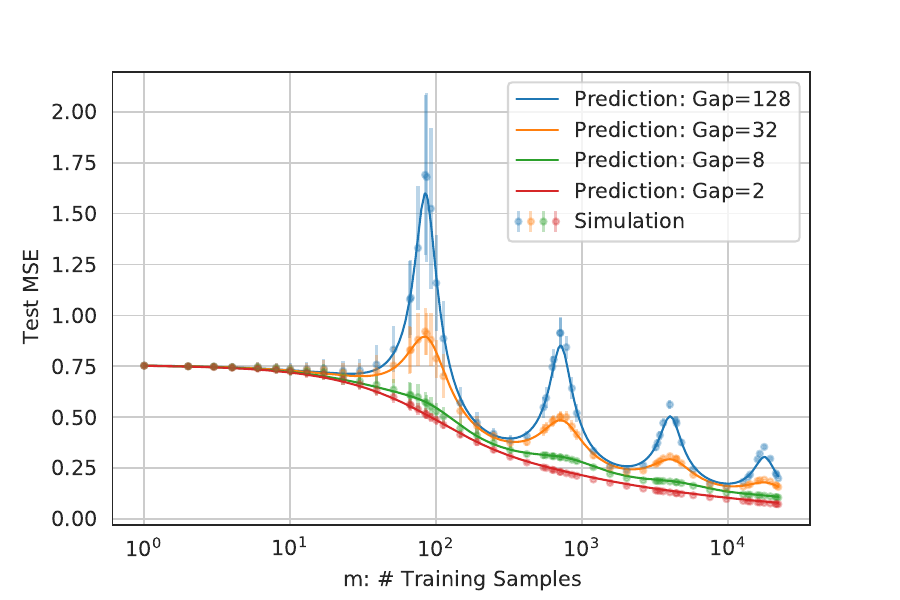}

\caption{{\bf Precise Sample-wise Learning Curves for One-hidden Layer CNN kernels.} 
    The theoretical predictions (\myeqref{eq:generalization formula}, solid lines) agree with finite-size simulations (markers) across several orders of magnitude and captures cases in which the curves are relatively simple ({\bf\color{plot_red}monotonically decreasing}, small spectral gap) and complex ({\bf\color{plot_blue} multiple-descent}, large spectral gap). Simulations are obtained from kernel regression with one-layer CNN kernels averaged over 50 runs. The input is of shape $d=d_0\times p$ with size $d_0=14$ and number of patches $p=6$. We vary the kernels by varying the ratio (aka spectral gap) between consecutive eigenspaces, where the ratio $\mathrm{Gap}\in [{\bf\color{plot_red}2}, {\bf\color{plot_green}8}, {\bf\color{plot_orange}32}, {\bf\color{plot_blue}128}]$.}
    \label{fig:multiple}
\vspace{-0.3cm}
\end{figure}

In this work, we obtain precise formulas for the sample-wise learning curves in the kernel ridge regression setting for a family of dot-product kernels for spherical input data in the polynomial scaling regimes $m\propto d^r$ for all $r\in\sN^*$. This family of kernels includes the neural network Gaussian Process (NNGP) kernels and Neural Tangent Kernels (NTK) associated with multi-layer fully-connected networks or convolutional networks. Both kernels serve as important starting points towards a deeper understanding of neural networks as they often capture the first order learning dynamics of neural networks in certain scaling limits \citep{jacot2018neural, lee2019wide, arora2019exact}.

\subsection{Contributions}
Our primary contributions are to establish the following, for data drawn uniformly from the sphere:
\begin{enumerate}
    \item The empirical spectral density of the Gram matrix induced by degree-$r$ spherical harmonics converges to a Marchenko-Pastur distribution when $(d^r/r!)/m$ converges to a positive constant as $d\to\infty$ (Theorem~\ref{Theorem:mp});
    \item A precise closed-form formula for the sample-wise learning curves for dot-product kernel regression when $m\propto d^r$ for all $r\in\sN^*$ as $d\to\infty$ (Theorem~\ref{thm:generalization-fc-kernel});
    \item Empirically, the theoretical predictions agree with finite-size simulations surprisingly well even in the strong finite-size correction regime (Fig.~\ref{fig:multiple});
    \item An extension of the above results to convolutional kernels (Section~\ref{sec:cnn}).
\end{enumerate}
Finally, we note that our results also assume the high-degree coefficients of the label function to be random and isotropic; see \myeqref{eq:isotropic-random}. It remains an open question to prove similar results\footnote{See Sec.~\ref{sec:experimental-setup} for empirical evidences in favor of these results. } when the label function is deterministic.

\section{Notation and Setup} 
Let $\bm\gX =\sS_{d-1}$ denote the input space, where $\sS_{d-1}$ is the unit sphere in $\mathbb R^d$ and $\bm\gX$ is equipped with the normalized uniform measure $\sigma$. We use $\Delta_d$ to represent any quantity (a scalar, vector or a matrix) with $|||\Delta_d|||\to 0$ as $d\to\infty$ (in probability if $\Delta_d$ is stochastic), where $|||\cdot |||$ can be the absolute value of a scalar, the norm of a vector or the operator norm of a matrix.

Let $\mX\in \sR^{m\times d}$ be the training inputs where the $i$-th row of $\mX$ is $\vx_i^\top$. We assume $\{\vx_i\}_{i\in [m]}$ is sampled uniformly, iid from $\bm\gX$. The label function $f: \sS_{d-1}\to \sR$ will be defined in Section~\ref{sec:generalization}.
Let $K= K^\upd$ be a dot-product kernel defined on $\sS_{d-1}\times \sS_{d-1}$, i.e., $K(\vx, \vx') = h(\vx^\top\vx') $ for some function $h\in [-1,1 ]\to \mathbb R$. We assume $h$ has the following decomposition 
\begin{align}
    h(t) = \sum_{k=1}^\infty \hat h_k^2  P_k(t), \quad \mathrm {with}\quad \sum_{k=1}^\infty \hat  h_k^2 <\infty\,, 
\end{align}
where $P_k$ is the $k$-th order Legendre polynomials in $d$ dimensions. 
For simplicity, we assume $\hat \vh = (\hat h_k)_{k\geq 1}$ is a sequence that is independent of $d$ and $\hat h_k\neq 0$ for all $k\leq k_0$ where $k_0$ is sufficiently large. As such, we can decompose the kernel function using sperical harmonics,  
\begin{align}
    K(\vx, \vx') = \sum_{k=1}^\infty \sigma_k^2\sum_{l \in [N(d, k)]}Y_{k, l}(\vx) Y_{k, l}(\vx')
    = \sum_{k=1}^\infty \sigma_k^2 Y_k(\vx)^\top Y_k(\vx')\,,
\end{align}
where $Y_{k, l}$ is the $l$-th spherical harmonic of degree $k$, $N(d, k) = d^k/k! + O(d^{k-1})$ is the total number of degree $k$ spherical harmonics in $d$ dimensions, 
$\sigma_k^2 = \hat h_k^2/N(d, k)$ is the eigenvalue of $Y_{kl}$, 
and $Y_k(\vx)$ is the column vector $[Y_{k, l}(\vx)]_{l\in [N(d, k)]}^\top$. We also denote by $Y_k(\mX)$ the $m\times N(d, k)$ matrix whose $i$-th row is $Y_k(\vx_i)^\top$.

\section{Structure of the Empirical Kernel and Marchenko-Pastur Distribution}
The structure of the empirical kernel matrix $K(\mX, \mX)$ plays a critical role in characterizing the sample-wise test error for the kernel ridge regressor associated to $K$. We assume the training set size scales polynomially, i.e.  $m\sim d^r$ for some positive integer $ r\in\sN^*$. Decompose this kernel into low-, critical- and high-frequency modes as follows,
\begin{align}
    K(\mX, \mX) = \sum_{k <r} \sigma_k^2Y_k(\mX)Y_k(\mX)^\top + 
    \sigma_r^2Y_r(\mX)Y_r(\mX)^\top + \sum_{k>r}\sigma_k^2 Y_k(\mX)Y_k(\mX)^\top\,.
\end{align}
The low- and high-frequency parts have simple structures since $N(d,k)/m$ either diverges to infinity or converges to zero with rate as least $d^{\pm 1}$, yielding concentration that results in significant simplification. To be precise, 
for high-frequency modes $k>r$, $Y_k(\mX)$ is a ``fat" matrix and $Y_k(\mX)Y_k(\mX)^\top/N(d, k) = \mI_{m} + \Delta_d$ where $\Delta_d$ vanishes as $d\to\infty$ \citep{mei2021generalization}. Thus, the high-frequency parts behave like a regularizer in the following sense,
\begin{align}
    \sum_{k>r}\sigma_k^2 Y_k(\mX)Y_k(\mX)^\top = 
    \sum_{k>r}\sigma_k^2 N(d, k) \mI_{m} + \Delta_d
    =
    \left(\sum_{k>r}\hat h_k^2 \right) \mI_{m} + \Delta_d\,.
\end{align}
On the other hand, when $k<r$, $Y_k(\mX)$ is a $m \times N(d, k)$ ``tall" matrix with $N(d, k)/m =O(d^k/m) = O(d^{-(r-k)})\to 0 $. Similarly, \citet{mei2021generalization} show that $Y_k(\mX)^\top Y_k(\mX)/m= \mI_{N(d,k)} +\Delta_d$, implying that 
when restricted to the subspace spanned by low-frequency functions $\{Y_{kl}\}_{k<r}$, the regressor associated to the empirical kernel $K(\mX, \mX)$ acts like a pure multiplicative scaling.

It remains to understand the critical-frequency mode $Y_r(\mX)^\top Y_r(\mX)$. It turns out that if $N(d,k)/m\to \alpha \in (0, \infty)$,  
then the empirical spectral measure of the random matrix $Y_r(\mX)^\top Y_r(\mX)/m$ converges to the Marchenko-Pastur distribution $\mu_\alpha$, whose density is given by 
\begin{align}
    \mu_{\alpha}(t) = \left(1-\frac1\alpha\right)^{+} \delta_{0}(t) +  \frac{\sqrt{(\alpha_+ - t)(t-\alpha_-)}}{2\pi \alpha t}\1_{[\alpha_-, \alpha_+]}(t), \,
    \mathrm{where} \,\,
    \alpha_{\pm}=(1\pm \sqrt \alpha)^2. 
\end{align}
where $\delta_0(t)=0$ if $t\neq 0$ else $1$. See Fig.~\ref{fig:mp-distribution} for visualizations of $\mu_\alpha$. The $r=1$ case is obvious as $Y_1(\mX) = c_d \mX$ for some normalizing constant $c_d$ and it is clear $\frac 1m Y_1(\mX)^\top Y_1(\mX) = \frac {c_d^2} m \mX^\top\mX$ converges to the Marchenko-Pastur distribution $\mu_{\alpha}$ if $d/m\to \alpha \in (0, \infty)$ as $d\to\infty$ \citep{tao2012topics}. Our first result show that this result continues to hold for all degrees. 
\begin{figure}
    \centering
    \includegraphics[width=0.8\textwidth]{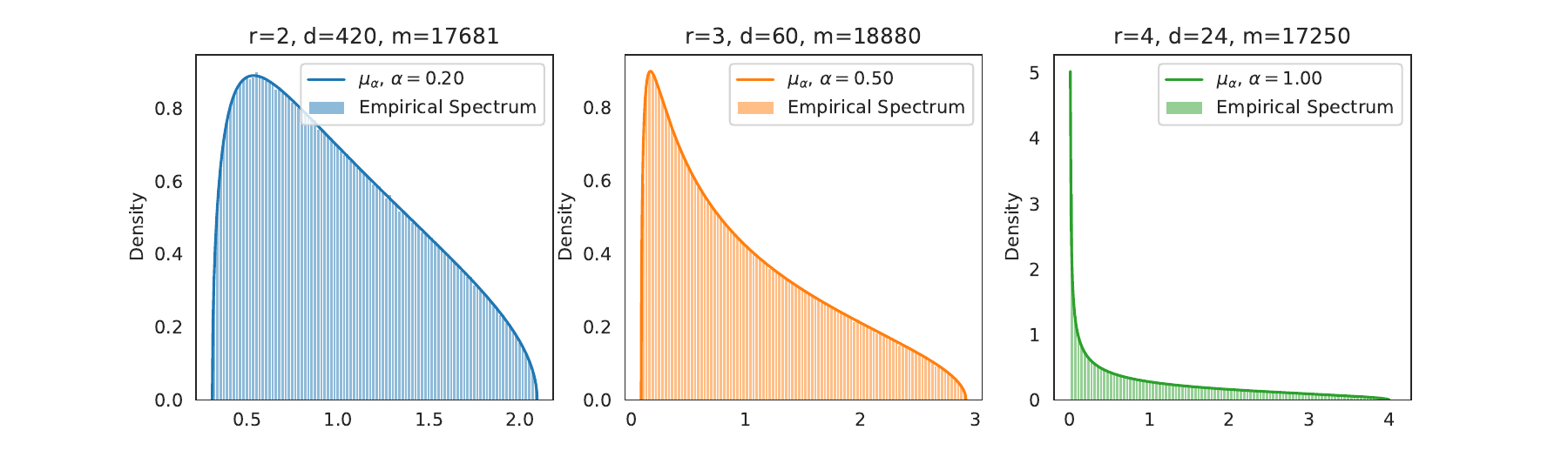}
    \\
    \includegraphics[width=0.8\textwidth]{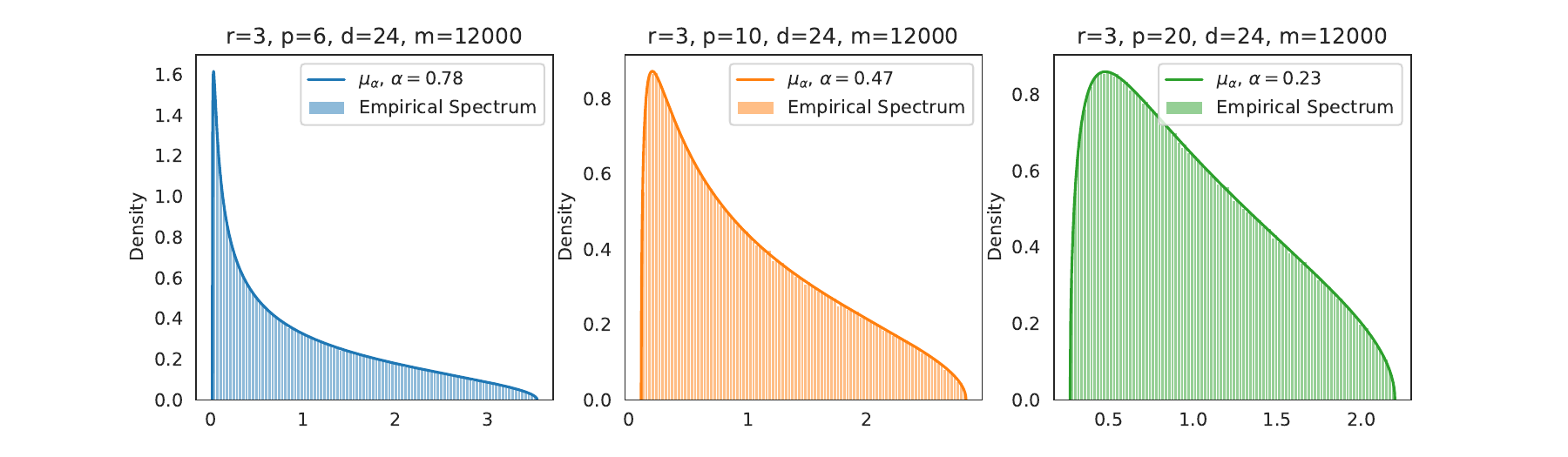}
    \caption{{\bf Marchenko-Pastur Distribution of Spherical Harmonics.} Top: the empirical distribution of product kernels $Y_r(\mX)Y_r(\mX)^\top/N(d,r)$ vs theory prediction from $\mu_\alpha$ for various degrees $r$, input dimensions $d$ and number of samples $m$ as indicated in the titles. Bottom: the empirical distribution of the CNN kernel $Y_r(\mX)Y_r(\mX)^\top/pN(d,r)$ vs theoretical prediction. We fix $r, d$ and $m$ but varying the number of patches $p \in \{6, 10, 20\}$.}
    \label{fig:mp-distribution}

\end{figure}

\begin{theorem}\label{Theorem:mp}
For fixed $r\in \sN$ and $\alpha\in (0,\infty)$, if $N(d, r)/m\to \alpha\in (0,\infty)$ as $d\to\infty$, then the empirical spectral distribution of $\frac 1m Y_r(\mX)^\top Y_r(\mX)$ converges in distribution to the Marchenko-Pastur distribution $\mu_{\mathrm MP(\alpha)}$ . 
\end{theorem}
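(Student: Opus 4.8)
The plan is to use the moment method. Since $\mu_\alpha$ is compactly supported it is determined by its moments, so it suffices to show that for each fixed integer $p\ge1$ the $p$-th moment of the empirical spectral distribution of $A_d:=\tfrac1m Y_r(\mX)^TY_r(\mX)$ converges in probability to $\int t^p\,d\mu_\alpha(t)$; I would do this by computing its expectation and separately bounding its variance. Writing $N:=N(d,r)$, using $\Tr(A_d^p)=m^{-p}\Tr\big((Y_r(\mX)Y_r(\mX)^T)^p\big)$ and the addition theorem $Y_r(\vx)^TY_r(\vx')=N\,P_r(\vx^T\vx')$ (so coincident arguments contribute the deterministic value $P_r(1)=1$), the expected $p$-th moment equals
\begin{align}
    \frac{N^{p-1}}{m^{p}}\sum_{i_1,\dots,i_p\in[m]}\E\bigg[\prod_{a=1}^{p}P_r\big(\vx_{i_a}^{T}\vx_{i_{a+1}}\big)\bigg],\qquad i_{p+1}:=i_1,
\end{align}
i.e.\ a sum over closed walks of length $p$ on $[m]$, each weighted by the expected product of Gegenbauer values over its non-loop edges.

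The evaluation of this sum rests on three properties of the degree-$r$ Gegenbauer polynomial on $\sS_{d-1}$, the first two exact and the third the genuinely new ingredient. (i) Mean zero: $\E[P_r(\vx^T\vx')]=0$ for independent uniform $\vx,\vx'$, which kills any walk having a pendant multiplicity-one edge. (ii) The reproducing identity $\E_{\vx}\big[P_r(\vx^T\va)P_r(\vx^T\vb)\big]=N^{-1}P_r(\va^T\vb)$, which follows from $\E[Y_r(\vx)Y_r(\vx)^T]=\mI_N$; this integrates out any interior degree-two vertex of a walk at the cost of a factor $N^{-1}$, and is the key identity behind the appearance of the Marchenko--Pastur law. (iii) A uniform hypercontractive moment bound: for each fixed $s\ge1$ there is $C_{r,s}$ independent of $d$ with $\E\big[|P_r(\vx^T\vx')|^{s}\big]\le C_{r,s}N^{-s/2}$, obtained since $\vx'\mapsto P_r(\vx^T\vx')$ is, for fixed $\vx$, a degree-$r$ spherical harmonic of squared $L^2(\sigma)$-norm $N^{-1}$, so the dimension-free hypercontractive inequality on the sphere for fixed-degree harmonics (cf.\ the estimates used in \citep{mei2021generalization} and references therein) controls its higher moments. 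With (i)--(iii) in hand, the remainder is the classical Marchenko--Pastur bookkeeping: classify closed walks by their induced multigraph, use (ii) to contract degree-two vertices and (i),(iii) to discard configurations with pendant multiplicity-one edges, high edge multiplicities, or high-degree vertices; since $N\sim\alpha m$ and $N=d^r/r!+O(d^{r-1})$, exactly the non-crossing configurations survive, each contributing the appropriate power of $\alpha$, summing to $\int t^p\,d\mu_\alpha(t)$ (the $r=1$ case being the standard computation, cf.\ \citep{tao2012topics}). An analogous expansion over ordered \emph{pairs} of closed walks shows the variance of the empirical $p$-th moment is $o(1)$, which upgrades convergence in expectation to convergence in probability.

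As a fallback I would instead invoke a general Marchenko--Pastur theorem for sample covariance matrices $\tfrac1m\sum_i\vy_i\vy_i^T$ with i.i.d.\ isotropic rows obeying a quadratic-form concentration condition (such as $\E\big|\tfrac1N\vy^TB\vy-\tfrac1N\Tr B\big|^{2}\to0$ uniformly over $\|B\|_{\op}\le1$), applied to $\vy_i=Y_r(\vx_i)$; verifying that condition reduces, after expanding the square, to estimating the fourth mixed moments $\E[Y_{r,l_1}(\vx)Y_{r,l_2}(\vx)Y_{r,l_3}(\vx)Y_{r,l_4}(\vx)]$, whose leading part matches the Gaussian value $(\Tr B)^2+\Tr(B^2)+\Tr(BB^T)$ and whose remainder is again handled by hypercontractivity.

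The main obstacle, in either route, is ingredient (iii) and the bookkeeping it feeds. One must establish the hypercontractive moment bounds for Gegenbauer polynomials \emph{uniformly in $d$}, and then verify that in the critical scaling $m\asymp N(d,r)\asymp d^r/r!$ these bounds are just barely strong enough to annihilate every walk outside the non-crossing family: the powers of $d$ cancel exactly for the surviving terms, so there is no slack, and one must check that the lower-order terms in $N(d,r)$ and all sub-dominant walk contributions are genuinely negligible rather than merely of the same order---which is precisely what fails in the non-critical regimes $m\not\propto d^r$ and makes the critical case delicate.
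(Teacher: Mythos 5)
Your fallback route is exactly the paper's proof: the paper invokes the Bai--Zhou theorem (its Lemma~1), which reduces Theorem~\ref{Theorem:mp} to the quadratic-form variance condition $N(d,r)^{-2}\,\E\big|Y_r(\vx)^T\mA_d\,Y_r(\vx)-\Tr(\mA_d)\big|^2\to0$ uniformly over $\|\mA_d\|_{\op}\le1$. Your primary route --- the trace-moment method on the Gram matrix $N(d,r)\,P_r(\vx_i^T\vx_j)$, using mean zero, the reproducing identity $\E_{\vx}[P_r(\vx^T\va)P_r(\vx^T\vb)]=N(d,r)^{-1}P_r(\va^T\vb)$ to contract degree-two vertices, and dimension-free hypercontractive bounds $\E|P_r(\vx^T\vx')|^{s}\lesssim N(d,r)^{-s/2}$ --- is a genuinely different and viable strategy (it is close in spirit to the concurrent work of \citet{misiakiewicz2022spectrum}); it works directly with the kernel entries and never needs to choose a basis of harmonics, at the price of the walk bookkeeping you yourself flag, which at the critical scaling $m\asymp d^r/r!$ has no slack and must treat vertices of degree $\ge3$ and non-pendant multiplicity-one edges by a combination of exact contraction and counting rather than by cancellation alone.

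The one point where your write-up understates the difficulty is the claim, in the fallback, that after expanding the square the remainder is ``again handled by hypercontractivity.'' Hypercontractivity only yields uniform $O(1)$ bounds such as $\E\,Y_{r,l}^2(\vx)Y_{r,l'}^2(\vx)\le C_r$, and a bound of constant order on each of the $\sim N(d,r)^2$ (diagonal) or $\sim N(d,r)^4$ (off-diagonal) index tuples does not make the normalized sum vanish. The paper's actual work is precisely here: it rotates to the monomial basis $g_\vi(\vx)=C_{d,r}\prod_{i\in\vi}x_i$ (which covers all but $o(d^r)$ of the degree-$r$ harmonics), proves by explicit sphere-moment computations the decorrelation estimate $\E\,g_\vi^2(\vx)g_\vj^2(\vx)=1+O(d^{-1})$ for disjoint $\vi,\vj$ (its Lemma~2), shows the exceptional pairs number $o(N(d,r)^2)$, and controls the off-diagonal four-index terms by a tensor-style combinatorial count in the spirit of \citet{bryson2021marchenko} together with $\|\mA\|_{\op}\le1$. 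So either route ultimately needs an asymptotic-orthogonality/counting argument beyond the three ingredients you list; in your moment-method version this burden reappears as the classification of walks, which you correctly identify as the main obstacle but leave unexecuted.
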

In the top panel of Fig.~\ref{fig:mp-distribution}, we generate the empirical spectra\footnote{In the plot, we generate the spectra of the kernel matrix $Y_r(\mX)Y_r(\mX)^\top$ instead of the covariate matrix $Y_r(\mX)^\top Y_r(\mX)$. Although both of them have the same set of non-zero eigenvalues, 
the former can be easily implemented via Legendre polynomials $P_r(\vx^\top\vx')$.} of $\frac 1m Y_r(\mX)^\top Y_r(\mX)$ for various values of $r$,  $d$, and $\alpha$. The Marchenko-Pastur distribution $\mu_\alpha$ perfectly captures the empirical measures of the random matrices $\frac 1m Y_r(\mX)^\top Y_r(\mX)$ for all $r$ considered. We sketch the main steps of the proof of the theorem below; 
see Sec.~\ref{sec:proof-pm} for the whole proof.
\begin{proof}[Sketch of Proof]
From~\citet[Theorem 1.1]{bai2008large}, it suffices to prove concentration of the following quadratic forms: for every sequence of $N(d,k)\times N(d, k)$ matrices $\{\mA_{d}\}$ with operator norm $\|\mA_d\|_{\mathrm{op}}\leq 1$, the variance 
\begin{align}
    N(d,r)^{-2}\sV( Y_r(\vx)^\top \mA_d  Y_r(\vx) - \mathrm{Tr}(\mA_d))\to 0 \quad \mathrm{as}\quad d\to\infty \, . 
\end{align}
For the purpose of illustration, we assume $\mA \equiv \mA_d$ is a diagonal matrix. Then we only need to show
\begin{align}\label{eq:mp-concentration}
    N(d,k)^{-2}\sum_{l, l'\in [N(d, k)]} A_{ll}A_{l'l'}(\E_\vx Y_{k,l}^2(\vx)Y_{k,l'}^2(\vx) -1)\to 0  .
\end{align}
By hypercontractivity of spherical harmonics \citep{beckner1992sobolev}, 
\begin{align}
    \E_\vx Y_{k,l}^2(\vx)Y_{k,l'}^2(\vx) \leq  \left(\E_\vx Y_{k,l}(\vx)^4  \E_\vx Y_{k,l'}^4(\vx)\right)^{1/2} \leq C_k\E_\vx Y_{k,l}(\vx)^2
    \E_\vx Y_{k,l'}(\vx)^2 = C_k,
\end{align}
where $C_k$ is some absolute constant. 
Since $|A_{ll}|\leq \|\mA\|_{\mathrm{op}}\leq 1$, we can drop any $o(N(d, l)^2)$ pairs of $(l, l')$ in \myeqref{eq:mp-concentration}. We show that for the remaining pairs $(l, l')$, the eigenfunctions are asymptotically uncorrelated in the sense 
\begin{align}
    \E_\vx Y_{k,l}^2(\vx)Y_{k,l'}^2(\vx) = \E_\vx Y_{k,l}^2(\vx) \E_\vx Y_{k,l'}^2(\vx) + O(d^{-1}) = 1 + O(d^{-1})
\end{align}
which implies \myeqref{eq:mp-concentration}. 
\end{proof}

\section{Generalization Error of Dot-Product Kernel Regression}
\label{sec:generalization}
In this section, we establish the {\it average} generalization error for the kernel regression in the asymptotic regime $N(d,r)/m\to \alpha$, for some $\alpha\in (0, \infty)$ and $r\geq 1$ fixed. We assume the label function $f\in L^2(\mathbb S_{d-1})$ is given by 
\begin{align}
    f(\vx) = \sum_{k\geq 1 } \sum_{l\in [N(d, k)]} \hat f_{k l} Y_{k l}(\vx) = 
     \sum_{k\geq 1} \hat\vf_k^\top Y_k(\vx)\,,
\end{align}
where $\hat f_{kl}$ are the ``Fourier" coefficients and $\hat \vf_k = [f_{k l}]_{l\in N(d, k)}^\top$. We need to make a technical assumption that for $k', k\geq r$
\begin{align}\label{eq:isotropic-random}
    \E \hat \vf_k = \0,  \, \quad \quad \E \hat \vf_k \hat \vf_k^\top =   \frac{\hat F_k^2}{N(d, k)}  \mI_{N(d, k)}
    \quad \mathrm{and}\quad \E \hat \vf_k \hat \vf_{k'}^\top = \0_{N(d, k) \times N(d, k')}
\end{align}
i.e. $\hat \vf_k$ is centered with isotropic covariance and $\{\hat \vf_k\}_{k\geq r}$ are mutually uncorrelated. 
Note that we allow $\hat f_{kl}$ to be deterministic for $k<r$. 
We let $\vF = (\hat F_k)_{\geq 1}$ be a fixed sequence with
$ \sum_{k\geq 1}\hat F_k^2<\infty$, where $\hat F_k^2 = \sum_{l\in [N(d, k)]} \hat f_{kl}^2$ for $k<r$. 
For convenience, set $\hat F_{>j}^2 = \sum_{k>j}\hat F_k^2$ (similarly for $\hat F^2_{\geq j}$,  $\hat F^2_{\leq j}$, etc.) and use $\vf$ to denote the random vector $\{\hat f_{kl}\}_{kl}$. 
Given training inputs $\mX$ and observed labels  $\mY = f(\mX)+{\bm\epsilon}$, where $\bm\epsilon \sim \mathcal N(\0, \sigma_\epsilon^2 \mI_m)$ is the noise,  the prediction using kernel function $K$ is given by 
\begin{align}
    y(\vx) = K(\vx, \mX) (K(\mX, \mX)+ \lambda \mI_m)^{-1}(f(\mX) + \bm\epsilon)\, . 
\end{align}
Here $\lambda\geq 0$ is the regularization. As such, the {\it mean} test error over the random labels is given by   
\begin{align}\label{eq:kernel regression error}
\mathrm{Err}(\mX;\lambda,  \vF,  \hat\vh) = \E_{\vf}\mathrm{Err}(\mX; \lambda, \vf, \hat\vh)
\quad \mathrm{where} 
\quad 
    \mathrm{Err}(\mX; \lambda, \vf, \hat\vh) = \E_{\vx, \bm \epsilon} |y(\vx) - f(\vx)|^2 \, .
\end{align}
To state our results, we need to introduce two functions $\chi_B$ and $\chi_V$ which are related to the bias and variance in the generalization error, 
\begin{align}
    &\chi_B(\alpha, \xi) = \int (1 + \xi  t)^{-2} \mu_{\alpha}(t)dt
    \quad \mathrm{and}\quad 
    \chi_V(\alpha, \xi)  = \alpha\xi ^2
    \int {t}(1+ \xi t) ^{-2} \mu_{\alpha} (t) dt \,. 
\end{align}
Both $\chi_B$ and $\chi_V$ have closed-form representations; see Sec \ref{sec:computing integrals}. 
Define the effective regularization associated to the $r$-th order scaling to be 
\begin{align}\label{eq:effective-regularization}
    \xi_r(\hat\vh,\lambda, \alpha) = \frac{\hat h_r^2}{\alpha(\lambda + \hat h_{>r}^2)}
\end{align}
Finally, 
we define the bias and variance associated to the $r$-th order scaling to be
\begin{align}
\label{eq:bias}
 \mathrm{B}_r(\alpha) =     \mathrm{B}_r(\alpha; \lambda,  \vF, \hat\vh)
    =& \chi_B(\alpha, \xi_r(\hat\vh,\lambda, \alpha)) \hat F_r^2 + \hat F_{>r}^2
    \\
    \mathrm{V}_r(\alpha) = \mathrm{V}_r(\alpha; \lambda, \vF, \hat\vh)
    =&\chi_V\left(\alpha, \xi_r(\hat\vh,\lambda, \alpha)\right)\left(\hat F_{>r}^2 + \sigma_\epsilon^2\right)
    \label{eq:variance}
\end{align}
The following is our main result, which characterizes the test error in the asymptotic regime $m\propto d^r$. 
\begin{theorem}\label{thm:generalization-fc-kernel}
Let $\alpha\in (0, \infty)$ and $r\geq 1$ be fixed. Assume $N(d,r)/m\to \alpha$ as $d\to\infty$. Then the average test error is given by 
\begin{align}\label{eq:generalization formula}
    \mathrm{Err}(\mX; \lambda, \vF, \hat\vh)= 
   \mathrm{B}_r(\alpha; \lambda,  \vF, \hat\vh) + \mathrm{V}_r(\alpha; \lambda, \vF, \hat\vh) 
   + \Delta_d,   
\end{align}
where $\Delta_d\to 0$ in probability. 
\end{theorem}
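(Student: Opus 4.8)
The starting point is the standard bias--variance algebra. Writing $\bar y(\vx)=K(\vx,\mX)(K(\mX,\mX)+\lambda\mI_m)^{-1}f(\mX)$ for the noiseless predictor, $R=(K(\mX,\mX)+\lambda\mI_m)^{-1}$, and $Y_k=Y_k(\mX)$, the identities $\E\bm\epsilon=\0$ and $\E\hat\vf_k=\0$ (for $k\ge r$), together with the uncorrelatedness of $\{\hat\vf_k\}_{k\ge r}$, make all cross terms vanish, so $\mathrm{Err}(\mX;\lambda,\vF,\hat\vh)=\E_{\vf}\E_{\vx}|\bar y(\vx)-f(\vx)|^2+\sigma_\epsilon^2\,\E_{\vx}\|K(\vx,\mX)R\|^2$. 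Because the spherical harmonics are orthonormal in $L^2(\sS_{d-1})$, Parseval converts each $\vx$-expectation into a sum over degrees $k$ of squared Euclidean norms of degree-$k$ residual "Fourier vectors": the degree-$k$ bias residual is $\vg_k=\sigma_k^2 Y_k^T R\,f(\mX)-\hat\vf_k$ and the label-noise piece is $\sigma_\epsilon^2\sum_k\sigma_k^4\,\Tr[R^2 Y_kY_k^T]$; moreover the $\vf$-average of the bias part further splits, via the uncorrelatedness of $\hat\vf_r$ from $\hat\vf_{>r}$, into the low+critical contribution and the contribution of the random high-frequency components $\hat\vf_{>r}$. Thus everything reduces to controlling $R$ sandwiched against the harmonic matrices $Y_k$.

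Second, I would replace the empirical kernel by an equivalent matrix. Setting $\mu:=\lambda+\hat h_{>r}^2>0$, the concentration facts of Section~3 (high modes collapse to $\hat h_{>r}^2\mI_m$ up to a $\Delta_d$ operator-norm error, low modes are nearly isometric after scaling) give $K(\mX,\mX)+\lambda\mI_m=\widetilde K+\Delta_d$ with $\widetilde K:=\mu\mI_m+\sum_{k\le r}\sigma_k^2 Y_kY_k^T\succeq\mu\mI_m$, hence $R=\widetilde K^{-1}+\Delta_d$. Since $\widetilde K\succeq\mu\mI_m$ keeps $R$ and $\widetilde K^{-1}$ bounded by $1/\mu$ and the harmonic matrices enter only through the bounded combinations $\sigma_k^2 Y_k^T(\cdot)Y_k$ and $\sigma_k^4\,\Tr[(\cdot)Y_kY_k^T]$ (using $\sigma_k^2\|Y_k\|_{\op}^2=O(1)$ and $\sigma_k^4\,\Tr[Y_kY_k^T]=\sigma_k^4 m N(d,k)=O(1)$), this $\Delta_d$ error propagates to a $\Delta_d$ error in $\mathrm{Err}$; so it suffices to analyze $\widetilde K^{-1}$.

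The crux is to show that $\widetilde K^{-1}$ decouples across degrees. The low-frequency modes $k<r$ span a subspace of dimension $\sum_{k<r}N(d,k)=o(m)$ on which $\widetilde K$ has eigenvalues of order $m/N(d,k)\to\infty$, so $\widetilde K^{-1}$ is $\Delta_d$ there and $\sigma_k^2 Y_k^T\widetilde K^{-1}Y_k\to\mI_{N(d,k)}$ (an orthogonal projection onto a full-rank row space); hence $\vg_k=\Delta_d$ for $k<r$ (these modes are interpolated and contribute nothing to the bias). For $k>r$ the reverse holds: $Y_k^T\widetilde K^{-1}Y_k$ is negligible, so $\vg_k=-\hat\vf_k+\Delta_d$ and $\sum_{k>r}\|\hat\vf_k\|^2\to\hat F_{>r}^2$; moreover $\Cov\!\big(\sum_{k>r}Y_k\hat\vf_k\big)=\sum_{k>r}(\hat F_k^2/N(d,k))\,Y_kY_k^T\to\hat F_{>r}^2\,\mI_m$, so the unlearned high-frequency signal enters the degree-$r$ analysis exactly like an extra isotropic noise of variance $\hat F_{>r}^2$. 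On the orthogonal complement of the low-frequency span, $\widetilde K^{-1}$ equals $(\mu\mI_m+\sigma_r^2 Y_rY_r^T)^{-1}$ up to $\Delta_d$, i.e. the resolvent of an isotropic ridge regression in the degree-$r$ features $Y_r$ with normalized ridge $1/\xi_r=\mu\alpha/\hat h_r^2$. The delicate point throughout --- and the step I expect to be the real obstacle --- is that the cross-Gram blocks $Y_k^T Y_{k'}$ ($k\ne k'$) are negligible once sandwiched against $\widetilde K^{-1}$; naive operator-norm bounds on $Y_k^T Y_{k'}$ do not suffice (e.g.\ the critical/low and critical/high cross-blocks do not vanish in operator norm), so this must be done via the hypercontractivity-of-spherical-harmonics concentration used in the proof of Theorem~\ref{Theorem:mp}, applied now to bilinear forms $Y_k(\vx)^T A\,Y_{k'}(\vx)$, together with a short resolvent-expansion (or Schur-complement) argument that strips off the residual data dependence inside $\widetilde K^{-1}$.

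Finally, the degree-$r$ block is a pure isotropic ridge problem with coefficient prior $\E\hat\vf_r\hat\vf_r^T=(\hat F_r^2/N(d,r))\mI_{N(d,r)}$, effective noise variance $\sigma_\epsilon^2+\hat F_{>r}^2$, and empirical covariance $\widehat\Sigma:=\tfrac1m Y_r^TY_r$, whose spectrum converges weakly to $\mu_\alpha$ by Theorem~\ref{Theorem:mp}. Standard ridge algebra gives the degree-$r$ bias $\hat F_r^2\cdot\tfrac1{N(d,r)}\Tr[(\mI_{N(d,r)}+\xi_r\widehat\Sigma)^{-2}]\to\hat F_r^2\,\chi_B(\alpha,\xi_r)$ and the combined variance $(\sigma_\epsilon^2+\hat F_{>r}^2)\cdot\tfrac1m\Tr[\widehat\Sigma(\widehat\Sigma+\xi_r^{-1})^{-2}]\to(\sigma_\epsilon^2+\hat F_{>r}^2)\,\chi_V(\alpha,\xi_r)$; weak convergence of the spectrum is enough here because the integrands $(1+\xi_r t)^{-2}$ and $\alpha\xi_r^2 t(1+\xi_r t)^{-2}$ are bounded and continuous on $[0,\infty)$ (the second by $\alpha\xi_r/4$), and the $\Delta_d$ matrix perturbations from the earlier steps are harmless since the relevant resolvent is uniformly bounded (its argument lies in $[0,\infty)$, away from the pole at $t=-1/\xi_r$). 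Adding the low-frequency ($\Delta_d$) and high-frequency ($\hat F_{>r}^2$) direct contributions to the bias and collecting terms yields $\mathrm{Err}(\mX;\lambda,\vF,\hat\vh)=\mathrm B_r(\alpha;\lambda,\vF,\hat\vh)+\mathrm V_r(\alpha;\lambda,\vF,\hat\vh)+\Delta_d$, as claimed.
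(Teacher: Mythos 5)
Your strategy is, at the top level, the same as the paper's: split the kernel into low/critical/high frequencies, let the high modes concentrate into the effective ridge $\gamma=\lambda+\hat h_{>r}^2$ and the unlearned high-frequency signal act as effective noise $\hat F_{>r}^2$, reduce the critical block to an isotropic ridge problem in the degree-$r$ features, and pass to the limit with Theorem~\ref{Theorem:mp} (weak convergence plus bounded continuous integrands). However, two of the steps as you justify them do not go through. First, in your ``replace $K_\lambda$ by $\widetilde K$'' step you invoke $\sigma_k^2\|Y_k\|_{\op}^2=O(1)$ and $\sigma_k^4\,m\,N(d,k)=O(1)$; both are false for the low frequencies $k<r$, where $\sigma_k^2\|Y_k(\mX)\|_{\op}^2\approx \hat h_k^2\,m/N(d,k)\sim d^{\,r-k}\to\infty$ and likewise $\hat h_k^4\,m/N(d,k)\sim d^{\,r-k}$. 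So the naive ``bounded resolvent times bounded sandwich'' argument does not control the propagation of the operator-norm error $\Delta_d$ (whose rate of decay is not quantified) into the test error. The paper avoids this by never bounding those blocks in operator norm: it applies Sherman--Morrison--Woodbury to $\mK=\gamma(\overline\mZ_{\leq}\overline\mLambda_{\leq}\overline\mZ_{\leq}^T+\mI_m)$, proves the well-conditioning Lemma~\ref{lemma:inv} for $\overline\mD=\overline\mLambda_{\leq}^{-1}+\overline\mZ_{\leq}^T\overline\mZ_{\leq}$, and organizes the error so that every perturbation multiplies quantities (e.g.\ $\overline\mZ_{\leq}\overline\mD^{-2}\overline\mZ_{\leq}^T$) that are uniformly bounded; an equivalent fix on your side would be the PSD ordering $\sigma_k^2Y_k^T\widetilde K^{-1}Y_k\preceq \mI$, but some such argument is needed and is currently missing.

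Second, the step you yourself flag as the real obstacle --- decoupling the degree-$r$ block from the low frequencies despite the non-vanishing cross-Gram block --- is only gestured at, and the tool you propose (hypercontractivity concentration of bilinear forms $Y_k(\vx)^TA\,Y_{k'}(\vx)$) is not what resolves it in the paper, nor is it clear it would suffice: the cross block $\mB=\overline\mZ_{<}^T\overline\mZ_{r}$ is genuinely $O(1)$ in operator norm. The paper's resolution is structural rather than concentration-based: after the Schur complement on $\mW=\bigl[\begin{smallmatrix}\mI_{N_<}&\mB\\ \mB^T&\mC\end{smallmatrix}\bigr]$, all cross contributions appear inside normalized traces against $\mB^T\mB$, which has rank at most $N_{<r}=o(N(d,r))$; Von Neumann's trace inequality together with the uniform bounds from Lemma~\ref{lemma:inv} then gives $O(N_{<r}/N(d,r))\to 0$. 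So the decoupling is purchased by a low-rank counting argument, not by showing the cross terms are small as operators or by additional quadratic-form concentration. With these two repairs (a bounded-sandwich argument for the kernel replacement, and the Schur-complement-plus-low-rank-trace argument for the decoupling), your outline matches the paper's proof; as written, those are the concrete gaps.
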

\subsection{Interpretations}
We provide some high-level interpretations of the bias term $\mathrm{B_r}$ and variance term $\mathrm{V_r}$. 

\paragraph{The Bias.} From \myeqref{eq:bias}, 
the regressor learns all low-frequency modes ($k<r$) but none of the high-frequency modes ($k>r$) as the bias $\mathrm{B}_r$ contains no low-frequency modes (i.e. $k<r$) but all high-frequency modes $\hat F_{>r}^2$. Importantly, the regressor is progressively learning the critical-frequency mode $Y_r$ as the training size $m=\frac 1 \alpha N(d, r)$ increases, i.e. from $\alpha=\infty$ to $\alpha=0^+$ since $\chi_B(\alpha, \xi_r( \hat\vh,  \lambda, \alpha))\to 1$ if $\alpha\to \infty$ and $\chi_B(\alpha, \xi_r( \hat\vh,  \lambda, \alpha))\to 0$ if $\alpha\to 0^+$. See Fig.\ref{fig:BV-decompose} for the illustration. 

\begin{wrapfigure}{r}{0.45\textwidth}
    \centering
    \includegraphics[width=0.45\textwidth]{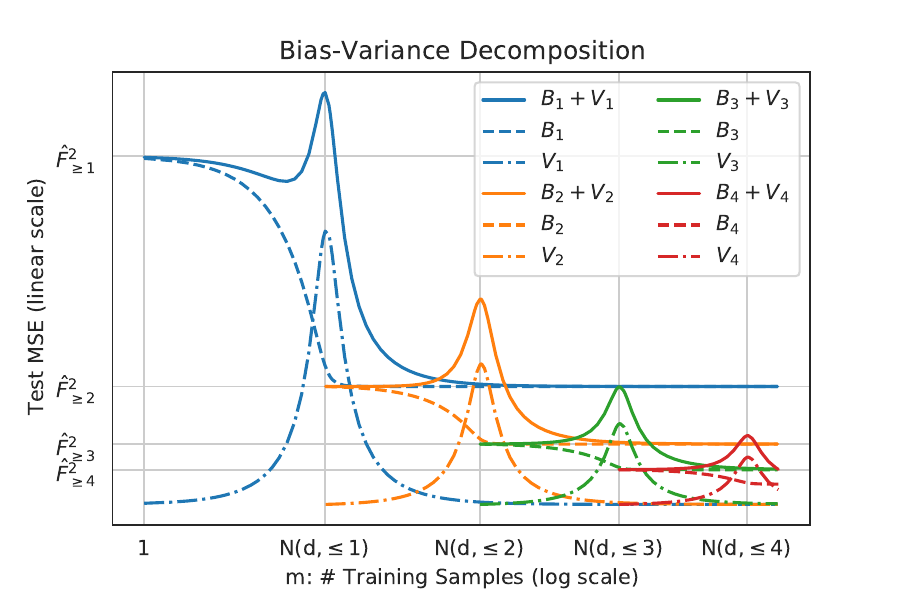}
    \caption{{\bf Multi-scale Bias-Variance Decomposition.}
    Theoretical predictions of the bias and variance from Eq.\ref{eq:bias} and Eq.\ref{eq:variance}. For each $r$, the variance is non-monotonic and has a peak at $N(d, \leq r) =\sum_{k\leq r}N(d,k) $.}
    \label{fig:BV-decompose}

\end{wrapfigure}

\paragraph{The Variance.} From \myeqref{eq:variance}, the variance term $\chi_{\mathrm{V}}$ treats all high-frequency modes $\hat F_{>r}^2$ the same as the noise term $\bm \epsilon$. Moreover, $\chi_{\mathrm{V}}\to 0$ as $\alpha\to 0$ or $\infty$ and is peaked at $\alpha=1$. The height of the peak depends on the effective regularization $\xi_r$ and it diverges to infinity with rate $\xi_r^{\frac 12}$ as $\xi_r\to\infty$. Indeed, when $\alpha=1$ and $\xi^{-1}/2\leq t\leq \xi^{-1}$, we have $t(1+ \xi t) ^{-2} \mu_{\alpha} (t) \propto \xi^{-1/2} $ which implies $\chi_V(1, \xi) \geq \xi^2 \int t(1+ \xi t) ^{-2} \mu_{\alpha} (t)  \1_{\xi^{-1}/2\leq t\leq \xi^{-1}}dt \propto \xi^{1/2}$.

Finally, \myeqref{eq:generalization formula} not only gives precise generalization formula (up to a vanishing term $\Delta_d$) when $m\approx N(d, r)\sim d^{r}$ but also when $d^{r-1+\delta} \lesssim m \lesssim d^{r-\delta}$ (i.e. when ``$\alpha= \infty$") and when $d^{r+\delta}\lesssim m \lesssim m^{r+1-\delta}$ (i.e. when ``$\alpha=0^+$") for any $\delta\in (0, 1/2)$.  Indeed, in the non-critical scaling regime $d^{r-1+\delta} \lesssim m \lesssim d^{r-\delta}$,  $\alpha=N(d, k)/m\to \infty$ as $d\to\infty$ and 
\begin{align}
    \mathrm{B_r}(\alpha=\infty) +\mathrm{V_r}(\alpha=\infty)= \hat F_{\geq r}^2 + 0 = \hat F_{\geq r}^2\,.
\end{align}
As such, the regressor learns all low-frequency modes but none of the critical- and high-frequency modes ($k\geq r$), which is consistent with the result in \citep{ghorbani2021linearized, mei2021generalization}. A similar argument also shows $\mathrm{B_r}(\alpha=0^+) +\mathrm{V_r}(\alpha=0^+) = \hat F_{> r}^2$, namely, the regressor also learns the $r$-frequency mode. This observation implies that we can glue together \myeqref{eq:generalization formula} for $r\geq 1$ and remove all duplicate terms to generate a sample-wise learning curve (LC):
\begin{align}
    \mathrm{LC}(m;  \lambda, \vf, \hat\vh) = 
    \sum_{r\geq 1} \left(
    \mathrm{B}_r\left(\frac{N(d, r)}{m}; \lambda,  \vf, \hat\vh\right) - (r-1) \hat  F_r^2\right) + \mathrm{V}_r\left(\frac{N(d, \leq r)}{m}; \lambda, \vf, \hat\vh\right) 
\end{align}
where $N(d, \leq r)=\sum_{k=1}^rN(d, k)$.
The ``$-(r-1) \hat  F_r^2$" term in the above equation is due to the fact that $\hat  F_r^2$ is over-counted $(r-1)$ many times (one in each $\mathrm{B}_k$ for $k=1, ..., (r-1)$.) It is worth mentioning that using $\alpha=N(d, \leq r)/{m}$ rather than
$\alpha={N(d, r)}/{m}$ in the variance $\mathrm{V}_r$ captures the finite-size correction more accurately. See \myeqref{eq:finite-width-effect-variance}.  
\begin{corollary}\label{cor:learning-curve}
If, for $1\leq r \in \sN$, (1) $N(d,r)/m \to\alpha$ for some $\alpha \in(0, \infty)$, or (2) $d^{r-1+\delta}\lesssim m \lesssim d^{r-\delta}$ for some $\delta\in(0, 1/2)$, then
\begin{align}\label{eq:learning-curves}
    \mathrm{Err}(\mX; \vf, \hat\vh, \lambda) = 
    \mathrm{LC}(m; \lambda, \vf, \hat\vh) + \Delta_d
\end{align}
where $\Delta_d \to 0$ in probability as $d\to\infty$. 
\end{corollary}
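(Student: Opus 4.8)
The plan is to derive the corollary directly from Theorem~\ref{thm:generalization-fc-kernel} and its boundary ($\alpha\to\infty$) instance: in either regime (1) or (2) I would show that the glued quantity $\mathrm{LC}(m;\lambda,\vf,\hat\vh)$ collapses, up to a $\Delta_d$, to exactly the expression that Theorem~\ref{thm:generalization-fc-kernel} already equates with $\mathrm{Err}(\mX;\lambda,\vF,\hat\vh)$. The two ingredients are: rearrange the series defining $\mathrm{LC}$ into a form with summands controlled uniformly in $d$, and then take the $d\to\infty$ limit of each summand, using the elementary scalings $N(d,k)/m=\Theta(d^{k-r})$ and $N(d,\le k)/m=N(d,k)/m+O(d^{-1})$ together with the boundary behaviour of $\chi_B,\chi_V$ recorded in Section~\ref{sec:generalization} ($\chi_B(\alpha,\xi_k(\alpha))\to1$ as $\alpha\to\infty$, $\to0$ as $\alpha\to0^+$, and $\chi_V\to0$ at both ends).

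For the rearrangement I would expand $\mathrm{B}_k=\chi_B(N(d,k)/m,\xi_k)\hat F_k^2+\hat F_{>k}^2$ and use the telescoping identity $\sum_{k=1}^{R}\hat F_{>k}^2=\sum_{k=1}^{R}(k-1)\hat F_k^2+R\,\hat F_{>R}^2$, so that the overcounting corrections $-(k-1)\hat F_k^2$ cancel the $\hat F_{>k}^2$ pieces and $\mathrm{LC}$ reduces to the convergent series $\sum_{k\ge1}\chi_B(N(d,k)/m,\xi_k)\,\hat F_k^2+\sum_{k\ge1}\chi_V(N(d,\le k)/m,\xi_k)\,(\hat F_{>k}^2+\sigma_\epsilon^2)$; in it the $k$-th bias summand is bounded by $\hat F_k^2$, and via $\chi_V(\alpha,\xi)\le\tfrac{1}{4}\alpha\xi$ with $\alpha\,\xi_k(\alpha)=\hat h_k^2/(\lambda+\hat h_{>k}^2)$ the $k$-th variance summand is bounded by a fixed multiple of $\hat h_k^2(\hat F_{>k}^2+\sigma_\epsilon^2)$. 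In regime (1) only $k=r$ has $N(d,k)/m\to\alpha\in(0,\infty)$ (and $N(d,\le r)/m\to\alpha$ too); every $k<r$ has $\chi_B,\chi_V\to0$ and every $k>r$ has $\chi_B\to1$, $\chi_V\to0$; a dominated-convergence argument against the uniform envelopes passes the limit through both series and yields $\chi_B(\alpha,\xi_r)\hat F_r^2+\hat F_{>r}^2+\chi_V(\alpha,\xi_r)(\hat F_{>r}^2+\sigma_\epsilon^2)=\mathrm{B}_r(\alpha)+\mathrm{V}_r(\alpha)$, which Theorem~\ref{thm:generalization-fc-kernel} equates with $\mathrm{Err}(\mX;\lambda,\vF,\hat\vh)+\Delta_d$.

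Regime (2) additionally needs the boundary form of the generalization formula: when $d^{r-1+\delta}\lesssim m\lesssim d^{r-\delta}$ every mode $k\le r-1$ has $N(d,k)/m\lesssim d^{-\delta}\to0$ (``tall'' regime, $Y_k(\mX)^TY_k(\mX)/m=\mI+\Delta_d$) and every mode $k\ge r$ has $N(d,k)/m\gtrsim d^{\delta}\to\infty$ (``fat'' regime, $Y_k(\mX)Y_k(\mX)^T/N(d,k)=\mI+\Delta_d$), so the bias-variance computation underlying Theorem~\ref{thm:generalization-fc-kernel}, now with no critical block and hence no Marchenko-Pastur input, collapses $\mathrm{Err}$ to $\hat F_{\ge r}^2+\Delta_d$; rerunning the limit computation of the previous paragraph here (where $k=r$ now also sits in the fat regime, so $\chi_B\to1$ for all $k\ge r$) gives $\mathrm{LC}(m)=\hat F_{\ge r}^2+\Delta_d$, matching. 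I expect the main obstacle to be making the infinite-series manipulations rigorous uniformly in $d$ --- chiefly, showing that the total contribution of the high frequencies $k>r$ to both series is $\Delta_d$, which relies on the factorial separation $N(d,k)\gg m$ for $k>r$, so that $N(d,k)/m\to\infty$ fast enough to beat the $k!$ growth of the envelopes (this is also what handles the $\lambda=0$ case, where the crude envelope above need not be summable on its own) --- with a secondary technical step being the clean derivation of the boundary instance of Theorem~\ref{thm:generalization-fc-kernel} from the Gram-matrix concentration estimates of \citet{mei2021generalization}.
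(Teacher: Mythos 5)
Your proposal is correct and follows essentially the same route as the paper, whose own justification of this corollary is the gluing argument of Section 4.1: combine Theorem~\ref{thm:generalization-fc-kernel} with the boundary limits $\chi_B\to 0,1$ and $\chi_V\to 0$ as $\alpha\to 0^+,\infty$, account for the $-(r-1)\hat F_r^2$ overcounting (your telescoping identity), and invoke the noncritical-regime collapse to $\hat F_{\geq r}^2$ (as in \citet{ghorbani2021linearized,mei2021generalization}, or by rerunning the appendix argument with an empty critical block). Your version is if anything more explicit than the paper's (dominated-convergence envelopes, uniform control of the $k>r$ tail), so there is nothing substantive to correct.
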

Recall that for each $r$, the variance term $\mathrm{V}_r$ could diverge to infinity as $\xi_r\to\infty$ at $\alpha=1$. Thus we might expect a peak in the learning curve for each $r$, yielding the multiple-descent phenomenon, as shown in Fig.\ref{fig:multiple}. However, such phenomena can disappear by making the heights of the peaks small via choosing $\xi_r$ small. We will discuss this point in the experimental section.

\subsection{Proof Sketch}
The proof of this theorem is quite involved; see Sec.\ref{sec:proof-of-generalization}.  For simplicity, we assume the observed labels are noiseless, i.e. $\sigma_\epsilon^2=0$. 
An ingredient is to understand the structure of the operator
\begin{align}
    \mT_{K}f(\vx) = K(\vx, \mX) (K(\mX, \mX)+ \lambda \mI_m)^{-1}f(\mX)\,.
\end{align}
The high-level strategy is as follows. We decompose the function into low-, critical- and high-frequency parts $f=f_{<r} + f_r + f_{>r}$. As such, the test error is roughly 
\begin{align*}
    \mathrm{Err}(\mX) \approx 
     \mathrm{Err}_{<r}(\mX)  + \mathrm{Err}_r(\mX)  +  \mathrm{Err}_{>r}(\mX) \,\, \mathrm{where}\,\,
     \mathrm{Err}_{r}(\mX) = 
 \E_\vf\E_\vx|\mT_{K}f_{r}(\vx) - f_{r}(\vx)|^2\,,
\end{align*}
and similarly for $\mathrm{Err}_{<r}(\mX)$ and $\mathrm{Err}_{>r}(\mX)$. The next step is to estimate each part separately. 

\paragraph{Low-frequencies.} Using the fact that the low-frequency parts of the kernel function $K$ is almost an isometric operator on the column space of $Y_{<r}(\mX)$, one can show that $\E_\vx|T_K(f_{<r})(\vx)-f_{<r}(\vx)|^2= \Delta_d\to 0$ in probability, {\it pointwisely}. 
\paragraph{Critical-frequency.} Up to a vanishing term,
one can remove all non-critical frequencies in the kernel function $K$ in $T_K$ in the sense of making the following substitutions 
\begin{align}
    K(\vx, \mX) \to \sigma_r^2 Y_r(\vx)^\top Y_r(\mX)^\top
\quad\mathrm{and}\quad 
K_r(\mX, \mX') \to  \sigma_r^2 Y_r(\mX) Y_r(\mX)^\top + \hat h^2_{>r} \mI_m\,.
\end{align}
Thus, with $\gamma= (\lambda + \hat h_{>r}^2)$, $ T_{K}f_r(\vx) -f_r(\vx)= Y_r(\vx)^\top \mM_r(\mX) \vf_r + \Delta_d$, where  
\begin{align}
   \mM_r(\mX) = \left(\mI_{N(d, r)} - \sigma_r^2 Y_r(\mX)^\top
    (\sigma_r^2 Y_r(\mX) Y_r(\mX)^\top + \gamma\mI_m)^{-1}   Y_r(\mX) \right)\,.
\end{align}
Taking expectation with respect to $\vx$ (using orthogonality of $Y_r(\vx)$) and then with respect to $\vf_r$,
\begin{align}
\E_{\vf_r}   \E_\vx    |Y_r(\vx)^\top \mM_r(\mX) \vf_r|^2 = 
\E_{\vf_r} | \mM_r(\mX) \vf_r|^2
= \hat F_r^2 \mathrm{Tr}(\mM_r^2) / N(d, r)\,.
\end{align}
Applying the Sherman–Morrison–Woodbury formula and then Theorem \ref{Theorem:mp},  
\begin{align*}
  \hat F_r^2  \mathrm{Tr}\left( Y_r(\mX)^\top Y_r(\mX) /m +
  m\sigma_r^2/\gamma \mI_{N(d,r)}\right)^{-2} / N(d,r) \to
    \hat F_r^2 \int \frac{\mu_{\alpha}(t)}{(t+\xi_r)^{2}} dt\,.
\end{align*}
{\bf High-frequencies.} The cross term $\E_\vf\E_\vx \mT_{K}f_{>r}(\vx)  f_{>r}(\vx)=\Delta_d $ and thus
\begin{align}
     \E_{\vf, \vx}|\mT_{K}f_{>r}(\vx) - f_{>r}(\vx)|^2 = 
     \E_{\vf, \vx}|\mT_{K}f_{>r}(\vx)|^2+
     \E_{\vf, \vx} |f_{>r}(\vx)|^2 +\Delta_d\,.
\end{align}
The second term is equal to $\hat F_>^2$. The calculation of the first term is similar to that of the critical frequency above (namely, we remove all high-/low-frequency components in $K$.)

\section{Convolutional Kernels}
\label{sec:cnn}
\subsection{One hidden layer}
Our analysis can be extended to analyzing NNGP kernel and NT kernel for one-layer convolution~\citep{novak2018bayesian, novak2019neural, yang2020tensor}. In this case, we assume the input space is $\bm \gX = \sS_{d_0-1}^p$, where $d_0$ is the dimension of a patch, $p$ is the number of patches, and $d=pd_0$ is the total dimensions of the inputs.  
The measure associated to $\bm\gX$ is the product of the uniform measure on $\sS_{d_0-1}$. 
We assume that both the filter size and stride of the convolution are equal to $d_0$. As such, after the first convolutional layer, the input is reduced to a vector of dimension $p$. We then apply a non-linearity and a dense layer to map this $p$-dimensional vector to a scalar. The NNGP and NT kernel have the following general form. Let $\vx=(\vx_i)_{i\in [p]}\in \bm\gX$, where $\vx_i\in\sS_{d_0-1}$ is the $i$-th patch
\begin{align*}
    K(\vx, \vx') = \frac 1 p \sum_{i\in [p]}h(\vx_i^\top\vx_i')
    = \frac 1 p \sum_{i\in [p]}\sum_{k\geq 1}\hat h_k^2 P_k(\vx_i^\top\vx_i') 
    = 
    \sum_{k\geq 1}\frac { \sigma_k^2} p  \sum_{i\in [p]}\sum_{l\in [N(d_0,k)]}Y_{kl}(\vx_i)Y_{kl}(\vx_i')\,.
\end{align*}
Denote $Y_{kl}^{(i)}(\vx) = Y_{kl}(\vx_i)$ and $Y_k(\vx) = [Y_{kl}^{(i)}(\vx)^\top]_{l\in [N(d_0,k)], i\in [p]}^\top$. Then $Y_k(\vx)$ is the degree $k$ spherical harmonics associated to this kernel, which span a space of dimension $pN(d_0,k)$.
\begin{theorem}\label{Theorem:mp-conv}
Let $r\in \sN^*$ and $\alpha \in(0, \infty)$ be fixed. If $pN(d_0, r)/m\to \alpha\in (0,\infty)$ as $d_0\to\infty$ and the rows of $\mX$ are sampled uniformly, iid from $\sS_{d_0-1}^p$, then the empirical spectral distribution of $\frac 1m Y_r(\mX)^\top Y_r(\mX)$ tends to the Marchenko-Pastur distribution $\mu_\alpha$  as $d_0\to\infty$. 
\end{theorem}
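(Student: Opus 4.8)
The plan is to deduce this from the fully-connected result, Theorem~\ref{Theorem:mp}, by exploiting the product structure $\bm\gX=\sS_{d_0-1}^p$ with $p$ held fixed as $d_0\to\infty$. Exactly as in the proof of Theorem~\ref{Theorem:mp}, I would apply \citet[Theorem~1.1]{bai2008large} with population covariance equal to the identity, so that it suffices to establish two facts about the iid rows $Y_r(\vx)\in\R^{pN(d_0,r)}$ of $Y_r(\mX)$: first, that $\E_\vx[Y_r(\vx)Y_r(\vx)^T]=\mI_{pN(d_0,r)}$, and second, the quadratic-form concentration
\begin{align}
(pN(d_0,r))^{-2}\,\sV\!\big(Y_r(\vx)^T\mA_d Y_r(\vx)-\Tr(\mA_d)\big)\longrightarrow 0\qquad(d_0\to\infty)
\end{align}
uniformly over symmetric $\mA_d$ with $\|\mA_d\|_\op\le 1$.

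The first fact is immediate. Indexing the coordinates of $Y_r(\vx)$ by pairs $(l,i)$ with $l\in[N(d_0,r)]$ and $i\in[p]$, the $((l,i),(l',i'))$ entry of the covariance is $\E_\vx Y_{rl}(\vx_i)Y_{rl'}(\vx_{i'})$; this equals $\delta_{ll'}$ when $i=i'$ by orthonormality of degree-$r$ spherical harmonics on $\sS_{d_0-1}$, and vanishes when $i\ne i'$ since the patches $\vx_i,\vx_{i'}$ are independent and every degree-$r$ harmonic is mean-zero. For the second fact, I would write $Y_r(\vx)=(Y_r^{(1)}(\vx),\dots,Y_r^{(p)}(\vx))$ with $Y_r^{(i)}(\vx)=[Y_{rl}(\vx_i)]_l$; because $\vx_1,\dots,\vx_p$ are independent, these $p$ blocks are independent isotropic random vectors on $\sS_{d_0-1}$. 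Decomposing $\mA_d$ into $N(d_0,r)\times N(d_0,r)$ blocks $\mA_d^{(i,i')}$ and splitting the centered quadratic form into a diagonal-block part $\sum_i\big(Y_r^{(i)T}\mA_d^{(i,i)}Y_r^{(i)}-\Tr\mA_d^{(i,i)}\big)$ and an off-diagonal-block part $\sum_{i\ne i'}Y_r^{(i)T}\mA_d^{(i,i')}Y_r^{(i')}$, one checks that the cross-covariance of the two parts is zero (every cross term retains an independent mean-zero factor), that the off-diagonal part has variance $\sum_{i\ne i'}\big(\|\mA_d^{(i,i')}\|_F^2+\Tr(\mA_d^{(i,i')}\mA_d^{(i',i)})\big)\le 2\|\mA_d\|_F^2\le 2pN(d_0,r)=o((pN(d_0,r))^2)$, and that the diagonal part, being a sum of $p$ independent centered terms, has variance $\sum_i\sV\big(Y_r^{(i)T}\mA_d^{(i,i)}Y_r^{(i)}\big)$, each summand of which is exactly the single-sphere quadratic form already controlled in the proof of Theorem~\ref{Theorem:mp} (hypercontractivity of spherical harmonics \citep{beckner1992sobolev} together with asymptotic decorrelation of distinct harmonics), hence $p\cdot o(N(d_0,r)^2)$. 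Adding up, the centered quadratic form has variance $o((pN(d_0,r))^2)$, which is the required estimate, and the Marchenko–Pastur conclusion follows.

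I do not anticipate a serious obstacle: the genuinely new (cross-patch) terms are annihilated for free by independence across patches, while the within-patch terms coincide with those already handled for Theorem~\ref{Theorem:mp}. The only two points needing care are confirming that the hypotheses of \citet[Theorem~1.1]{bai2008large} — chiefly the quadratic-form condition — are stable under concatenating $p$ independent blocks (the block computation sketched above), and treating general, non-diagonal $\mA_d$ in the within-patch term, which is already covered by the full argument behind Theorem~\ref{Theorem:mp} in Sec.~\ref{sec:proof-pm}.
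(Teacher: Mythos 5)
Your proposal is correct and follows essentially the same route the paper intends for this result: it reduces Theorem~\ref{Theorem:mp-conv} to the Bai--Zhou quadratic-form criterion of Lemma~\ref{Lemma:bai-zhou} and to the per-sphere concentration already established for Theorem~\ref{Theorem:mp} (the paper merely remarks the convolutional proof is ``similar''), and your patch-block decomposition, with independence across patches annihilating the cross-block and cross-covariance terms, supplies exactly the omitted bookkeeping. The only remark worth adding is that you assume $p$ fixed, whereas your own estimates --- off-diagonal variance $O(pN(d_0,r))$ and diagonal variance $p\cdot o(N(d_0,r)^2)$ against the required $o((pN(d_0,r))^2)$ --- actually tolerate $p$ growing with $d_0$, which is the level of generality the paper's appendix setup allows.
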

The assumptions on the label function are similar to that of dot-product kernel, e.g.\footnote{The Gaussian assumption is unessential. We use it here for convenience.} 
\begin{align}
    f(\vx) = \sum_{k\geq 1}\vf_k^\top Y_k(\vx),\quad \mathrm{with}\quad \vf_k \sim \gN\left(0, \frac {\hat F_k^2}{pN(d_0, k)} \mI_{pN(d_0, k)}\right)\,\, \mathrm{if}\, k\geq r \,,
\end{align}
otherwise $\vf_k$ is deterministic with $\|\vf_k\|_2^2=\hat F_k^2$. 
\begin{theorem}\label{thm:generalization-one-layer-conv}
Let $\alpha\in (0, \infty)$ and $r\geq 1$ be fixed. Assume $pN(d_0,r)/m\to \alpha$ as $d_0\to\infty$. Then the average test error is given by 
\begin{align}\label{eq:generalization formula-conv}
    \mathrm{Err}(\mX;  \lambda, \vF, \hat\vh)= 
   \mathrm{B}_r(\alpha; \lambda,  \vf, \hat\vh) + \mathrm{V}_r(\alpha; \lambda, \vf, \hat\vh)  + \Delta_{d_0},   
\end{align}
where $\Delta_{d_0}\to 0$ in probability as $d_0\to\infty$. 
\end{theorem}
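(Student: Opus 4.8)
The plan is to transcribe the proof of Theorem~\ref{thm:generalization-fc-kernel} to the convolutional harmonics $Y_k(\vx) = [Y_{kl}(\vx_i)]_{l\in[N(d_0,k)],\,i\in[p]}$, which span a space of dimension $pN(d_0,k)$, replacing $N(d,k)$ by $pN(d_0,k)$ everywhere, with $p$ held fixed as $d_0\to\infty$. As in the fully-connected case I would first treat the noiseless case $\sigma_\epsilon^2 = 0$; the label noise $\bm\epsilon$ enters the variance exactly as the high-frequency signal $\hat F_{>r}^2$ does and is reinstated at the end. Only the first two moments $\E\vf_r = \0$ and $\E\vf_r\vf_r^T = \frac{\hat F_r^2}{pN(d_0,r)}\mI$ of the critical-frequency coefficients will be used, so the Gaussianity assumed in the theorem is inessential, exactly as its footnote asserts.

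The key input is the three-scale structure of $K(\mX,\mX) = \sum_k \frac{\sigma_k^2}{p} Y_k(\mX)Y_k(\mX)^T$. Writing $\tilde Y_k^{(i)}(\mX)$ for the $m\times N(d_0,k)$ single-patch harmonic matrix built from patch $i$ of the $m$ training inputs, we have $Y_k(\mX)Y_k(\mX)^T = \sum_{i\in[p]}\tilde Y_k^{(i)}(\mX)\tilde Y_k^{(i)}(\mX)^T$, while $Y_k(\mX)^TY_k(\mX)$ is the $p\times p$ block matrix with $(i,i')$ block $\tilde Y_k^{(i)}(\mX)^T\tilde Y_k^{(i')}(\mX)$. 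Since $pN(d_0,k)/m = \Theta(d_0^{k-r})$, I would show: (i) for $k>r$, each $\tilde Y_k^{(i)}(\mX)\tilde Y_k^{(i)}(\mX)^T/N(d_0,k) = \mI_m + \Delta_{d_0}$ by the single-sphere concentration of~\citet{mei2021generalization} (as $N(d_0,k)/m\to\infty$), so averaging the finitely many patches gives $Y_k(\mX)Y_k(\mX)^T/(pN(d_0,k)) = \mI_m + \Delta_{d_0}$ and hence $\sum_{k>r}\frac{\sigma_k^2}{p}Y_k(\mX)Y_k(\mX)^T = \hat h_{>r}^2\mI_m + \Delta_{d_0}$; (ii) for $k<r$, the diagonal blocks satisfy $\tilde Y_k^{(i)}(\mX)^T\tilde Y_k^{(i)}(\mX)/m = \mI_{N(d_0,k)} + \Delta_{d_0}$ (single-sphere, $N(d_0,k)/m\to 0$), while the off-diagonal blocks are averages over the $m$ training points of independent mean-zero rank-one cross-products between the independent patches $i$ and $i'$ and therefore have operator norm $\Delta_{d_0}$, so $Y_k(\mX)^TY_k(\mX)/m = \mI_{pN(d_0,k)} + \Delta_{d_0}$; (iii) for $k=r$, $\frac1m Y_r(\mX)^TY_r(\mX)$ has limiting spectral distribution $\mu_\alpha$ by Theorem~\ref{Theorem:mp-conv}. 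I would also record $\E_\vx Y_r(\vx)Y_r(\vx)^T = \mI_{pN(d_0,r)}$: same-patch entries vanish off the diagonal by orthonormality of single-sphere harmonics, cross-patch entries by independence of the patches.

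With (i)--(iii) in hand I would decompose $f = f_{<r} + f_r + f_{>r}$ and reduce $\mathrm{Err}(\mX)$ to $\mathrm{Err}_{<r}(\mX) + \mathrm{Err}_r(\mX) + \mathrm{Err}_{>r}(\mX) + \Delta_{d_0}$, the cross terms vanishing by orthogonality of the $Y_k$ together with (i)--(iii). Fact (ii) makes $K(\mX,\mX)$ an approximate isometry on the column space of $Y_{<r}(\mX)$, so $\mathrm{Err}_{<r}(\mX) = \Delta_{d_0}$ pointwise in the draw of $\mX$. For $\mathrm{Err}_r$, facts (i)--(iii) allow the substitutions $K(\vx,\mX)\to\frac{\sigma_r^2}{p}Y_r(\vx)^TY_r(\mX)^T$ and $K(\mX,\mX)+\lambda\mI_m \to \frac{\sigma_r^2}{p}Y_r(\mX)Y_r(\mX)^T + \gamma\mI_m$ with $\gamma = \lambda + \hat h_{>r}^2$, so that $\mT_K f_r(\vx) - f_r(\vx) = Y_r(\vx)^T\mM_r(\mX)\vf_r + \Delta_{d_0}$ with $\mM_r(\mX) = \mI_{pN(d_0,r)} - \frac{\sigma_r^2}{p}Y_r(\mX)^T(\frac{\sigma_r^2}{p}Y_r(\mX)Y_r(\mX)^T + \gamma\mI_m)^{-1}Y_r(\mX)$. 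Using $\E_\vx Y_r(\vx)Y_r(\vx)^T = \mI$ and the second moment of $\vf_r$ gives $\E_{\vf_r}\E_\vx|Y_r(\vx)^T\mM_r\vf_r|^2 = \frac{\hat F_r^2}{pN(d_0,r)}\mathrm{Tr}(\mM_r^2)$; the Sherman--Morrison--Woodbury identity rewrites $\mM_r = (\mI + \frac{\sigma_r^2 m}{p\gamma}\cdot\frac1m Y_r(\mX)^TY_r(\mX))^{-1}$ with $\frac{\sigma_r^2 m}{p\gamma} = \frac{\hat h_r^2 m}{pN(d_0,r)\gamma}\to \xi_r(\hat\vh,\lambda,\alpha)$, and Theorem~\ref{Theorem:mp-conv} yields $\mathrm{Err}_r(\mX)\to \hat F_r^2\int (1+\xi_r t)^{-2}\mu_\alpha(t)\,dt = \chi_B(\alpha,\xi_r)\hat F_r^2$. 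For $\mathrm{Err}_{>r}$, the cross term $\E_{\vf,\vx}\mT_K f_{>r}(\vx)f_{>r}(\vx) = \Delta_{d_0}$ (each frequency $k>r$ contributes a term of order $m/N(d_0,k)\to 0$), leaving $\E|\mT_K f_{>r}|^2 + \hat F_{>r}^2 + \Delta_{d_0}$; by fact (i) the vector $f_{>r}(\mX)$ has isotropic covariance $\hat F_{>r}^2\mI_m$ to leading order, so $\E|\mT_K f_{>r}|^2$ is computed by the same substitution/SMW/Theorem~\ref{Theorem:mp-conv} route as $\mathrm{Err}_r$ and equals $\chi_V(\alpha,\xi_r)\hat F_{>r}^2 + \Delta_{d_0}$. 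Summing the three pieces and reinstating $\sigma_\epsilon^2$, which enters the last computation identically to $\hat F_{>r}^2$, gives $\mathrm{Err}(\mX) = \chi_B(\alpha,\xi_r)\hat F_r^2 + \hat F_{>r}^2 + \chi_V(\alpha,\xi_r)(\hat F_{>r}^2 + \sigma_\epsilon^2) + \Delta_{d_0} = \mathrm{B}_r(\alpha) + \mathrm{V}_r(\alpha) + \Delta_{d_0}$.

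The main obstacle will be fact (ii) — specifically, the \emph{operator}-norm (not merely entrywise) control of the cross-patch Gram blocks $\tilde Y_k^{(i)}(\mX)^T\tilde Y_k^{(i')}(\mX)/m$ for $i\ne i'$, which are sums of $m$ independent mean-zero rank-one terms and require a matrix-concentration argument exploiting $N(d_0,k)/m\to 0$ — together with stitching the single-sphere estimates of~\citet{mei2021generalization} and Theorem~\ref{Theorem:mp-conv} across the block structure of $Y_k(\mX)$. The saving grace is that $p$ is a fixed constant, so only finitely many patch blocks occur: the diagonal blocks are handled verbatim by the fully-connected estimates, and once (i)--(iii) are in place the remainder of the argument is a line-by-line transcription of the proof of Theorem~\ref{thm:generalization-fc-kernel}.
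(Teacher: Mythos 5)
Your proposal is correct and, in substance, is the proof the paper gives: the appendix does not treat Theorem~\ref{thm:generalization-one-layer-conv} separately but states abstract kernel assumptions (spectral gap, hypercontractivity, quadratic-form concentration, addition theorem), checks them for the one-hidden-layer convolutional kernel, and then runs the rigorous fully-connected argument (kernel-structure corollary, Sherman--Morrison--Woodbury reductions, Marchenko--Pastur limits) once for the whole family. The difference is how the patch structure is digested. You split $Y_k(\mX)$ into per-patch blocks, use the single-sphere results of \citet{mei2021generalization} on the diagonal blocks, and are left with your self-identified obstacle, operator-norm control of the cross-patch Gram blocks. The paper never splits: the concatenated feature vector $Y_k(\vx)\in\R^{pN(d_0,k)}$ is isotropic (your own observation), satisfies $\|Y_k(\vx)\|_2^2=pN(d_0,k)$, and is hypercontractive because hypercontractivity tensorizes over independent patches, so Claim~\ref{claim-tail-estimate}, the Vershynin bound, and the Bai--Zhou criterion behind Theorems~\ref{Theorem:mp-conv} and~\ref{Theorem:mp-general} apply verbatim to the full $m\times pN(d_0,k)$ matrix and your cross-block lemma is subsumed rather than needed (it is provable by the same tools if you keep the block route, so this is extra work, not a gap). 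One scope caveat: you fix $p$, while the paper's framework also allows $p$ to grow polynomially in $d$; your step (i), which needs $N(d_0,k)\gg m$ patch-by-patch, would fail if $p$ outgrows $d_0$, whereas the aggregate ratio $pN(d_0,k)/m\to\infty$ used by the paper still suffices. Finally, what must be transcribed is the appendix argument (Reductions I--II and the eigenvalue lower bound on $\overline\mD$), not only the main-text sketch; under your facts (i)--(iii) those steps carry over unchanged.
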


\begin{corollary}\label{cor:learning-curve-conv}
If, for $1\leq r \in \sN$, (1) $pN(d_0,r)/m \to\alpha$ for some $\alpha \in(0, \infty)$, or (2) $pd_0^{r-1+\delta}\lesssim m \lesssim pd_0^{r-\delta}$ for some $\delta\in(0, 1/2)$, then
\begin{align}
    \mathrm{Err}(\mX; \vf, \hat\vh, \lambda) = 
    \mathrm{LC}(m; \lambda, \vf, \hat\vh) + \Delta_{d_0}
\end{align}
where $\Delta_{d_0} \to 0$ in probability as $d_0\to\infty$. 
\end{corollary}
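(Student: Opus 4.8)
The plan is to follow the proof of Corollary~\ref{cor:learning-curve} almost verbatim, replacing Theorem~\ref{thm:generalization-fc-kernel} by Theorem~\ref{thm:generalization-one-layer-conv} and $N(d,k)$ by $pN(d_0,k)$ throughout. In case~(1), where $pN(d_0,r)/m\to\alpha\in(0,\infty)$, Theorem~\ref{thm:generalization-one-layer-conv} already gives $\mathrm{Err}(\mX)=\mathrm{B}_r(\alpha)+\mathrm{V}_r(\alpha)+\Delta_{d_0}$, so only a bookkeeping step remains: evaluate the full sum $\mathrm{LC}(m;\lambda,\vf,\hat\vh)$ at such an $m$ and verify that it reproduces $\mathrm{B}_r(\alpha)+\mathrm{V}_r(\alpha)$ up to $\Delta_{d_0}$. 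At such $m$ every block $k\neq r$ is non-critical ($pN(d_0,k)/m\to0$ for $k<r$, and $\to\infty$ for $k>r$), so by the limiting values $\chi_B(\alpha,\xi_k)\to0,\ \chi_V\to0$ as $\alpha\to0^+$ and $\chi_B(\alpha,\xi_k)\to1,\ \chi_V\to0,\ \xi_k\to0$ as $\alpha\to\infty$, each such block contributes only a ``trivial'' piece; these telescope against the over-count corrections $-(k-1)\hat F_k^2$ exactly as in the fully-connected case, leaving just the critical block's contribution. The only cosmetic difference from Corollary~\ref{cor:learning-curve} is that the argument of $\mathrm{V}_r$ is $pN(d_0,\le r)/m=\alpha(1+O(1/d_0))$ rather than $\alpha$, which perturbs $\mathrm{V}_r$ only by $\Delta_{d_0}$ while more faithfully tracking the finite-size peak location at $pN(d_0,\le r)$, cf.\ \myeqref{eq:finite-width-effect-variance}.

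For case~(2), the non-critical window $pd_0^{r-1+\delta}\lesssim m\lesssim pd_0^{r-\delta}$, no block is critical: $pN(d_0,k)/m\to0$ for $k\le r-1$ and $pN(d_0,k)/m\to\infty$ for $k\ge r$. Here I would re-run the proof sketch of Theorem~\ref{thm:generalization-fc-kernel} directly in these degenerate asymptotics, splitting $f=f_{<r}+f_{\ge r}$ relative to $m$. For $k<r$, $Y_k(\mX)$ is a tall $m\times pN(d_0,k)$ matrix and, by the convolutional analogue of the concentration estimate of \citet{mei2021generalization}, $Y_k(\mX)^T Y_k(\mX)/m=\mI_{pN(d_0,k)}+\Delta_{d_0}$; hence the low-frequency part of $K$ is a near-isometry on the column space of $Y_{<r}(\mX)$ and $\E_\vx|\mT_K f_{<r}(\vx)-f_{<r}(\vx)|^2=\Delta_{d_0}$. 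For $k\ge r$, $Y_k(\mX)$ is fat with $Y_k(\mX)Y_k(\mX)^T/(pN(d_0,k))=\mI_m+\Delta_{d_0}$, so these modes act as a pure additive regularizer and, by the ``High-frequencies'' computation, are not learned at all, leaving $\E_{\vf,\vx}|\mT_K f_{\ge r}(\vx)-f_{\ge r}(\vx)|^2=\hat F_{\ge r}^2+\Delta_{d_0}$. Thus $\mathrm{Err}(\mX)=\hat F_{\ge r}^2+\Delta_{d_0}$, which equals both $\mathrm{B}_r(\infty)+\mathrm{V}_r(\infty)$ and, since $\hat F_{\ge r}^2=\hat F_{>r-1}^2$, the $\alpha\to0^+$ limit of the adjacent lower block; the same telescoping as above then gives $\mathrm{LC}(m;\cdot)=\hat F_{\ge r}^2+\Delta_{d_0}$ on this window. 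This also supplies the continuity that glues the two cases at the scales $m\asymp pd_0^r$, namely $\lim_{\alpha\to\infty}(\mathrm{B}_r(\alpha)+\mathrm{V}_r(\alpha))=\hat F_{\ge r}^2=\lim_{\alpha\to0^+}(\mathrm{B}_{r-1}(\alpha)+\mathrm{V}_{r-1}(\alpha))$.

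The routine parts here --- the bias/variance bookkeeping and the continuity checks --- are identical to the fully-connected case; the genuinely new work, and the step I expect to be the main obstacle, is the non-critical-regime estimate of case~(2). It requires the operator-norm concentration bounds $Y_k(\mX)^T Y_k(\mX)/m=\mI+\Delta_{d_0}$ (tall, $k<r$) and $Y_k(\mX)Y_k(\mX)^T/(pN(d_0,k))=\mI+\Delta_{d_0}$ (fat, $k\ge r$) to hold \emph{uniformly} in $m$ over the entire polynomial window, not merely at a single scale $m\asymp pd_0^r$ as in Theorem~\ref{Theorem:mp-conv}. Establishing this amounts to controlling, uniformly in $m$, the off-diagonal entries of a $pN(d_0,k)\times pN(d_0,k)$ (resp.\ $m\times m$) random Gram matrix: the cross-patch blocks factor over independent draws from $\sS_{d_0-1}$ and are handled by single-patch fourth-moment/hypercontractivity bounds~\citep{beckner1992sobolev}, while the within-patch blocks reuse the asymptotic-uncorrelatedness estimate from the proof of Theorem~\ref{Theorem:mp}; assembling these into a uniform operator-norm bound with the correct $o(1)$ rate is the technical heart of the argument.
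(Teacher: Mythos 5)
Your proposal is correct and follows essentially the same route as the paper: the corollary is Theorem~\ref{thm:generalization-one-layer-conv} combined with exactly the same telescoping bookkeeping as Corollary~\ref{cor:learning-curve} (your check that the $-(k-1)\hat F_k^2$ corrections cancel the over-counted blocks, giving $\mathrm{B}_r(\alpha)+\mathrm{V}_r(\alpha)$ at critical $m$ and $\hat F_{\geq r}^2$ on the non-critical window, is the whole content of that step), with the non-critical window handled by operator-norm concentration of the tall ($k<r$) and fat ($k\geq r$) Gram matrices of the patchwise harmonics. One clarification on what you call the main obstacle: no uniformity in $m$ is needed, because the corollary is a statement about a single sequence $m=m(d_0)$ satisfying (1) or (2), and $\Delta_{d_0}\to 0$ is asserted only along that sequence; the along-the-sequence bounds you write down are all that is required. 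Moreover, those bounds are not genuinely new work relative to the paper: they are supplied by the general framework in the appendix, namely \myeqref{eq:low-concentration} (imported from Theorem~6 of \citet{mei2021generalization} via the hypercontractivity assumption, which holds for products of spherical harmonics over independent patches) and Claim~\ref{claim-tail-estimate}, whose Vershynin-type estimate $\E\|\Delta_{kn}^{(d)}\|_{\op}\lesssim\sqrt{m^{1+1/q}\log m/N_{kn}^{(d)}}$ applies verbatim to the mode $k=r$ in the window $m\lesssim p d_0^{r-\delta}$. So case (2) reduces to the bookkeeping you have already carried out rather than to a new random-matrix estimate.
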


\subsection{Deep Convolutional Kernels}\label{sec:deep-conv-generalization}
The eigenstructure of general CNN kernels are much more complicated as they depend on both the frequencies (i.e. the order of the polynomials) and the topologies of the networks \citep{xiao2021eigenspace}. To rigorously describe the eigenstructure, a heavy dose of notation must be introduced, which is beyond the scope of the paper. Nevertheless, the approach developed here is readily extended to cover general CNN kernels. We briefly describe the main ideas.

Following \citep{xiao2021eigenspace}, 
we assume the input space is still $\bm \gX = \sS_{d_0-1}^p$, where $p$ is the number of patches. For simplicity, we assume the network has $L$ convolutional layers and in each layer, the filter size and the stride are all equal to $d_0$. Thus the spatial dimension of the input is reduced to 1 after $L$ convolutional layers. We then add a non-linearality and a dense layer to generate the logits. The kernel has the following form 
\begin{align}
    K(\vx, \vx') = \sum_{\vk\in \sN^{p}}\sum_{\vl\in \prod_{i\in [p]} [N(d_0, k_i)]} \sigma^2_{\vk, \vl} Y_{\vk,\vl}(\vx)Y_{\vk,\vl}(\vx'),\, \mathrm{where}  \quad 
    Y_{\vk,\vl}(\vx) =  \prod_{i\in [p]} Y_{k_i,l_i}(\vx_i)\,.
\end{align}
Unlike dot-product kernels in which $\vk$ is a scalar and the eigenvalues depend only on $|\vk|$ (i.e. the frequencies), $\sigma^2_{\vk, \vl}$ depends on both $|\vk|$ and the spatial structure of the vector $\vk$ in a rather complicated manner. Nevertheless, as $d_0\to\infty$, $\sigma^2_{\vk, \vl}\sim d_0^{-j_{\vk}} =d^{-j_\vk/L}$ for some $L\leq j_\vk\in\sN$. We can then categorize the eigenvectors according to the decay order of $\sigma^2_{\vk, \vl}$. Unlike the case of dot-product kernels or the one-hidden layer CNN kernels, in which eigenvectors with same-order eigenvalues are in the same eigenspace (i.e. the eigenvalues are the same), multiple-layer CNN kernels can have  {\it multiple} eigenspaces with the same-order eigenvalues. Although this results in extra challenges (see below), our overall approach carries over. Consider the critical scaling regime $m\sim d^r$, for $r=j/L$ for some $L\leq j\in\sN$. Likewise, we can decompose the kernel into low-, critical- and high-frequency parts according to $j_\vk<r$, $j_\vk=r$ and $j_\vk>r$, resp. Following similar assumptions on the labels and eigenvalues, the bias and the variance can be essentially reduced to computing 
\begin{align}\label{eq:conv-bias}
\chi_B =     &\frac 1 {N_r}\Tr \mR_r^2  (\mR_r +  Y_{r }(\mX)^\top Y_{r }(\mX)/m )^{-2}  \\
\label{eq:conv-variance}
\chi_V =    &\frac {N_{\leq r}}{m}\frac 1 {N_{\leq r}}\Tr   (\mR_{\leq r} +  Y_{\leq r }(\mX)^\top   Y_{\leq r }(\mX)/m )^{-2}   Y_{\leq r}(\mX)^\top   Y_{\leq r}(\mX)/m
\end{align}
where $\mR_r$ and $\mR_{\leq r}$ are diagonal matrices whose entries are determined by the eigenvalues of the critical-frequency modes. In the dot-product kernels or one-hidden layer CNN kernels setting, $\mR_r$/$\mR_{\leq r}$ is a scaled identity matrix and simple, closed-form expressions for the above traces straightforwardly follow from the Marcenko-Pastur distribution. However, for a general diagonal matrix $\mR_{r}$ with bounded limiting spectra, $\mR_{r}$ does not commute with $Y_r^\top(\mX)Y_r(\mX)$, and a more detailed random matrix analysis is needed. See the supplementary material for more details.

\section{Experiments}
\label{sec:experimental-setup}
We provide experiments to show that our learning curves (\myeqref{eq:learning-curves}) accurately capture empirical sample-wise learning curve even when the ambient dimensions remains small. Even though our theoretical results require averaging the test error over random labels (aka, mean test error), our experimental results suggest this is unnecessary, i.e. the learning curve \myeqref{eq:learning-curves} can capture the test error accurately for any given draw of label function.  
\paragraph{Experimental setup.} We generate a polynomial kernel function $h(t)=\sum_{k=1}^7 \hat h_k^2 P_k(t)$, where $P_k$ is the degree-$k$ Legendre polynomial in $d$ dimensions. The kernel function can be efficiently computed via $K(\vx, \vx') = h(\vx^\top\vx')$. We choose the label function to be $f(\vx) = \sum_{k=1}^7 \hat F_k y_k(\vx)$, where $y_k(\vx) = \sum_{j\in [d]}w_{k, j}\prod_{i=j}^{j+k-1}{x_i}$, and the coefficients $w_{k, j}$ are randomly sampled from a Gaussian and then normalized so that $\E_\vx|y_k(\vx)|^2=1$ for each $k$. Therefore $\E_\vx |f(\vx)|^2=\sum_{k=1}^7 \hat F_k^2$. For simplicity, we also set $\sigma_\epsilon^2 =0$ (i.e. noiseless) and $\lambda=0$ (i.e. ridgeless). Note that when $m \lesssim d^r$, the regressor still contains ``effective noise" $\hat F_{>r}^2$ from un-learnable high-frequency modes  and ``effective regularization" $\hat h_{>r}^2$. Finally, in our experiments, we choose $\hat F_k^2 = k^{-2}$ and $\hat h_k^2 = \mathrm{Gap}^{-(k-1)}$, where we will vary the value of the spectral gap: $\mathrm{Gap} = \hat h_k^2 /\hat h_{k+1}^2$. 
Under this setup, the predicted learning curve $\mathrm{LC}(m) = \mathrm{LC}(m; \mathrm{Gap})$ depends only on the spectral gap of the kernel. 

To simulate higher-order scaling ($r\geq 3$), 
the dimension $d$ has to be very small as we need to invert a sequence of matrices of size ranging from $m=1$ to $m \propto d^r/r!$.
Due to the constraints in compute and memory, the largest $m$ we can have is typically $m_{\mathrm{max}}\approx 25,000$ for one single GPU and $d$ in our experiments is typically around $d=24$. As such we are in a regime with strong finite-size corrections. Finally, all experiments are run in a single A-100 using Google Cloud Colab Notebooks.

\begin{figure}[t]
     \centering
     \begin{subfigure}[b]{0.45\textwidth}
         \centering
    \includegraphics[width=1.\textwidth]{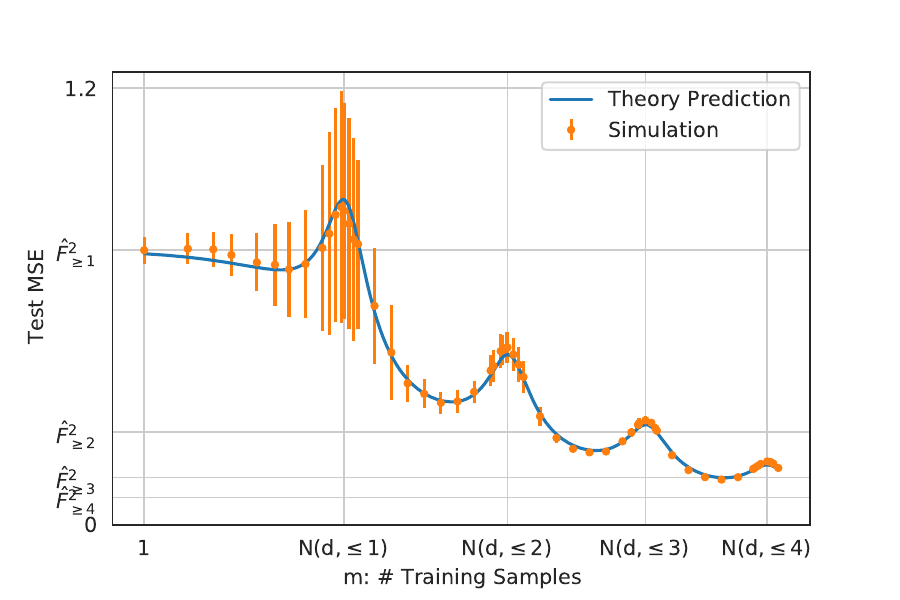}
         \caption{Dot Product Kernel}
         \label{fig:y equals x}
     \end{subfigure}
     \quad
     \begin{subfigure}[b]{0.45\textwidth}
         \centering
        \includegraphics[width=1.\textwidth]{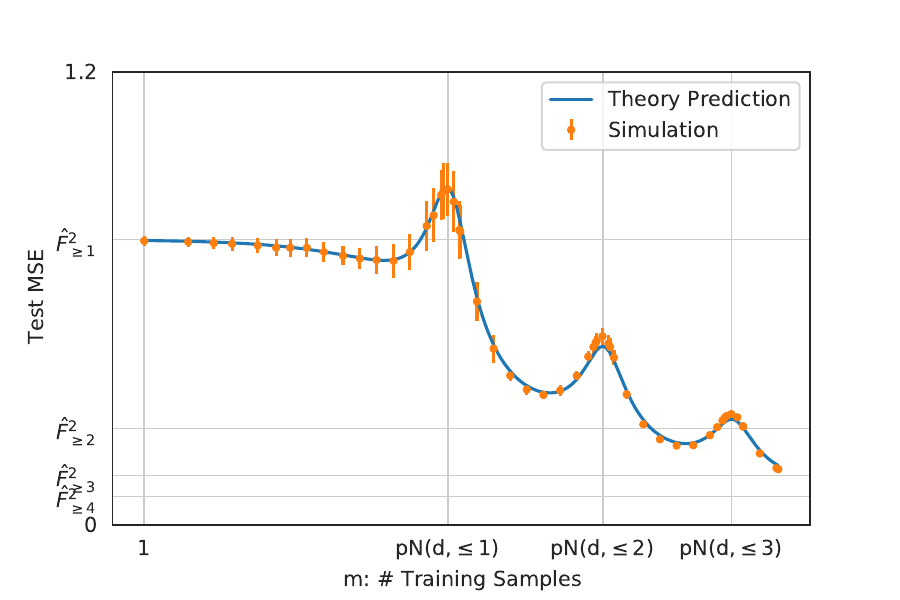}
         \caption{One-layer CNN Kernel}
         \label{fig:one-layer-cnn}
     \end{subfigure}
     \hfill
         \caption{{\bf Simulation vs Prediction.}
         We generate the learning curves obtain from {\bf\color{plot_orange} kernel regression} by densely varying $m$ from 1 to $24000$. For each $m$, we average the MSE over 20 runs. The {\bf\color{plot_blue}closed-form prediction} from \myeqref{eq:generalization formula} captures the simulations surprising well even for small $d$. Left: dot product kernel with $d=24$. Right: one-hidden layer CNN kernel with $d_0=20$ and $p=6$. The spectral gap is $\mathrm{Gap}=32$ in both plots.}
    \label{fig: fc-cnn-prediction}

\end{figure}

\paragraph{Learning Curves Accurately Capture Simulations.} In Fig.~\ref{fig: fc-cnn-prediction}, we generate the empirical sample-wise learning curve by applying kernel regression \myeqref{eq:kernel regression error} with training set $\mX$. We vary the training set size $m$ densely in $[1, m_{\mathrm{max}}]$ and for each $m$ we sample 20 independent $\mX$ to get the {\bf\color{plot_orange}errorbar plot} for the test error. The closed-form {\bf\color{plot_blue}learning curve} is obtained from \myeqref{eq:learning-curves} and the calculation is done in Sec.\ref{sec:computing integrals}. Even in the low-dimensional regime with $d=24$ for dot-product kernel ($d_0=20$ and $p=6$ for one-hidden layer CNN kernel), the predicted learning curve captures the empirical learning curve surprisingly well, which has a highly non-trivial multiple-descent behavior. It is worth mentioning that, from the simulation, the deviation of the test error from its mean is relatively large when $m$ is small but vanishes quickly as $m$ becomes larger. This suggests Theorem \ref{thm:generalization-fc-kernel} and Corollary \ref{cor:learning-curve} should hold in a pointwise fashion, i.e. without averaging the test error over random labels. 

\paragraph{Finite-size correction vanishes as $d\to\infty$.} 
Our theoretical results assume that the input dimension $d$ is sufficiently large and these results are exact when $d=\infty$. To visualize the finite-size correction, we plot the dependence of the correction (between simulations and predictions) on the the input dimension $d$.   Fig.~\ref{fig:finit-size-discrepancy} (a) shows that the means of simulations are converging to the theoretical prediction. Fig.~\ref{fig:finit-size-discrepancy} (b) shows that the standard deviations are converging zero.

\begin{figure}
    \centering
    \begin{subfigure}[b]{0.45\textwidth}
    \includegraphics[width=\textwidth]{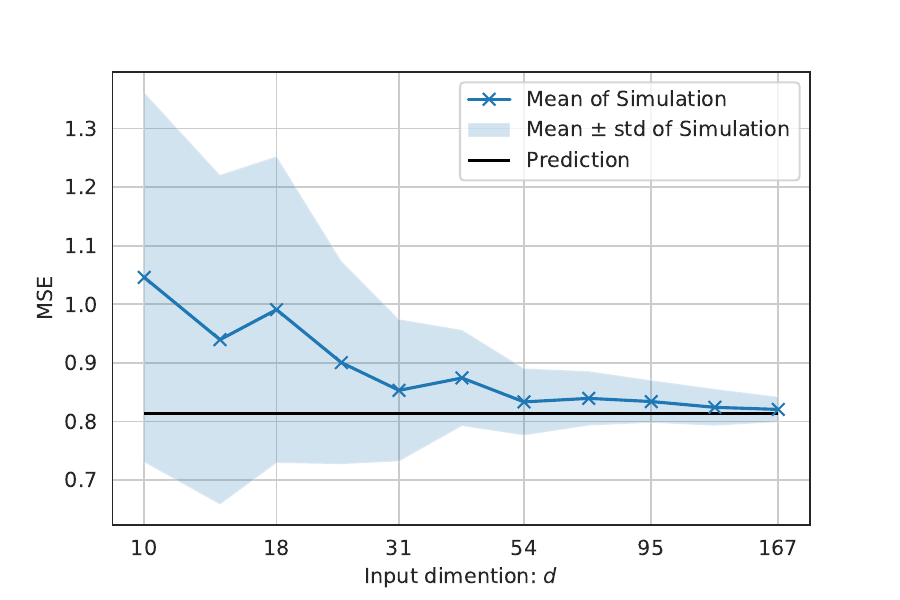}
    \caption{Mean of Simulations. }
    \end{subfigure}
    \begin{subfigure}[b]{0.45\textwidth}
    \includegraphics[width=\textwidth]{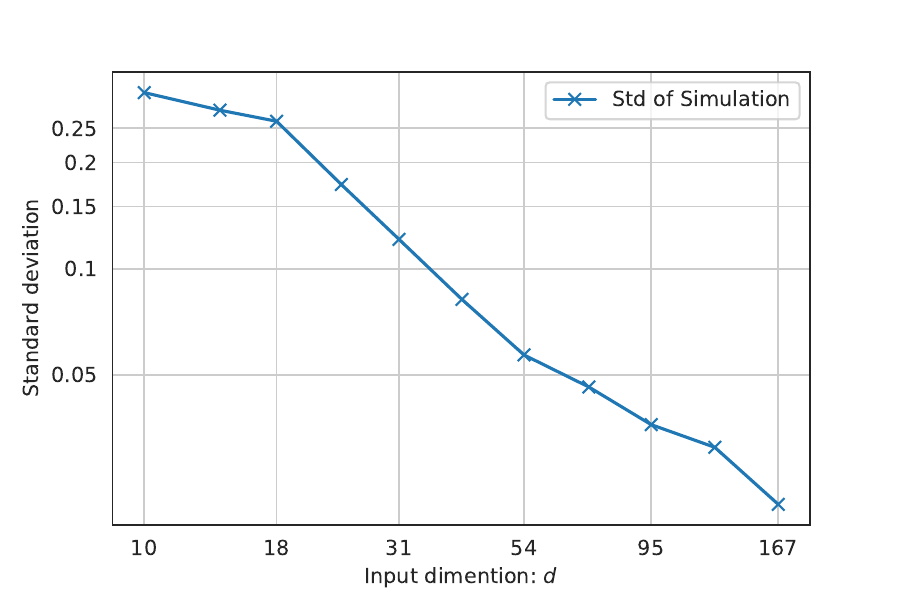}
    \caption{Standard Deviation of Simulations.}
    \end{subfigure} 
    \caption{{\bf Simulations approach predictions as $d\to\infty$.} The mean and the standard deviation are computed over 32 runs. The predictions and simulations are obtained via the second peak $r=2$, namely, $m=N(d, \leq 2)$.}
    \label{fig:finit-size-discrepancy}

\end{figure}

\paragraph{Small Spectral Gap Eliminates Multiple-descent.}
In Fig.~\ref{fig:multiple}, we plot both the predicted learning curves and simulations when $\mathrm{Gap}$ ranging in $[2, 8, 32, 128]$. For $1\leq r\leq 6$, we have 
$\xi_r = \mathrm{Gap}^{-(r-1)}/  \sum_{k=r}^7\mathrm{Gap}^{-(k-1)}$
and $\xi_r  \approx \mathrm{Gap} $ when $ \mathrm{Gap}$ is large. Recall that the variance term $\mathrm{V}_r$ peaks at $\alpha=1$ and the peak scales like $\xi_r^{1/2} \approx \mathrm{Gap}^{1/2}$. When $\mathrm{Gap}$ is large, e.g. $\mathrm{Gap}=32, 128$, the variance is also large near $\alpha=1$, the multiple-descent phenomena are more prominent. On the other hand, when $\mathrm{Gap}$ is small, e.g. $\mathrm{Gap}=8, 2$, such phenomena disappear and learning curves become monotonic.   
\section{Conclusion}\label{sec:conclusion}
In this work, we establish precise asymptotic formulas for the sample-wise learning curves in the kernel ridge regression setting for a family of dot-product kernels in the polynomial scaling regimes $m\propto d^r$ for all $r\in\sN^*$. We demonstrate that these formulas can capture empirical learning curves surprisingly well even in the regime where strong finite-size corrections would be expected. We rigorously prove that the learning curves can be non-monotonic near $m\propto d^{r}/r!$ for each $r\in\sN^*$. There are a couple limitations of our approach which could be improved in future work. The first one is the strong assumption on the distribution of the input data, namely, the uniform distribution on the spherical type of data. In addition, the learning curves are obtained only in the kernel regression setting and extending the results to the random feature setting (see, e.g., \cite{lu2022equivalence}) and the feature learning setting \cite{yang2020feature} would be meaningful future directions.

\section*{Acknowledgement}
We thank Ben Adlam for providing valuable feedback on a draft.
T.M. was supported by NSF through award DMS-2031883 and the Simons
Foundation through Award 814639 for the Collaboration on the Theoretical
Foundations of Deep Learning. T.M. also acknowledge the NSF grant
CCF-2006489 and the ONR grant N00014-18-1-2729.
The work of Yue M. Lu is supported by a Harvard FAS Dean’s competitive fund award for promising scholarship, and by the US National Science Foundation under grant CCF-1910410.

\bibliographystyle{plainnat}

\bibliography{main.bbl}

\begin{thebibliography}{44}
\providecommand{\natexlab}[1]{#1}
\providecommand{\url}[1]{\texttt{#1}}
\expandafter\ifx\csname urlstyle\endcsname\relax
  \providecommand{\doi}[1]{doi: #1}\else
  \providecommand{\doi}{doi: \begingroup \urlstyle{rm}\Url}\fi

\bibitem[Adlam and Pennington(2020{\natexlab{a}})]{adlam2020neural}
Ben Adlam and Jeffrey Pennington.
\newblock The neural tangent kernel in high dimensions: Triple descent and a
  multi-scale theory of generalization.
\newblock In \emph{International Conference on Machine Learning}, pages 74--84.
  PMLR, 2020{\natexlab{a}}.

\bibitem[Adlam and Pennington(2020{\natexlab{b}})]{adlam2020understanding}
Ben Adlam and Jeffrey Pennington.
\newblock Understanding double descent requires a fine-grained bias-variance
  decomposition.
\newblock \emph{Advances in neural information processing systems},
  33:\penalty0 11022--11032, 2020{\natexlab{b}}.

\bibitem[Advani et~al.(2020)Advani, Saxe, and Sompolinsky]{advani2020high}
Madhu~S Advani, Andrew~M Saxe, and Haim Sompolinsky.
\newblock High-dimensional dynamics of generalization error in neural networks.
\newblock \emph{Neural Networks}, 132:\penalty0 428--446, 2020.

\bibitem[Arora et~al.(2019)Arora, Du, Hu, Li, Salakhutdinov, and
  Wang]{arora2019exact}
Sanjeev Arora, Simon~S Du, Wei Hu, Zhiyuan Li, Russ~R Salakhutdinov, and
  Ruosong Wang.
\newblock On exact computation with an infinitely wide neural net.
\newblock \emph{Advances in Neural Information Processing Systems}, 32, 2019.

\bibitem[Bai and Zhou(2008)]{bai2008large}
Zhidong Bai and Wang Zhou.
\newblock Large sample covariance matrices without independence structures in
  columns.
\newblock \emph{Statistica Sinica}, pages 425--442, 2008.

\bibitem[Beckner(1992)]{beckner1992sobolev}
William Beckner.
\newblock Sobolev inequalities, the poisson semigroup, and analysis on the
  sphere sn.
\newblock \emph{Proceedings of the National Academy of Sciences}, 89\penalty0
  (11):\penalty0 4816--4819, 1992.

\bibitem[Belkin et~al.(2019)Belkin, Hsu, Ma, and Mandal]{belkin2019reconciling}
Mikhail Belkin, Daniel Hsu, Siyuan Ma, and Soumik Mandal.
\newblock Reconciling modern machine-learning practice and the classical
  bias--variance trade-off.
\newblock \emph{Proceedings of the National Academy of Sciences}, 116\penalty0
  (32):\penalty0 15849--15854, 2019.

\bibitem[Bordelon et~al.(2020)Bordelon, Canatar, and
  Pehlevan]{bordelon2020spectrum}
Blake Bordelon, Abdulkadir Canatar, and Cengiz Pehlevan.
\newblock Spectrum dependent learning curves in kernel regression and wide
  neural networks.
\newblock In \emph{International Conference on Machine Learning}, pages
  1024--1034. PMLR, 2020.

\bibitem[Bryson et~al.(2021)Bryson, Vershynin, and Zhao]{bryson2021marchenko}
Jennifer Bryson, Roman Vershynin, and Hongkai Zhao.
\newblock Marchenko--pastur law with relaxed independence conditions.
\newblock \emph{Random Matrices: Theory and Applications}, page 2150040, 2021.

\bibitem[Canatar et~al.(2021)Canatar, Bordelon, and
  Pehlevan]{canatar2021spectral}
Abdulkadir Canatar, Blake Bordelon, and Cengiz Pehlevan.
\newblock Spectral bias and task-model alignment explain generalization in
  kernel regression and infinitely wide neural networks.
\newblock \emph{Nature communications}, 12\penalty0 (1):\penalty0 1--12, 2021.

\bibitem[Chowdhery et~al.(2022)Chowdhery, Narang, Devlin, Bosma, Mishra,
  Roberts, Barham, Chung, Sutton, Gehrmann, et~al.]{chowdhery2022palm}
Aakanksha Chowdhery, Sharan Narang, Jacob Devlin, Maarten Bosma, Gaurav Mishra,
  Adam Roberts, Paul Barham, Hyung~Won Chung, Charles Sutton, Sebastian
  Gehrmann, et~al.
\newblock Palm: Scaling language modeling with pathways.
\newblock \emph{arXiv preprint arXiv:2204.02311}, 2022.

\bibitem[Dietrich et~al.(1999)Dietrich, Opper, and
  Sompolinsky]{dietrich1999statistical}
Rainer Dietrich, Manfred Opper, and Haim Sompolinsky.
\newblock Statistical mechanics of support vector networks.
\newblock \emph{Physical review letters}, 82\penalty0 (14):\penalty0 2975,
  1999.

\bibitem[d’Ascoli et~al.(2020)d’Ascoli, Refinetti, Biroli, and
  Krzakala]{d2020double}
St{\'e}phane d’Ascoli, Maria Refinetti, Giulio Biroli, and Florent Krzakala.
\newblock Double trouble in double descent: Bias and variance (s) in the lazy
  regime.
\newblock In \emph{International Conference on Machine Learning}, pages
  2280--2290. PMLR, 2020.

\bibitem[Far et~al.(2006)Far, Oraby, Bryc, and Speicher]{far2006spectra}
Reza~Rashidi Far, Tamer Oraby, Wlodzimierz Bryc, and Roland Speicher.
\newblock Spectra of large block matrices.
\newblock \emph{arXiv preprint cs/0610045}, 2006.

\bibitem[Frye and Efthimiou(2012)]{frye2012spherical}
Christopher Frye and Costas~J Efthimiou.
\newblock Spherical harmonics in p dimensions.
\newblock \emph{arXiv preprint arXiv:1205.3548}, 2012.

\bibitem[Gerace et~al.(2020)Gerace, Loureiro, Krzakala, M{\'e}zard, and
  Zdeborov{\'a}]{gerace2020generalisation}
Federica Gerace, Bruno Loureiro, Florent Krzakala, Marc M{\'e}zard, and Lenka
  Zdeborov{\'a}.
\newblock Generalisation error in learning with random features and the hidden
  manifold model.
\newblock In \emph{International Conference on Machine Learning}, pages
  3452--3462. PMLR, 2020.

\bibitem[Ghorbani et~al.(2021)Ghorbani, Mei, Misiakiewicz, and
  Montanari]{ghorbani2021linearized}
Behrooz Ghorbani, Song Mei, Theodor Misiakiewicz, and Andrea Montanari.
\newblock Linearized two-layers neural networks in high dimension.
\newblock \emph{The Annals of Statistics}, 49\penalty0 (2):\penalty0
  1029--1054, 2021.

\bibitem[Goldt et~al.(2020{\natexlab{a}})Goldt, M{\'e}zard, Krzakala, and
  Zdeborov{\'a}]{goldt2020modeling}
Sebastian Goldt, Marc M{\'e}zard, Florent Krzakala, and Lenka Zdeborov{\'a}.
\newblock Modeling the influence of data structure on learning in neural
  networks: The hidden manifold model.
\newblock \emph{Physical Review X}, 10\penalty0 (4):\penalty0 041044,
  2020{\natexlab{a}}.

\bibitem[Goldt et~al.(2020{\natexlab{b}})Goldt, Reeves, M{\'e}zard, Krzakala,
  and Zdeborov{\'a}]{goldt2020gaussian}
Sebastian Goldt, Galen Reeves, Marc M{\'e}zard, Florent Krzakala, and Lenka
  Zdeborov{\'a}.
\newblock The gaussian equivalence of generative models for learning with
  two-layer neural networks.
\newblock 2020{\natexlab{b}}.

\bibitem[Goldt et~al.(2022)Goldt, Loureiro, Reeves, Krzakala, M{\'e}zard, and
  Zdeborov{\'a}]{goldt2022gaussian}
Sebastian Goldt, Bruno Loureiro, Galen Reeves, Florent Krzakala, Marc
  M{\'e}zard, and Lenka Zdeborov{\'a}.
\newblock The gaussian equivalence of generative models for learning with
  shallow neural networks.
\newblock In \emph{Mathematical and Scientific Machine Learning}, pages
  426--471. PMLR, 2022.

\bibitem[Hastie et~al.(2022)Hastie, Montanari, Rosset, and
  Tibshirani]{hastie2022surprises}
Trevor Hastie, Andrea Montanari, Saharon Rosset, and Ryan~J Tibshirani.
\newblock Surprises in high-dimensional ridgeless least squares interpolation.
\newblock \emph{The Annals of Statistics}, 50\penalty0 (2):\penalty0 949--986,
  2022.

\bibitem[Hu and Lu(2020)]{hu2020universality}
Hong Hu and Yue~M Lu.
\newblock Universality laws for high-dimensional learning with random features.
\newblock \emph{arXiv preprint arXiv:2009.07669}, 2020.

\bibitem[Jacot et~al.(2018)Jacot, Gabriel, and Hongler]{jacot2018neural}
Arthur Jacot, Franck Gabriel, and Cl{\'e}ment Hongler.
\newblock Neural tangent kernel: Convergence and generalization in neural
  networks.
\newblock \emph{Advances in neural information processing systems}, 31, 2018.

\bibitem[Kaplan et~al.(2020)Kaplan, McCandlish, Henighan, Brown, Chess, Child,
  Gray, Radford, Wu, and Amodei]{kaplan2020scaling}
Jared Kaplan, Sam McCandlish, Tom Henighan, Tom~B Brown, Benjamin Chess, Rewon
  Child, Scott Gray, Alec Radford, Jeffrey Wu, and Dario Amodei.
\newblock Scaling laws for neural language models.
\newblock \emph{arXiv preprint arXiv:2001.08361}, 2020.

\bibitem[Lee et~al.(2019)Lee, Xiao, Schoenholz, Bahri, Novak, Sohl-Dickstein,
  and Pennington]{lee2019wide}
Jaehoon Lee, Lechao Xiao, Samuel Schoenholz, Yasaman Bahri, Roman Novak, Jascha
  Sohl-Dickstein, and Jeffrey Pennington.
\newblock Wide neural networks of any depth evolve as linear models under
  gradient descent.
\newblock \emph{Advances in neural information processing systems}, 32, 2019.

\bibitem[Liang et~al.(2020)Liang, Rakhlin, and Zhai]{liang2020multiple}
Tengyuan Liang, Alexander Rakhlin, and Xiyu Zhai.
\newblock On the multiple descent of minimum-norm interpolants and restricted
  lower isometry of kernels.
\newblock In \emph{Conference on Learning Theory}, pages 2683--2711. PMLR,
  2020.

\bibitem[Lin and Dobriban(2021)]{lin2021causes}
Licong Lin and Edgar Dobriban.
\newblock What causes the test error? going beyond bias-variance via anova.
\newblock \emph{Journal of Machine Learning Research}, 22\penalty0
  (155):\penalty0 1--82, 2021.

\bibitem[Loureiro et~al.(2021)Loureiro, Gerbelot, Cui, Goldt, Krzakala, Mezard,
  and Zdeborov{\'a}]{loureiro2021learning}
Bruno Loureiro, Cedric Gerbelot, Hugo Cui, Sebastian Goldt, Florent Krzakala,
  Marc Mezard, and Lenka Zdeborov{\'a}.
\newblock Learning curves of generic features maps for realistic datasets with
  a teacher-student model.
\newblock \emph{Advances in Neural Information Processing Systems},
  34:\penalty0 18137--18151, 2021.

\bibitem[Lu and Yau(2022)]{lu2022equivalence}
Yue~M Lu and Horng-Tzer Yau.
\newblock An equivalence principle for the spectrum of random inner-product
  kernel matrices.
\newblock \emph{arXiv preprint arXiv:2205.06308}, 2022.

\bibitem[Mania and Sra(2020)]{mania2020classifier}
Horia Mania and Suvrit Sra.
\newblock Why do classifier accuracies show linear trends under distribution
  shift?
\newblock \emph{arXiv preprint arXiv:2012.15483}, 2020.

\bibitem[Mei and Montanari(2022)]{mei2022generalization}
Song Mei and Andrea Montanari.
\newblock The generalization error of random features regression: Precise
  asymptotics and the double descent curve.
\newblock \emph{Communications on Pure and Applied Mathematics}, 75\penalty0
  (4):\penalty0 667--766, 2022.

\bibitem[Mei et~al.(2021)Mei, Misiakiewicz, and
  Montanari]{mei2021generalization}
Song Mei, Theodor Misiakiewicz, and Andrea Montanari.
\newblock Generalization error of random feature and kernel methods:
  hypercontractivity and kernel matrix concentration.
\newblock \emph{Applied and Computational Harmonic Analysis}, 2021.

\bibitem[Mingo and Speicher(2017)]{mingo2017free}
James~A Mingo and Roland Speicher.
\newblock \emph{Free probability and random matrices}, volume~35.
\newblock Springer, 2017.

\bibitem[Montanari et~al.(2019)Montanari, Ruan, Sohn, and
  Yan]{montanari2019generalization}
Andrea Montanari, Feng Ruan, Youngtak Sohn, and Jun Yan.
\newblock The generalization error of max-margin linear classifiers:
  High-dimensional asymptotics in the overparametrized regime.
\newblock \emph{arXiv preprint arXiv:1911.01544}, 2019.

\bibitem[Novak et~al.(2018)Novak, Xiao, Lee, Bahri, Yang, Hron, Abolafia,
  Pennington, and Sohl-Dickstein]{novak2018bayesian}
Roman Novak, Lechao Xiao, Jaehoon Lee, Yasaman Bahri, Greg Yang, Jiri Hron,
  Daniel~A Abolafia, Jeffrey Pennington, and Jascha Sohl-Dickstein.
\newblock Bayesian deep convolutional networks with many channels are gaussian
  processes.
\newblock \emph{arXiv preprint arXiv:1810.05148}, 2018.

\bibitem[Novak et~al.(2019)Novak, Xiao, Hron, Lee, Alemi, Sohl-Dickstein, and
  Schoenholz]{novak2019neural}
Roman Novak, Lechao Xiao, Jiri Hron, Jaehoon Lee, Alexander~A Alemi, Jascha
  Sohl-Dickstein, and Samuel~S Schoenholz.
\newblock Neural tangents: Fast and easy infinite neural networks in python.
\newblock \emph{arXiv preprint arXiv:1912.02803}, 2019.

\bibitem[Tao(2012)]{tao2012topics}
Terence Tao.
\newblock \emph{Topics in random matrix theory}, volume 132.
\newblock American Mathematical Soc., 2012.

\bibitem[Tripuraneni et~al.(2021{\natexlab{a}})Tripuraneni, Adlam, and
  Pennington]{tripuraneni2021covariate}
Nilesh Tripuraneni, Ben Adlam, and Jeffrey Pennington.
\newblock Covariate shift in high-dimensional random feature regression.
\newblock \emph{arXiv preprint arXiv:2111.08234}, 2021{\natexlab{a}}.

\bibitem[Tripuraneni et~al.(2021{\natexlab{b}})Tripuraneni, Adlam, and
  Pennington]{tripuraneni2021overparameterization}
Nilesh Tripuraneni, Ben Adlam, and Jeffrey Pennington.
\newblock Overparameterization improves robustness to covariate shift in high
  dimensions.
\newblock \emph{Advances in Neural Information Processing Systems}, 34,
  2021{\natexlab{b}}.

\bibitem[Vershynin(2010)]{vershynin2010introduction}
Roman Vershynin.
\newblock Introduction to the non-asymptotic analysis of random matrices.
\newblock \emph{arXiv preprint arXiv:1011.3027}, 2010.

\bibitem[Wu and Xu(2020)]{wu2020optimal}
Denny Wu and Ji~Xu.
\newblock On the optimal weighted $\ell_2 $ regularization in overparameterized
  linear regression.
\newblock \emph{Advances in Neural Information Processing Systems},
  33:\penalty0 10112--10123, 2020.

\bibitem[Xiao(2021)]{xiao2021eigenspace}
Lechao Xiao.
\newblock Eigenspace restructuring: a principle of space and frequency in
  neural networks.
\newblock \emph{arXiv preprint arXiv:2112.05611}, 2021.

\bibitem[Yang(2020)]{yang2020tensor}
Greg Yang.
\newblock Tensor programs ii: Neural tangent kernel for any architecture.
\newblock \emph{arXiv preprint arXiv:2006.14548}, 2020.

\bibitem[Yang and Hu(2020)]{yang2020feature}
Greg Yang and Edward~J Hu.
\newblock Feature learning in infinite-width neural networks.
\newblock \emph{arXiv preprint arXiv:2011.14522}, 2020.

\end{thebibliography}

\newpage 
\appendix

\section{Appendix Guidelines}
The appendix is organized as follows. We prove Theorem \ref{Theorem:mp} and Theorem~\ref{Theorem:mp-conv} in Sec.~\ref{sec:proof-pm}. In  
Sec.~\ref{sec:proof-of-generalization}, we prove the test errors for dot-product kernels, namely, Theorem \ref{thm:generalization-fc-kernel} and for one-hidden-layer convolutional kernels, namely, Theorem \ref{thm:generalization-one-layer-conv}. 
The proof also shows how to reduce the test error of multiple-layer convolutional kernels to evaluating \myeqref{eq:conv-bias} and \myeqref{eq:conv-variance}. 
Finally, in Sec.\ref{sec:additional-plots}, we provide additional plots to empirically verify that the finite-size correction becomes smaller as $d$ grows larger.

\section{Proof of Theorem \ref{Theorem:mp}} \label{sec:proof-pm}

We begin with some notations. For positive numbers $a$ and $b$, we use $a\lesssim b$ to mean there is a constant independent of $d$ such that $a\leq C b$. In addition, $a\sim b$, if $a\lesssim b$ and $b\lesssim a$.

The proof of Theorem \ref{Theorem:mp-conv} is similar. We only present the proof of Theorem \ref{Theorem:mp}. 
Our proof is based on the following result from \cite{bai2008large}.
\begin{lemma}[\cite{bai2008large}]\label{Lemma:bai-zhou}
Let $\vx_p\in\mathbb R^{p}$ be random vectors and $ \mX = [\vx_{p1}, \dots \vx_{pn}]$ be a $p\times m$ matrix with iid columns. 
If for every $\{\mA_p\}_{p}$, $p\times p$ matrix with uniform operator norm,  
\begin{align}
\frac 1 {p^2} \E |\vx_p^T \mA_p \vx_p - {\rm Tr}(\mA_p) )|^2\to 0,
\end{align}
 then the empirical spectral distribution of $\frac 1 m \mX\mX^T$ converges to $\mu_\alpha$ weakly if $p/m\to\alpha\in(0, \infty)$. 
\end{lemma}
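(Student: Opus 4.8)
The plan is to prove this the way \cite{bai2008large} do, via the Stieltjes transform, with the quadratic-form hypothesis playing the role that within-column independence plays in the classical argument. Write $\mS=\frac1m\mX\mX^T\in\sR^{p\times p}$, and for $z$ in the open upper half plane $\sC^+=\{z:\Im z>0\}$ set $\mR(z)=(\mS-z\mI_p)^{-1}$ and $s_p(z)=\frac1p\Tr\mR(z)$, the Stieltjes transform of the empirical spectral distribution $\mu_{\mS}$. Since pointwise convergence of Stieltjes transforms on $\sC^+$ is equivalent to weak convergence of the underlying measures, and since $s_{\mu_\alpha}$ is the unique $\sC^+$-valued solution of the Marchenko--Pastur fixed-point equation $\alpha z\,s^2+(z+\alpha-1)\,s+1=0$, it suffices to show $s_p(z)\to s_{\mu_\alpha}(z)$ in probability for each fixed $z\in\sC^+$. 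First I would dispose of tightness / no escape of mass: applying the hypothesis with $\mA_p=\mI_p$ gives $\frac1{p^2}\E|\,\|\vx_p\|^2-p\,|^2\to0$, hence $\frac1p\Tr\mS=\frac1{pm}\sum_{j=1}^m\|\vx_{pj}\|^2\to1$ in probability, so $\{\mu_{\mS}\}$ is tight and it is enough to identify every weak subsequential limit through its Stieltjes transform.

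The core step is a self-consistent equation for $s_p$ obtained by the leave-one-out (Sherman--Morrison) device. Writing $\mS=\frac1m\sum_{j=1}^m\vx_{pj}\vx_{pj}^T$ and letting $\mR_j(z)=(\mS-\frac1m\vx_{pj}\vx_{pj}^T-z\mI_p)^{-1}$, which is independent of $\vx_{pj}$ and has operator norm $\le 1/\Im z$, the resolvent identity $\mS\mR=\mI_p+z\mR$ gives
\[
 z\,\Tr\mR(z)=-p+\sum_{j=1}^m\tfrac1m\vx_{pj}^T\mR(z)\vx_{pj},\qquad
 \tfrac1m\vx_{pj}^T\mR(z)\vx_{pj}=1-\frac{1}{1+\tfrac1m\vx_{pj}^T\mR_j(z)\vx_{pj}}.
\]
The only place the hypothesis enters is the concentration $\tfrac1m\vx_{pj}^T\mR_j\vx_{pj}\approx\tfrac1m\Tr\mR_j$: conditioning on $\{\vx_{pk}\}_{k\ne j}$ and applying the assumption to the real and imaginary parts of $\mR_j$ (both real symmetric, of operator norm $\le 1/\Im z$) gives $\E\big[\,|\vx_{pj}^T\mR_j\vx_{pj}-\Tr\mR_j|^2\,\big|\,\{\vx_{pk}\}_{k\ne j}\big]=o(p^2)$; combined with the rank-one interlacing bound $|\Tr\mR-\Tr\mR_j|\le1/\Im z$ this yields $\tfrac1m\vx_{pj}^T\mR_j\vx_{pj}=\tfrac pm\,s_p(z)+o_{\mathbb P}(1)$, uniformly over $j$. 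Substituting back, summing, dividing by $p$, and using $p/m\to\alpha$ produces
\[
 z\,s_p(z)=-1+\frac1\alpha\cdot\frac{\alpha\,s_p(z)}{1+\alpha\,s_p(z)}+o_{\mathbb P}(1),
\]
the Marchenko--Pastur fixed-point relation up to a vanishing error. A stability argument for this fixed-point map -- legitimate because $s_p(z)$ is bounded and, by tightness, stays away from the branch locus uniformly for $z$ in a fixed compact subset of $\sC^+$ -- then forces $s_p(z)\to s_{\mu_\alpha}(z)$ in probability, which is the desired weak convergence.

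The hard part will be the conditional use of the quadratic-form hypothesis in the concentration step. The matrix $\mR_j(z)$ is itself random (a function of every column but the $j$th) and complex, so the assumption has to be invoked pointwise in the conditioning variables and separately on real and imaginary parts, and one must check that the resulting $o(1)$ errors are uniform across the $m$ summands, uniform in the conditioning realization (so that a uniform-in-operator-norm version of the hypothesis is really what is being used), and locally uniform in $z$, before pushing them through the nonlinear fixed-point relation via stability of the Marchenko--Pastur equation away from its branch points. A secondary technicality is the truncation/centering of columns that may be needed if the $\vx_{pj}$ are not already norm-concentrated; this again reduces to the hypothesis with $\mA_p=\mI_p$ plused with the rank bound on the effect of truncation. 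Only convergence in probability is asserted here, which the stated second-moment hypothesis delivers; an almost-sure statement would require a fourth-moment strengthening so that a Borel--Cantelli argument can be run along the lines above.
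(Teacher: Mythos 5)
The paper does not prove this lemma; it is imported verbatim (as a special case, with identity population covariance) from \citet{bai2008large}, and your reconstruction via the Stieltjes transform and the leave-one-out resolvent identity is exactly the route Bai and Zhou take: the quadratic-form hypothesis is used precisely where entrywise independence would classically be used, namely to get $\vx_{pj}^T\mR_j\vx_{pj}=\Tr\mR_j+o_{\mathbb P}(p)$ conditionally on the other columns. Your outline is correct, and you rightly flag the two points that need care: the uniform-in-operator-norm version of the hypothesis (which does follow from the ``for every sequence'' formulation by a subsequence/contradiction argument) and the locally uniform control in $z$. The one standard estimate your sketch glosses over is the lower bound $\bigl|1+\tfrac1m\vx_{pj}^T\mR_j(z)\vx_{pj}\bigr|^{-1}\leq |z|/\Im z$, which is needed to push the concentration of $\tfrac1m\vx_{pj}^T\mR_j\vx_{pj}$ through the nonlinearity $w\mapsto(1+w)^{-1}$ before invoking stability of the Marchenko--Pastur fixed point; with that supplied, the argument closes.
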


We prove a slightly more general version. 
\begin{theorem}\label{Theorem:mp-general}
Let $r\in \sN$ and $\alpha\in (0,\infty)$ be fixed. Assume $m=m(d)$ with $N(d, r)/m\to \alpha\in (0,\infty)$ as $d\to\infty$. 
Let $\vu=\vu(\vd)$ be a sequence of functions defined on $\sS_{d-1}$ such that \begin{enumerate}
    \item[(1.)] the cardinality of $\vu$ satisfies $|\vu|/d^r\to 0 $ as $d\to\infty$; 
    \item[(2.)] the functions in $\vu$ and $Y_{r}$ are mutually orthogonal; 
    \item[(3.)] for any unit vector $\theta$, let $\E_\vx |\theta^TZ_r(\vx)|^4\lesssim 1$ uniformly of $d$ and $\theta$, where $Z_r(\vx)$ is the concatenation of $\vu(\vx)$ and $Y_{r}(\vx)$. 
\end{enumerate}  
Let $Z_r(\mX)$ be the concatenation of $Y_r(\mX)$ and $\vu(\mX)$. Then the empirical spectral distribution of $\frac 1m Z_r(\mX)^T Z_r(\mX)$ converges in distribution to the Marchenko-Pastur distribution $\mu_{\alpha}$. 
\end{theorem}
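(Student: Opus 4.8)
The plan is to verify the hypothesis of Lemma~\ref{Lemma:bai-zhou} (the Bai--Zhou criterion) for the random vector $\vx_p := Z_r(\vx) \in \sR^{|\vu| + N(d,r)}$, i.e. to show that for every sequence of matrices $\{\mA_d\}$ with $\|\mA_d\|_{\op} \le 1$,
\begin{align}
  \frac{1}{(|\vu| + N(d,r))^2}\, \sV\!\left( Z_r(\vx)^T \mA_d Z_r(\vx) - \Tr(\mA_d) \right) \to 0 \qquad \text{as } d\to\infty\,.
\end{align}
Since $|\vu| = o(d^r)$ and $N(d,r) = d^r/r! + O(d^{r-1})$, the normalization $(|\vu| + N(d,r))^2$ is comparable to $N(d,r)^2 \sim (d^r/r!)^2$, so it is equivalent to establish the bound with $N(d,r)^{-2}$ in front. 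First I would reduce to the diagonal case: writing $\mA_d = \sum_{i,j} (A_d)_{ij} \,\ve_i \ve_j^T$, the variance expands as a sum over quadruples of indices of $(A_d)_{ij}(A_d)_{kl}$ times fourth-order moments of the coordinates of $Z_r(\vx)$; a standard argument (as in the sketch for Theorem~\ref{Theorem:mp}) shows the off-diagonal contributions are controlled once the diagonal ones are, using hypothesis (3.) — the uniform fourth-moment bound $\E_\vx|\theta^T Z_r(\vx)|^4 \lesssim 1$ — via Cauchy--Schwarz to bound the generic fourth-order moment $\E_\vx (Z_{r,a})(Z_{r,b})(Z_{r,c})(Z_{r,d})$ by an absolute constant.

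**The key steps** then are: (i) split the coordinates of $Z_r$ into the $\vu$-block and the $Y_r$-block, and correspondingly split the diagonal sum $\sum_{a} A_{aa}(\E_\vx Z_{r,a}^2 Z_{r,b}^2 - 1)$ into blocks; (ii) the $\vu \times \vu$ block and the $\vu \times Y_r$ cross block each contain at most $O(|\vu|\cdot N(d,r)) = o(N(d,r)^2)$ pairs, so since the summands are bounded by an absolute constant (hypothesis (3.) plus hypercontractivity / orthonormality giving $\E Z_{r,a}^2 = 1$), these blocks contribute $o(N(d,r)^2)$ and are negligible after the $N(d,r)^{-2}$ normalization; (iii) for the remaining $Y_r \times Y_r$ block, invoke exactly the computation already done in the proof sketch of Theorem~\ref{Theorem:mp}: by hypercontractivity of spherical harmonics \citep{beckner1992sobolev}, $\E_\vx Y_{r,l}^2 Y_{r,l'}^2 \le C_r$, so one may discard any $o(N(d,r)^2)$ exceptional pairs, and for the remaining pairs the eigenfunctions are asymptotically uncorrelated, $\E_\vx Y_{r,l}^2(\vx) Y_{r,l'}^2(\vx) = 1 + O(d^{-1})$, which drives the normalized variance to zero. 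Assembling (ii) and (iii) gives the claim, and Lemma~\ref{Lemma:bai-zhou} then yields weak convergence of the ESD of $\frac1m Z_r(\mX)^T Z_r(\mX)$ to $\mu_\alpha$, using $\bigl(|\vu| + N(d,r)\bigr)/m \to \alpha$ (which follows from $N(d,r)/m \to \alpha$ and $|\vu|/d^r \to 0$). The orthogonality hypothesis (2.) is what guarantees $\E_\vx Z_{r,a}(\vx) Z_{r,b}(\vx) = \delta_{ab}$ (after suitable normalization of $\vu$), i.e. the population covariance is the identity, which is needed both to identify the limiting law as $\mu_\alpha$ with the stated edge and to make the "$-1$" appearing inside the variance the correct centering.

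**The main obstacle** I anticipate is step (iii) in its full generality — the non-diagonal $\mA_d$ case restricted to the $Y_r$ block — because the quadruple-index sum there requires understanding $\E_\vx Y_{r,l_1} Y_{r,l_2} Y_{r,l_3} Y_{r,l_4}$ for distinct indices, not just the squared-pair moments. The paper's sketch sidesteps this by treating diagonal $\mA_d$; to handle general $\mA_d$ one must argue that the "genuinely off-diagonal" terms contribute negligibly, which relies on a Gegenbauer addition-theorem / integration-by-parts estimate showing these mixed fourth moments are $O(d^{-1})$ (or vanish by parity) except on a sparse set of index tuples, after which $\|\mA_d\|_{\op} \le 1$ and Cauchy--Schwarz close the bound. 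Managing the bookkeeping of which index tuples survive, uniformly in $d$ and in $\mA_d$, is the delicate part; everything else is routine once hypotheses (1.)--(3.) are in hand, since those hypotheses were evidently designed precisely to make the $\vu$-block contributions vanish by a dimension count and to supply the uniform moment control needed for the Cauchy--Schwarz steps.
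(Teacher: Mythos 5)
Your scaffolding---the Bai--Zhou quadratic-form criterion, absorbing the $\vu$-block by the dimension count $|\vu|=o(d^r)$, and splitting the $Y_r$-block into diagonal and off-diagonal parts---is the same as the paper's. But the two steps you defer are exactly the substance of the proof, and the strategy you gesture at does not close them. For the diagonal part you invoke ``the computation already done in the proof sketch of Theorem~\ref{Theorem:mp}'' for $\E_\vx Y_{r,l}^2Y_{r,l'}^2=1+O(d^{-1})$, but that sketch is a summary of the very theorem you are proving, and this estimate is not established for the harmonics $Y_{r,l}$ directly. The paper's device is to use unitary invariance of the quadratic-form criterion (replacing $\mA$ by $\mQ^T\mA\mQ$) to change basis so that all but $o(d^r)$ of the degree-$r$ eigenfunctions become monomials $g_\vi(\vx)=C_{d,r}\prod_{i\in\vi}x_i$ with $|\vi|=r$; the estimate $\E_\vx g_\vi^2 g_\vj^2=1+O(d^{-1})$ for disjoint $\vi,\vj$ is then proved by induction on the sphere moments $\int_{\sS_{d-1}}\prod_t x_t^2\,d\vx$ together with hypercontractivity (Lemma~\ref{lemma: spherical-estimate}), and the overlapping pairs are handled by a count ($\lesssim N(d,r)\,d^{r-1}$ of them). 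This change of basis is the missing idea in your plan.

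The gap is more serious in the off-diagonal part for general $\mA$, which you yourself flag as the main obstacle but resolve only by asserting that the mixed moments $\E_\vx Y_{r,l_1}Y_{r,l_2}Y_{r,l_3}Y_{r,l_4}$ are $O(d^{-1})$ or vanish ``except on a sparse set,'' to be shown by an unspecified Gegenbauer/integration-by-parts estimate. Bounding each fourth moment by a constant via hypothesis (3.) and Cauchy--Schwarz cannot suffice: the off-diagonal sum runs over $\sim N(d,r)^4$ quadruples, and even using $\|\mA\|_{\op}\le 1$ through $\sum_{\vi\vj}B_{\vi\vj}^2\lesssim N(d,r)$ one loses a power of $d$ after the $N(d,r)^{-2}$ normalization, so the whole burden falls on identifying and counting the non-vanishing tuples uniformly in $\mA$. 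In the monomial basis this becomes concrete: $\E_\vx g_\vi g_\vj g_\vl g_\vk$ vanishes by symmetry of the uniform measure unless every coordinate in the product appears at least twice, which forces $|(\vl\cup\vk)\setminus(\vi\cup\vj)|\le t$ when $|\vi\cap\vj|=t$; counting ($\lesssim d^{r-t}$ choices of $\vj$, $\lesssim d^{t}$ choices of $(\vl,\vk)$), bounding the surviving moments by hypercontractivity, and applying Cauchy--Schwarz in $\vj$ with $\|\mA\|_{\op}\le 1$ yields $I_{12}\lesssim d^{-1/2}$. Without this (or an equivalent) bookkeeping your argument is a plan rather than a proof, with the gap located precisely where the theorem's difficulty lies.
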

 
We mainly use the case when $\vu$ is the empty set, i.e. $Z_r=Y_r$ and the case when $\vu=[Y_{kl}]_{k<r}^T$, i.e. $Z_r = Y_{\leq r}$.

\begin{proof}[Proof of Theorem \ref{Theorem:mp-general}]
We apply Lemma \ref{Lemma:bai-zhou} to $Z_r^T(\mX)$. We only need to show for matrices $\mA=\mA^{(d)}$ with $\|\mA\|_{op}\leq 1$, 
\begin{align}
(|\vu| +N(d, r))^{-2}\mathbb E_x |Z_r(\vx)^T \mA Z_r(\vx) - {\rm Tr}(\mA) |^2\to 0 \, \quad \mathrm{as} \quad d\to\infty\, , 
\end{align}
or, equivalently 
\begin{align}\label{eq:quadratic-form}
N(d, r)^{-2}\mathbb E_x |Z_r(\vx)^T \mA Z_r(\vx) - {\rm Tr}(\mA) |^2\to 0 \, \quad \mathrm{as} \quad d\to\infty. 
\end{align}
since $|\vu| = o(d^r)$. 
The assumption $\|\mA\|_{op}\leq 1$ implies that the absolute values of all entries of $\mA$ are bounded 1. 
A key observation in proving the above estimate is that, up to a unitary transformation, almost all functions in $\{Y_{r,l}(\vx)\}_l$ are monomials of the form 
\begin{align}
    g_\vi(\vx)  = C_{d, r}\prod_{i\in\vi} x_i
\end{align} where $\vi\subseteq [d]$ with $|\vi|=r$ and $C_{d, r}$ is a normalizing factor such that 
\begin{align}
   C_{d, r}^2 \int_{\sS_{d-1}} \prod_{i\in\vi}|x_i|^2 d\vx = 1 \, . 
\end{align}
We prove later that for any finite integer $r\geq 1$,
\begin{align}
    \int_{\sS_{d-1}} \prod_{i\in\vi}|x_i|^2 d\vx = 
    \prod_{i\in\vi} \int_{\sS_{d-1}} |x_i|^2 d\vx
    + O(d^{-r-1} ) = d^{-r} + O(d^{-r-1} ) 
\end{align}
Now we proceed to prove \myeqref{eq:quadratic-form}. 
First note that $\sG = \{g_\vi: \vi \subseteq [d], |\vi|=r\}$ is an orthonormal set. This can be proved by noticing that if $\vi\neq \vj$, then there is $i\in \vi$ but $i\notin \vj$. Clearly, the symmetries of the measure on $\sS_{d-1}$ implies 
\begin{align}
\int_{\sS_{d-1}}g_\vi(\vx) g_\vj(\vx) d\vx=0 \, .   
\end{align}
We choose $\sB = \{b_j\}_{j\in [p]}$ so that $\sG \sqcup \sB $ forms an orthonormal basis of $Z_r(\vx)$.
Note that the cardinality of $\sB$ is $o(d^r)$. Indeed, 
\begin{equation}
\begin{aligned}
    p =&~ |\vu|  + N(d, r)- {{d}\choose {r} } \\
    =&~ |\vu|  +  {{d+r-2}\choose {r} } + {{d+r-3}\choose {r-1} }-{{d}\choose {r} } = |\vu|  +  O(d^{r-1}) = o(d^r)
\end{aligned}
\end{equation}
Thus $\frac p {N(d, r)}\to 0$ as $d\to\infty$. After a change of basis, we can assume $Z_r(\vx) = [\vg(\vx)^T, \vb(\vx)^T]^T$, where $\vg = [g_\vi]_\vi^T$ and $\vb =[b_j]_{j\in [p]}^T $. Here we use the fact that \myeqref{eq:quadratic-form} holds for all $\mA$ with uniform operator norms is equivalent to that it holds for $\mQ^T\mA\mQ$ for all such $\mA$ and any unitary matrix $\mQ$. 
We write,   
\begin{align}
    \mA =  \begin{bmatrix}
    \mA_{11}& \mA_{12} 
    \\
    \mA_{21} & \mA_{22}
    \end{bmatrix}
\end{align}
where $\mA_{11}$ is the upper left ${{d}\choose{r}}\times {{d}\choose{r}}$ block of $\mA$, $\mA_{22}$ is the lower right $p\times p$ block of $\mA$ and the other two blocks are defined similarly. Note that $\|\mA_{ij}\|_{op}\leq \|\mA\|_{op}\leq 1$ for $i, j \in \{1, 2\}$. 
As such, we have 
\begin{align}
    N(d, r)^{-2}\E_\vx |Z_r(\vx)^T \mA Z_r(\vx) - {\rm Tr}(\mA) |^2
    \leq 5 (I_1 +I_2+I_3+I_4+I_5)
\end{align}
where
\begin{align}
I_1 &= N(d, r)^{-2} \E_\vx
    |\vg(\vx)^T \mA_{11} \vg(\vx) - {\rm Tr}(\mA_{11})|^2
    \\
    I_2 &= N(d, r)^{-2}
    \E_\vx |\vb(\vx)^T \mA_{22} \vb(\vx)|^2
    \\
    I_3 &= N(d, r)^{-2}\E_\vx 
    |\vg(\vx)^T \mA_{12} \vb(\vx)|^2
    \\ I_4 &= N(d, r)^{-2}\E_\vx 
    |\vb(\vx)^T \mA_{21} \vg(\vx)|^2
    \\
    I_5 &= N(d, r)^{-2} |{\rm Tr}(\mA_{22})|^2
\end{align}
We prove $I_i\to 0$ for $1\leq i \leq 5$. The $i=1$ case is the most difficult and the others are straightforward since $pN(d, r)^{-1}\to 0$. E.g., when $i=3$
\begin{align}
    I_3 &\leq 
    N(d, r)^{-2}\E_\vx 
    \|\vg(\vx)\|_{l_2}^2 \|\mA_{12}\|_{op}^2 \|\vb(\vx)\|^2_{l_2}
    \\&\leq  N(d, r)^{-2}
    \left(\E_\vx  \|\vg(\vx)\|_{l_2}^4  \E_\vx   \|\vb(\vx)\|^4_{l_2}\right)^{1/2}
    \\
    &\leq 
      N(d, r)^{-2} \max_{\vi, j} 
     (\E_\vx |g_\vi(\vx)|^4 \E_\vx |b_j(\vx)|^4) ^{1/2}
     p N(d, r)
     \\ &= Cp N(d, r)^{-1} \to 0\, 
\end{align}
where we have set $C = (\E_\vx |g_\vi(\vx)|^4 \E_\vx |b_j(\vx)|^4) ^{1/2}$, which is $O(1)$ due to assumption (3.) in Theorem \ref{Theorem:mp-general}. The bounds for $I_2$ and $I_4$ can be obtained similarly. 
For $I_5$, we simply use ${\rm Tr}(\mA_{22}) \leq p \|\mA\|_{op}\leq p$. 

It remains to control $I_1$. To ease the notation, denote $\mB= \mA_{11}$. We split $I_1\leq I_{11} + I_{12}$, where $I_{11}$ and $I_{12}$ are the diagonal and the off-diagonal parts, resp.,  
\begin{align}
    I_{11} &= 2 N(d, r)^{-2} \E_\vx | \sum_{\vi} B_{\vi\vi}(g_\vi^2(\vx) - 1)|^2 
    \\I_{12} &=
    2 N(d, r)^{-2} \E_\vx | \sum_{\vi\neq\vj} B_{\vi\vj}g_\vi g_\vj(\vx)|^2 
\end{align}
\paragraph{Bounding the diagonal part $I_{11}$.} Using $\E_\vx g_\vi(\vx)^2=1$, we have 
\begin{align}
  I_{11} =  2 N(d, r)^{-2} \E_\vx \sum_{\vi, \vj} B_{\vi\vi}B_{\vj\vj}(g_\vi^2(\vx) g_\vj^2(\vx) - 1)
\end{align}
We spit the proof into two cases: $\vi\cap \vj =\emptyset$ and $\vi\cap \vj \neq\emptyset$. 
The following is the key estimate to handle the first case. 
\begin{lemma}\label{lemma: spherical-estimate}
If $\vi\cap \vj =\emptyset$, 
\begin{align}
\max_{\vi, \vj} |\E_\vx g_\vi^2(\vx) g_\vj^2(\vx)-1| \lesssim d^{-1} \, .    
\end{align}
\end{lemma}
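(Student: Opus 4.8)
The plan is to reduce this to an exact evaluation of low‑order moments of the uniform measure $\sigma$ on $\sS_{d-1}$. Since $\vi$ and $\vj$ are disjoint sets of size $r$, we have $g_\vi^2(\vx)\,g_\vj^2(\vx) = C_{d,r}^4 \prod_{k\in\vi\cup\vj} x_k^2$ with $|\vi\cup\vj|=2r$, so
\[
\E_\vx\big[g_\vi^2(\vx)\,g_\vj^2(\vx)\big] = C_{d,r}^4 \int_{\sS_{d-1}} \prod_{k\in\vi\cup\vj} x_k^2\, d\vx ,
\]
and the whole question becomes comparing a degree-$2r$ moment against the square of the degree-$r$ moment $C_{d,r}^{-2} = \int_{\sS_{d-1}}\prod_{i\in\vi} x_i^2\, d\vx$.

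First I would record the exact moment identity. Writing $\vx = \vg/\|\vg\|_2$ with $\vg\sim\gN(\0,\mI_d)$ (equivalently, using that $(x_1^2,\dots,x_d^2)$ is $\mathrm{Dirichlet}(\tfrac12,\dots,\tfrac12)$-distributed), one obtains for any $S\subseteq[d]$ with $|S|=s$ that
\[
\int_{\sS_{d-1}} \prod_{k\in S} x_k^2\, d\vx = \frac{\Gamma(d/2)}{2^s\,\Gamma(d/2+s)} = \prod_{j=0}^{s-1}\frac{1}{d+2j}.
\]
This both sharpens and is consistent with the expansion $\int\prod_{k\in S}x_k^2 = d^{-s}+O(d^{-s-1})$ quoted earlier in the excerpt, and in particular it gives $C_{d,r}^{-2} = \prod_{j=0}^{r-1}(d+2j)^{-1}$.

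Then I would simply substitute. Using the identity with $s=2r$ and $s=r$,
\[
\E_\vx\big[g_\vi^2 g_\vj^2\big] = \frac{\Big(\prod_{j=0}^{r-1}(d+2j)\Big)^2}{\prod_{j=0}^{2r-1}(d+2j)} = \prod_{j=0}^{r-1}\frac{d+2j}{d+2r+2j}.
\]
Each factor equals $1 - \tfrac{2r}{d+2r+2j} = 1 - \Theta(1/d)$, and since $r$ is fixed there are only $r$ of them, so the product is $1 - \Theta(1/d)$; explicitly $\E_\vx[g_\vi^2 g_\vj^2] = 1 - \tfrac{2r^2}{d} + O(d^{-2})$. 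Taking absolute values and then the supremum over pairs yields $\max_{\vi,\vj}|\E_\vx g_\vi^2 g_\vj^2 - 1| \lesssim d^{-1}$, with implied constant depending only on $r$.

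The computation is essentially mechanical, so there is no real obstacle; the one point worth flagging is uniformity of the bound over the exponentially many pairs $(\vi,\vj)$. Here that is automatic: by the coordinate-permutation symmetry of $\sigma$, $\E_\vx[g_\vi^2 g_\vj^2]$ depends on $\vi,\vj$ only through $|\vi|=|\vj|=r$ and $|\vi\cap\vj|=0$, so every admissible pair realizes the same value, namely the one computed above. (If one wished to avoid the closed-form moment identity, the alternative is to invoke the excerpt's expansion $\int\prod_{k\in S}x_k^2 = \prod_{k\in S}\int x_k^2 + O(d^{-|S|-1})$ for $|S|\in\{r,2r\}$ and cancel the leading $d^{-2r}$ terms in the ratio; then the mild subtlety is to verify that the remainder in the ratio is genuinely $O(d^{-1})$ rather than merely $o(1)$.)
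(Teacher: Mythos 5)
Your proposal is correct, and it reaches the lemma by a genuinely different evaluation of the key quantity. Both you and the paper reduce the problem to comparing the spherical moment of $\prod_{k\in\vi\cup\vj}x_k^2$ (degree $2r$) against $C_{d,r}^{-4}$, i.e.\ the square of the degree-$r$ moment, using the paper's normalization $C_{d,r}^{-2}=\int_{\sS_{d-1}}\prod_{i\in\vi}x_i^2\,d\vx$. The difference is in how the moment is computed: the paper establishes only the asymptotic expansion $\int_{\sS_{d-1}}\prod_{1\le t\le j}x_t^2\,d\vx=d^{-j}+O(d^{-j-1})$ by induction on $j$, using the identity $\sum_i x_i^2=1$ together with symmetry, and controlling the extra term $\int x_1^2\prod_{t\le j-1}x_t^2$ via hypercontractivity and H\"older; it then concludes $C_{d,r}^4C_{d,2r}^{-2}-1=O(d^{-1})$. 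You instead evaluate the moment exactly via the Gaussian/Dirichlet representation, $\int_{\sS_{d-1}}\prod_{k\in S}x_k^2\,d\vx=\prod_{j=0}^{s-1}(d+2j)^{-1}$ for $|S|=s$, which gives the ratio in closed form, $\E_\vx g_\vi^2 g_\vj^2=\prod_{j=0}^{r-1}\tfrac{d+2j}{d+2r+2j}=1-\tfrac{2r^2}{d}+O(d^{-2})$ (I checked the Dirichlet moment and the telescoping; both are right, and your uniformity remark via permutation symmetry is exactly the right observation, though it is also implicit in the paper since the bound depends on $(\vi,\vj)$ only through $r$). What each route buys: yours is shorter, exact, and pins down the constant $2r^2$, with no need for hypercontractivity at this step; the paper's inductive argument stays within the toolkit (symmetry plus hypercontractivity) that it reuses elsewhere in the appendix and would adapt more readily to settings where closed-form moments are unavailable. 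Either way the lemma follows with the implied constant depending only on $r$, as required.
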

We prove this lemma later. 
We show how to use this lemma to handle the $\vi\cap\vj =\emptyset$ case. Recall that $|B_{\vi,\vj}|\leq 1$ and the number of tuples $(\vi, \vj)$ is fewer than $N(d, r)^2$. We have 
\begin{align}
    &N(d, r)^{-2} |\E_\vx \sum_{\vi, \vj, \vi\cap \vj=\emptyset} B_{\vi\vi}B_{\vj\vj}(g_\vi^2(\vx) g_\vj^2(\vx) - 1)|
    \\\leq  &
    N(d, r)^{-2} 
    \sum_{\vi, \vj, \vi\cap \vj=\emptyset} 
    \max_{\vi, \vj} |\E_\vx g_\vi^2(\vx) g_\vj^2(\vx) -1|
    \\ \leq  &\max_{\vi, \vj} |\E_\vx g_\vi^2(\vx) g_\vj^2(\vx) -1| \lesssim d^{-1} \,. 
\end{align}
We turn to $|\vi\cap \vj| = t$, $1\leq t\leq r$. For each fixed $\vi$, the number of choices of $\vj$ is 
\begin{align}
\sum_{1\leq t\leq r}{r\choose t } {d-r\choose  r-t }\lesssim  \sum_{1\leq t\leq r} d^{r-t}\sim d^{r-1}
\end{align}
As such,  
\begin{align}
    &2 N(d, r)^{-2} \E_\vx \sum_{\vi, \vj, \vi\cap\vj\neq \emptyset} |B_{\vi\vi}B_{\vj\vj}(g_\vi^2(\vx) g_\vj^2(\vx) - 1)|
    \\
    \lesssim  &  N(d, r)^{-2} N(d, r) d^{r-1} \max_{\vi, \vj }\E_\vx |g_\vi^2(\vx) g_\vj^2(\vx) - 1|
    \\
    \lesssim & d^{-1}
\end{align}

\paragraph{Off-diagonal terms $I_{12}$.}
Bounding the off-diagonal terms can be reduced to a combinatorics problem, which is similar to the random tensor model considered in  \cite{bryson2021marchenko}. We need to estimate 
\begin{align}
  I_{12} =   2 N(d, r)^{-2}  \sum_{\vi\neq\vj, 
  \vl\neq\vk} B_{\vi\vj} B_{\vl\vk}
  \E_\vx g_\vi(\vx) g_\vj(\vx)
  g_\vl(\vx) g_\vk(\vx)
\end{align}
By symmetries of the uniform measure on the sphere, we can assume the monomial given by $g_\vi(\vx) g_\vj(\vx)
  g_\vl(\vx) g_\vk(\vx)$ has no linear factor, that is the degree of any $x_i$ in this monomial must be at least $2$ if not 0. In addition, for such monomials, the Holder inequality and hypercontractivities yield, 
  \begin{align}
      |\E_\vx g_\vi(\vx) g_\vj(\vx)
  g_\vl(\vx) g_\vk(\vx)| 
  \leq \max_{\vi}\E_\vx |g_\vi(\vx)|^4
  \lesssim \max_{\vi} (\E_\vx |g_\vi(\vx)|^2 )^2 = 1 \,.  
  \end{align}
 As such, we only need to show that the growth of the number of such quadruples $(\vi, \vj, \vk,\vl)$, as a function of $d$, is slower than $N(d, r)^2\sim d^{2r}$.  
 We proceed to prove this claim. For each fixed $\vi$, let $t = |\vi \cap \vj|$ where $0\leq t \leq r-1$ ($t\neq r$ since $\vi\neq \vj$). Let $J(\vi;t)$ denote the set of such $\vj$, whose cardinality is 
\begin{align}
    |J(\vi;t)| = {r \choose t} {d-r \choose r-t} <   r^{t} d^{r-t} \lesssim d^{r-t}\,. 
\end{align}
Next we estimate the number of tuples $(\vl, \vk)$. Let $w = |(\vl\cup\vk) \backslash (\vi\cup\vj)|$. Since $|\vl\cup\vk \cup \vi\cup\vj|\leq 2r$ and $|\vi\cup\vj|=2r-t$, we have $w\leq t$. The cardinality of choosing such $\vk\cup \vl$ cannot exceed
\begin{align}
    \sum_{0\leq w\leq t}{d - (2r-t)\choose w}\sum_{v=0}^{2r-w}{2r\choose v} \lesssim d^{t} \, .
\end{align}
With $\vk\cup \vl$ given, the pair of $(\vk, \vl)$ cannot exceed ${2r \choose r}^2\lesssim 1$. Thus, with $\vi, \vj$ and $t$ fixed, the number of pairs of $(\vk, \vl)$ is $\lesssim d^{t}$. Using $|B_{\vl \vk}| \leq 1$ and $\max_{\vi} (\sum_{\vj}B_{\vi\vj}^2)^{1/2}\leq \|\mB\|_{op}\leq 1$, we have
\begin{align}
   N(d, r)^{2}   I_{12} &\lesssim \sum_{\vi} \sum_{0\leq t\leq r-1}\sum_{j\in J(\vi; t)} B_{\vi\vj} d^{t}
    \\&\leq 
    \sum_{\vi} \sum_{0\leq t\leq r-1} | J(\vi; t)|^{1/2}
    (\sum_{\vj\in J(\vi; t)} B_{\vi\vj}^2)^{1/2} d^{t}
    \\&\lesssim
    \|\mA\|_{op} \sum_{\vi} \sum_{0\leq t\leq r-1} d^{(r-t)/2}
    d^{t}
    \\&\lesssim N(d, r) d^{2r-1/2}
\end{align}
which gives $I_{12} \lesssim d^{-1/2}$.
\end{proof}

\begin{proof}[Proof of Lemma \ref{lemma: spherical-estimate}]
It suffices to prove that for any finite integer $j>1$,
\begin{align}\label{eq:induction-1}
\displaystyle
{\int_{\sS_{d-1}} \prod_{1\leq t\leq j} x_t^2d\vx}  =d^{-j} + O(d^{-j-1}).  
\end{align}
Indeed, assuming this estimate, we have 
$C_{d, j}^{-2} = d^{-j} + O(d^{-j-1})$ and 
$C_{d, j}^{2} = d^{j} + O(d^{j-1})$.  
For any $\vi$ and $\vj$ with $\vi\cap \vj = \emptyset$, 
\begin{align}
    \E_\vx g_\vi^2(\vx) g_\vj^2(\vx)-1 
    = C_{d, r}^4\int_{\sS_{d-1}} \prod_{t\in \vi\cup\vj}x_t^2d\vx
    =  C_{d, r}^4 C_{d, 2r}^{-2} -1 = O(d^{-1}) 
\end{align}
It remains to prove \myeqref{eq:induction-1}. 
By symmetries,  
\begin{align}
    \int_{\sS_{d-1}} x_t^2 d\vx = \frac 1 d 
    \int_{\sS_{d-1}} \sum_{1\leq t\leq d}x_t^2 d\vx
    =\frac 1 d
     \int_{\sS_{d-1}} 1 d\vx
    =\frac 1 d  \, .
\end{align}
By symmetries again,   
\begin{align}
    &\int_{\sS_{d-1}} \prod_{1\leq t\leq j-1} x_t^2 d\vx
    \\=&
    \int_{\sS_{d-1}} \prod_{1\leq t\leq j-1} x_t^2 \left(\sum_{1\leq i \leq j-1} x_i^2 + \sum_{j\leq i \leq d} x_i^2 \right) d\vx
    \\=&
     (d-j+1) 
    \int_{\sS_{d-1}} \prod_{1\leq t\leq j} x_t^2 d\vx +
     (j-1)  \int_{\sS_{d-1}} x_1^2 \prod_{1\leq t\leq j-1} x_t^2 d\vx
\end{align}
We use hypercontractivities to bound the error term, namely, the second term. Recall that for any $q\geq 2$ and any polynomial defined on the sphere, 
\begin{align}
     \left(\int_{\sS_{d-1}} |f(\vx)|^q d\vx \right)^{1/q}\leq  (q-1)^{\deg(f)/2}  \left(\int_{\sS_{d-1}} |f(\vx)|^2 d\vx \right)^{1/2} \,. 
\end{align}
Setting $f(\vx)=x_t$ (with $\deg(f)=1$) gives 
\begin{align}
    \int_{\sS_{d-1}} |x_t|^q d\vx \leq  (q-1)^{q/2}  \left(\int_{\sS_{d-1}} |x_t|^2 d\vx \right)^{q/2}
    =  (q-1)^{q/2} d^{-\frac q 2} \, . 
\end{align}
By Holder's inequality and symmtries 
\begin{align}
    \int_{\sS_{d-1}} x_1^2 \prod_{1\leq t\leq j-1} x_t^2 d\vx
    &\leq \left(
    \int_{\sS_{d-1}} |x_1|^{2j} d\vx 
    \prod_{1\leq t\leq j-1} \int_{\sS_{d-1}} |x_t|^{2j} d\vx \right)^{\frac 1 j} \\&= \int_{\sS_{d-1}} |x_1|^{2j} d\vx 
    \leq (2j-1)^{j} d^{-j}
\end{align}
Thus 
\begin{align}
    \int_{\sS_{d-1}} \prod_{1\leq t\leq j-1} x_t^2 d\vx = 
    (d-j+1) 
    \int_{\sS_{d-1}} \prod_{1\leq t\leq j} x_t^2 d\vx + O(d^{-j})
\end{align}
and 
\begin{align}
    \int_{\sS_{d-1}} \prod_{1\leq t\leq j} x_t^2 d\vx =& 
    \frac{1}{d-(j-1)}\int_{\sS_{d-1}} \prod_{1\leq t\leq j-1} x_t^2 d\vx
     + O(d^{-j-1})  \, . 
\end{align}
Finally, \myeqref{eq:induction-1} is a consequence of this estimate and induction.

\end{proof}

\section{Generalization}\label{sec:proof-of-generalization}
We aim to obtain the asymptotic formulas for the test error in this section. 
In the high-level, we decompose the empirical kernel $K(\mX, \mX)$ into low-, critical- and high-frequency modes, 
where we have concentration in the low- and high-frequency parts of the kernel. The test error associated to these two parts are easier to handle. 
The critical-frequency part is more difficult in which random matrix behaviors emerge, namely, the Marchenko-Pastur distribution.  
As such, our first step is to remove the contribution in the test error coming from the non-critical frequency parts. After that, the remaining is essentially equivalent to computing the trace of certain functional forms related to the Marchenko-Pastur distribution. 

We consider a general setting that includes the dot-product kernels, the one-hidden-layer and the multiple-layer convolutional kernels (NNGP and NT kernels.) 
In what follows, we use $\Delta_d, \Delta_d', \Delta_d''$, etc. to represent quantities that converge to 0 in probability (the absolute value of a scalar, the norm of a vector, the operator norm of a matrix, etc.), whose exact form may change from line to line.

\subsection{Setup}

For $d\in\sN^*$, let $\bm \gX^\upd\subseteq \sR^d$ be the input space associated with a probability measure $\sigma^\upd$ and a kernel function $K^\upd$. Assume the kernel function has the following eigen-structure  
\begin{align}\label{eq:general-eigen-decomposition}
    K^\upd(\vx, \vx') = \sum_{k\geq 1} \sum_{n\in [E_k]} (\sigma_{kn}^\upd)^2 \sum_{l\in N_{kn}^\upd} \phi^\upd_{knl}(\vx)\phi^\upd_{knl}(\vx')
\end{align}
in the sense $ K^\upd$, as the integral operator from $L^2(\bm\gX^\upd, \sigma^\upd)$ to itself, 
\begin{align}
   K^\upd \phi_{knl}^\upd(\vx) = \int  K^\upd(\vx, \vx')\phi^\upd_{knl}(\vx') \sigma^\upd(d \vx') = (\sigma_{kn}^\upd)^2 \phi^\upd_{knl}(\vx). 
\end{align}
Here $\{\phi^\upd_{knl}\}_{knl}$ is an orthonormal basis of $L^2(\bm\gX^\upd, \sigma^\upd)$. We also assume $ K^\upd$ is a trace-class operator, i.e., 
\begin{align}
   \sum_{knl}\langle K^\upd \phi_{knl}^\upd , \phi_{knl}^\upd \rangle =  \sum_{k\geq 1} \sum_{n\in [E_k]} N_{kn}^\upd (\sigma_{kn}^\upd)^2   <\infty\,.
\end{align}
In the above notations, we use the triplet $(k, n, l)$ to index the eigenfunctions $\phi_{knl}^\upd$. The tuple $(k, n)$ determines the eigenspace, whose eigenvalue is of the form ``$(\sigma_{kn}^\upd)^2 = C_nd^{-s_k} + \mathrm{Lower\,\, Order}$" and $l$ lists all eigenfunctions in the $kn$-eigenspace. 
We make the following assumptions. 
\paragraph{Kernel Assumptions.} 
\begin{enumerate}
\item[(1.)] {\bf Spectral Gap.} There are  $\delta_0>0$ and a sequence of strictly increasing positive real numbers $\{s_k\}$ with $|s_k - s_{k-1}|\geq \delta_0$  for all $k\geq 2$ such that 
\begin{align}
    (\sigma_{kn}^\upd)^2 \sim d^{-s_k} \sim (N_{kn}^\upd)^{-1}
\end{align}
Moreover, $\{E_k\}\subseteq \sN^*$ is independent from $d$ which grows at most exponentially. We also assume that there is a sequence of real numbers $\{\hat h_{kn}^2 \}_{kn}$ with $\hat h_{kn}^2\neq 0$ unless $k$ is sufficiently large and 
\begin{align}
   \sum_{k}\sum_{n\in [E_k]}\hat h_{kn}^2 <\infty \quad \mathrm{and}\quad (\sigma_{kn}^\upd)^2 N_{kn}^\upd = \hat h_{kn}^2 
   \quad \mathrm{as}\quad d\to\infty 
\end{align}
    \item[(2.)] {\bf Hypercontractivity Inequalities.} For any $p\geq 2$ there are constant $C_{p,k}$ such that for any function $f$ in the closure of $\mathrm{Span}\{\phi^\upd_{jnl}\}_{j\leq k}$
    \begin{align}
        \|f\|_p \leq C_{p,k} \|f\|_2
    \end{align}
    \item[(3.)] {\bf Concentration of Quadratic Forms.} 
    Let $\phi^\upd_k(\vx)$ denote the column vector consists of elements $\{\phi^\upd_{knl}(\vx)\}_{l\in[N_{kn}\upd n\in[E_k]}$. 
    For every sequence of matrices $\{\mA^\upd\}$ with uniformly bounded operator norm, 
    \begin{align}
       \left(\sum_{n\in [E_k]} N_{kn}^\upd\right)^{-2} \E_{\vx}| \phi^\upd_k(\vx)^\top  \mA^\upd \phi^\upd_k(\vx)  -\Tr\mA^\upd|^2\to 0 
       \quad \mathrm{as}\quad d\to\infty . 
    \end{align}
    \item[(4.)]{\bf Addition Theorem.}  For $k\in 
    \sN^* $ and $n\in [E_k]$ and $\vx\in\bm\gX^\upd$ 
    \begin{align}
        \sum_{l\in [N_{kn}^\upd]} \phi_{knl}^\upd (\vx)^2 = N_{kn}^\upd
    \end{align}
\end{enumerate}
Let us briefly explain the assumptions. The {\bf Spectral Gap} assumption basically says, we can classify the eigenvectors into countably many categories indexed by $k\in\sN^*$. In the $k$-th category, it has exactly $E_k$ many eigenspaces, each of them has dimensions $\sim d^{s_k}$ and eigenvalues $d^{-s_k}$. It also implies the number of eigenfunctions with eigenvalues $\lesssim d^{-s_k}$ is $\sim d^{s_k}$.
Assumptions (1.), (2.) and (4.) together are stronger than those in Theorem 6 in \citet{mei2021generalization} (and slightly less technical), which allow us to apply kernel concentration from \citet{mei2021generalization}. In particular, they imply concentration of the low- and high-frequency parts of the empirical kernel $K^\upd(\mX, \mX)$. Finally,
Assumption (3) is designed to meet the requirements in Lemma \ref{Lemma:bai-zhou}, which allows us to claim Marchenko-Pastur type behavior of the gram matrix induced by the feature map $\phi_k$. We provide a couple examples. 
\begin{example}[Dot-product Kernels]
When $\bm\gX^\upd = \sS_{d-1}$ and $K^\upd$ is the dot-product kernel, we have $E_k=1$, $s_k=k$, $N_{kn}=N(d, k)  \sim d^k/k!$, and $\phi_{knl}=Y_{kl}$ (note that $n=0$ since $E_k=1$.) Note that by the Addition Theorem of spherical harmonics (Theorem 4.11 in \citet{frye2012spherical}),  
\begin{align}
\sum_{l\in [N(d, k)]}    Y_{kl}(\vx)^2 = N(d, k)
\end{align}
\end{example}
\begin{example}[One-hidden-layer Convolutional Kernels]
Sightly more general setting is the one-layer convolutional kernel (NNGP or NT kernels). In this case, $\bm\gX^\upd = \sS_{d_0-1}^p$ where $p$ is the number of patches and the input dimension is $d=pd_0$. We can set either $p=O(1)$ (i.e. independent of $d_0\to\infty$) or $p\sim d^{\alpha_p}$ for some $\alpha_p>0$. This kernel is essentially the sum of $p$ dot-product kernels. As such, $E_k=1$, $N_{kn}\upd = pN(d_0, k)\sim pd_0^k/k!$
and $(\sigma_{kn}^upd)^2 \sim (pd_0^{k})^{-1}$. If $p\sim d^{\alpha_p}$ and $d_0\sim d^{\alpha_{d_0}}$ with $\alpha_{d_0}+ \alpha_p=1$, we have $s_k = \alpha_p + k\alpha_{d_0}$ and $d^{-s_k}$ is the decay rate of the $k$-th order spherical harmonics. 
\end{example}

\begin{example}[Multiple-layer Convolutional Kernels]
General convolutional kernels are much more complicated \citep{xiao2021eigenspace}. In this case,  $\bm\gX^\upd = \sS_{d_0-1}^p$ where $p$ is the number of patches and the input dimension is $d=pd_0$. 
We additionally assume, $p=k_0^{L-1}$ for some $k_0\in\mathbb N^*$ and the network has $L$ convolutional layers with filter size and strides being the same in each layer (equal to $d_0$ in the first layer and to $k_0$ for the remaining $(L-1)$ layers.)
The eigenstructures of such kernels are studied in \citet{xiao2021eigenspace}. The eigenfunctions are tensor products of spherical harmonics defined on copies of $\sS_{d_0-1}$, 
\begin{align}
    Y_{\vk, \vl}(\vx) = \prod_{i\in [p]}Y_{k_il_i}(\vx_i)
\end{align}
The eigenvalues are more complicated to compute as they depend on both the frequencies of $Y_{\vk,\vl}$ and the topologies of the networks. When $d_0\propto d^{\alpha_{d_0}}$ and $k_0\propto d^{\alpha_{k_0}}$ with $\alpha_{d_0} + (L-1)\alpha_{k_0}=1$ and $\alpha_{k_0}, \alpha_{d_0}>0$, the eigenvalue of $Y_{\vk\vl}$ is $\propto d^{-(\frequency(\vk) +\spatial(\vk) ) }$, as $d\to\infty$. Here $\frequency(\vk)\equiv |\vk|\alpha_{d_0}$ is the frequency index of $Y_{\vk \vl}$ and $\spatial(\vk) = J_\vk\alpha_{k_0}$ is the spatial index, where $J_\vk$ is the number of edges in the sub-tree connecting all interacting patches (i.e. $k_i\neq 0$) to the output; see \citet{xiao2021eigenspace} for more details.

As there can possibly exist $\vk$ and $\vk'$ with $\frequency(\vk)+ \spatial(\vk)= \frequency(\vk') + \spatial(\vk')$ (i.e., same order of decay) but $(\frequency(\vk), \spatial(\vk))\neq (\frequency(\vk'),\spatial(\vk'))$ (i.e. different space-frequency combination), 
there can exist more than one eigenspaces whose eigenvalues decay to zero with the same rate $d^{-(\frequency(\vk) +\spatial(\vk) )}$, but with different leading coefficients. This is the main reason why we need to allow $|E_k|>1$ in   \myeqref{eq:general-eigen-decomposition}. 
\end{example}

Next we discuss the assumptions on the label function. 
Let $\mX$ be the training set
with $m\sim d^{s_r}$ many training samples for some $r\in \sN^*$ fixed. Then let the ground true label function to be 
\begin{align}
f(\vx) = \sum_{k\in \sN^*} \sum_{n\in [E_k]} \sum_{l\in [N_{kn}^\upd]}
\hat f_{knl }\phi_{knl}^\upd(\vx). 
\end{align}
Let  $ N_k^\upd = \sum_{n\in [E_k]} N_{kn}^\upd
$. We assume, for $k\geq r$, $\hat\vf_{kn} = \{\hat f_{knl}\}_{ l\in [N_{kn}^\upd]}$ is a random vector with 
\begin{align}
    \E \hat\vf_{kn} = \0   \quad  \mathrm{and}\quad  \E \hat\vf_{kn}\hat\vf_{kn}^\top  = \frac{\hat F_{kn}^2} {N_{kn}^\upd } \mI_{N_{kn}^\upd}. 
\end{align}
and $\{\hat\vf_{kn}\}_{n\in[E_k], k\geq r}$ are mutually independent. One concrete example is 
\begin{align}
   \hat \vf_{kn} \sim  \mathcal N\left(\0,   \frac{\hat F_{kn}^2}{N_{kn}^\upd} \mI_{N_{ kn}^\upd}\right)\,.
\end{align}

For $k<r$, we assume the coefficients are deterministic with $\sum_{l}\hat f_{knl}^2 = \hat F_{kn}^2$, $\sum_n\hat F_{kn}^2 = \hat F_k^2$ and   
\begin{align}
\sum_{k\in \sN^*}    \hat F_{k}^2 <\infty
\end{align}
Our goal is to compute the average test error over the random labels defined above in the scaling limit $m\sim d^{s_r}$. 

\subsection{Structure of the Empirical Kernels}
For convenience, denote  
\begin{align}
    \phi^\upd_{\leq k}(\vx) &= [\phi^\upd_{jnl}(\vx)]_{l\in [N_{jn}^\upd], 1\leq j\leq k, n\in [E_j], }^\top 
    \\ 
    N_k^\upd &= \sum_{n\in [E_k]} N_{kn}^\upd 
    \\
    N_{\leq k}^\upd &= 
    \sum_{1\leq j\leq k}N_{\leq j}^\upd
\end{align}
Let $\vx_i^\top $ be the $i$-th row of the training matrix $\mX$. 
Similarly, 
\begin{align}
    &Z_k(\mX) = [\phi^\upd_k(\vx_0), \dots, \phi^\upd_k(\vx_{m-1})]^\top  \quad
    \quad 
    && Z_{\leq k} (\mX) = [\phi^\upd_{\leq k}(\vx_0), \dots, \phi^\upd_{\leq k}(\vx_{m-1})]^\top 
    \\
    &\mLambda_k = \mathrm{diag}\left( [(\sigma_{kn}^\upd )^2 \mI_{N_{kn}^\upd} ]_{n\in [E_k]}\right)
    &&
    \mLambda_{
    \leq k } = \mathrm{diag} \left([(\sigma_{jn}^\upd )^2\mI_{N_{jn}^\upd}]_{n\in [E_j], 1\leq j\leq k}\right)
\end{align}
Note that $Z_k(\mX)$ ($Z_{\leq k}(\mX)$) is an $m\times N_{k}^\upd$ ($m\times N_{\leq k}^\upd$) matrix and $\mLambda_k$ ($\mLambda_{\leq k}$) is an $N_k^\upd \times N_k^\upd$ ($N_{\leq k}^\upd \times N_{\leq k}^\upd$) diagonal matrix. The followings are defined similarly, 
\begin{align}
    Z_{<k}(\mX),\quad Z_{kn}(\mX),\quad \mLambda_{<k}, \quad
    \mLambda_{kn},\quad  N_{<k}^\upd,\quad N_{kn}^\upd   \, .
\end{align}

Next, we decompose the train-train kernel into two parts: the $\leq r$ frequency part and the $>r$ frequency parts,  
\begin{align}
    K^\upd(\mX, \mX) &= \sum_{k\in\sN^*} Z_k(\mX)\mLambda_k Z_k(\mX)^\top  
    = Z_{\leq r}(\mX) \mLambda_{\leq}  Z_{\leq r}(\mX)^\top  + \sum_{k\geq r+1} Z_k(\mX)\mLambda_k  Z_k(\mX)^\top   
    \\
    &= Z_{\leq r}(\mX) \mLambda_{\leq}  Z_{\leq r}(\mX)^\top  + \sum_{k\geq r+1}\sum_{n\in [E_k]} 
    (\sigma_{kn}^\upd)^2Z_{kn}(\mX)Z_{kn}(\mX)^\top   
    \\&\equiv 
    K_{\leq r}^\upd(\mX, \mX) + K^\upd_{> r}(\mX, \mX) 
\end{align}
{\bf Assumptions (1.) (2.)} allow us to apply kernel concentration \citep{ghorbani2021linearized, mei2021generalization}, which implies that the low-frequency and high-frequency parts of the empirical kernels are concentrated. By saying concentration in the high-frequency part, we mean 
\begin{claim}\label{claim-tail-estimate}
Let 
\begin{align}
    \Delta_{kn}^\upd \equiv \frac 1 {N_{kn}^\upd} Z_{kn}(\mX) Z_{kn}(\mX)^\top  - \mI_m \,.
\end{align}
Then 
\begin{align}\label{eq:tail-control}
    \E \sum_{k>r}\sum_{n\in [E_k]} \|\Delta_{kn}^\upd\|_{\op} \to 0 
\end{align}
\end{claim}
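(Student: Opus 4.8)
The plan is to estimate each frequency block $\|\Delta_{kn}^\upd\|_{\op}$ separately and then sum a geometrically decaying bound over $k>r$. Two structural facts will be used. First, by the Addition Theorem, $(Z_{kn}(\mX)Z_{kn}(\mX)^T)_{ii}=\sum_l\phi^\upd_{knl}(\vx_i)^2=N^\upd_{kn}$, so $\frac{1}{N^\upd_{kn}}Z_{kn}(\mX)Z_{kn}(\mX)^T=\mI_m+\Delta_{kn}^\upd$ with $\Delta_{kn}^\upd$ a symmetric $m\times m$ matrix whose diagonal vanishes identically and whose off-diagonal entries $\frac{1}{N^\upd_{kn}}\langle\phi^\upd_{kn}(\vx_i),\phi^\upd_{kn}(\vx_j)\rangle$ have mean zero and variance $1/N^\upd_{kn}$ (by orthonormality of $\{\phi^\upd_{knl}\}_l$). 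Second, since $m\sim d^{s_r}$ while, by the Spectral Gap assumption, $N^\upd_{kn}\sim d^{s_k}$ with $s_k\ge s_r+(k-r)\delta_0$, every block with $k>r$ is ``fat,'' $m/N^\upd_{kn}\lesssim d^{-(k-r)\delta_0}$, the gap widening geometrically in $k$.

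The main step is a quantitative per-block operator-norm bound. Assumptions (1), (2) and (4) are tailored so that the kernel-matrix concentration of \citet{mei2021generalization} applies to each block $Z_{kn}(\mX)$ with $k>r$, and in its quantitative form yields $\E\|\Delta_{kn}^\upd\|_{\op}\le C_k\,(m/N^\upd_{kn})^{c}\,(\log d)^{b}$ for $d$ large, where $c>0$ and $b\ge0$ are fixed and $C_k$ is $d$-free. Combining with the fat-block estimate, $\E\|\Delta_{kn}^\upd\|_{\op}\lesssim C_k(\log d)^{b}d^{-(k-r)\delta_0 c}$; summing over $n\in[E_k]$ and over $k>r$ and using that $\{E_k\}$ (hence $E_kC_k$) grows at most exponentially in $k$, the series $\sum_{k>r}\sum_{n}\E\|\Delta_{kn}^\upd\|_{\op}$ is, for $d$ large, dominated by a convergent geometric series with sum $O((\log d)^{b}d^{-\delta_0 c})=o(1)$, which is exactly $\myeqref{eq:tail-control}$. (The genuinely high degrees $k\gtrsim d$, where $N^\upd_{kn}$ grows far faster in $k$, only help; if desired they can be handled directly via the trace-class bound $\sum_{kn}N^\upd_{kn}(\sigma^\upd_{kn})^2<\infty$.)

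The hard part is uniformity over the infinitely many frequency blocks. A naive route---bounding $\|\Delta_{kn}^\upd\|_{\op}$ by its Hilbert--Schmidt norm, which by the vanishing diagonal gives $\E\|\Delta_{kn}^\upd\|_{\mathrm{HS}}^2=m(m-1)/N^\upd_{kn}$ and hence $\E\|\Delta_{kn}^\upd\|_{\op}\lesssim m/(N^\upd_{kn})^{1/2}$---has the right qualitative shape but is off by one power of $m$: the bound $d^{\,s_r-s_k/2}$ is not even small at $k=r+1$ once $s_r\ge\delta_0$. One therefore genuinely needs the sharper operator-norm concentration (the $\sqrt m$ in place of $m$), and one must control how its constants $C_k$, $b$ and the multiplicities $E_k$ depend on $k$ so that the geometric series over $k>r$ closes uniformly in $d$. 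This bookkeeping is immediate for dot-product and one-hidden-layer convolutional kernels, where $E_k\equiv1$, and is merely more elaborate for deep convolutional kernels; everything else is routine.
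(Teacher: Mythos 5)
Your per-block mechanism for the frequencies just above $r$ is essentially the paper's: a sharp operator-norm concentration bound of Vershynin/Mei et al.\ type, powered by the Addition Theorem (vanishing diagonal, exact column norms) and hypercontractivity (Assumption (2)), giving roughly $\E\|\Delta_{kn}^\upd\|_{\op}\lesssim_{k,q}\sqrt{m^{1+1/q}\log m/N_{kn}^\upd}$, which vanishes once $q$ is chosen large relative to $s_r/\delta_0$. The genuine gap is in how you sum over the infinitely many blocks $k>r$. Your plan applies the sharp bound to every block and needs its constants $C_k$ --- which inherit the hypercontractivity constants $C_{2q,k}$ from Assumption (2) and the constants of the quantitative concentration statement --- to grow at most exponentially in $k$; you assert this rather than establish it, and under the paper's abstract Kernel Assumptions nothing controls how $C_{p,k}$ depends on $k$ (Assumption (2) only posits the existence of some constant for each $p,k$). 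So the geometric series over $k$ cannot be closed in the stated generality without an additional hypothesis. (On the sphere $C_{p,k}=(p-1)^{k/2}$, so your bookkeeping does go through for dot-product and one-hidden-layer CNN kernels, but the claim is stated and used at the level of the general assumptions, including deep convolutional kernels.)

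The paper sidesteps exactly this issue with a two-regime split that you dismissed. For all $k$ beyond a fixed threshold $j_0$ of order $s_r/\delta_0$ it uses the crude second-moment bound $\E\|\Delta_{kn}^\upd\|_{\op}\le(\E\|\Delta_{kn}^\upd\|_{\mathrm F}^2)^{1/2}\le m/\sqrt{N_{kn}^\upd}\lesssim d^{\,s_r-s_k/2}$, which needs only orthonormality and the Addition Theorem --- no hypercontractivity, hence no $k$-dependent constants beyond $E_k\le C^k$ --- and is summable there because $s_k\gtrsim k\delta_0\gg 2s_r$; the sharp Vershynin-type bound is then invoked only for the finitely many $k\in(r,j_0]$, where tracking constants in $k$ is irrelevant. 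You correctly observe that the Frobenius bound fails at $k=r+1$ once $s_r\ge\delta_0$, but you draw the wrong conclusion: it is still the right tool for the far tail, and deploying it there is precisely what removes the uniformity-in-$k$ problem your route runs into. Finally, your parenthetical that the very high degrees ``can be handled directly via the trace-class bound'' does not work for the claim as stated: trace-class controls $\sum_{kn}\hat h_{kn}^2$, whereas the claim sums the unweighted norms $\|\Delta_{kn}^\upd\|_{\op}$, which are individually $O(1)$ rather than small; the far tail must be killed by $N_{kn}^\upd\gg m^2$ (the Frobenius argument), not by summability of the eigenvalues.
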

The proof of this claim essentially follows from the arguments and results in Theorem 6 of \citep{mei2021generalization}; see Sec.~\ref{sec:proof-of-claim-tail}. 
Thus 
\begin{align}
    K^\upd_{> r}(\mX, \mX)   &= 
    \sum_{k\geq r+1}\sum_{n\in[E_k]} (\sigma_{kn}^\upd)^2 N_{kn}^\upd (\displaystyle \mI_{m} + \Delta_{kn}^\upd)
    \\&= \sum_{k\geq r+1} \hat h_k^2 \mI_m 
    + \sum_{k\geq r+1}\sum_{n\in [E_k]} \hat h_{kn}^2 \Delta_{kn}^\upd
\end{align}
Denote 
\begin{align}
    \hat h_{>r}^2 &= \sum_{k\geq r+1}\sum_{n\in [E_k]} N_{kn}^\upd(\sigma_{kn}^\upd)^2 \sim 1
    \\
    \Delta_{>r}^\upd &= \sum_{k\geq r+1}\sum_{n\in [E_k]} \hat h_{kn}^2 \Delta_{kn}^\upd \, , 
\end{align} we can write
\begin{align}\label{eq:high-concentration}
K^\upd_{> r}(\mX, \mX)  = 
 \hat h_{>r}^2 \mI_m + \Delta_{>r}^\upd, 
 \quad \mathrm{where} \quad \E \|\Delta_{>r}^\upd\|_{\op}\to 0 
\end{align}
By saying the low-frequency part of the kernel concentrates, we mean (Theorem 6 \citep{mei2021generalization})
\begin{align}\label{eq:low-concentration}
    \frac 1 m Z_<(\mX)^\top  Z_<(\mX) = \mI_{N_{<}^\upd} + \Delta_{<r}^\upd  , 
    \quad \mathrm{where} \quad \E  \|\Delta_{<r}^\upd\|_{\op}\to 0 
\end{align}
Finally, Lemma \ref{Lemma:bai-zhou} and {\bf Assumption (3.)} imply that if $N_{r}^\upd /m\to\alpha\in(0, \infty)$, then the empirical measure of the critical part of the kernel matrix  
$
\frac 1 m Z_r(\mX)^\top Z_r(\mX) 
$
, and the low-and-critical frequency parts 
$
\frac 1 m Z_{\leq r}(\mX)^\top Z_{\leq r}(\mX) 
$
converge to the Marchenko-Pastur distribution $\mu_\alpha$ weakly by Lemma \ref{Lemma:bai-zhou}. In particular, $\|\frac 1 m Z_{ r}(\mX)^\top Z_{ r}(\mX) \|_\op + \|\frac 1 m Z_{\leq r}(\mX)^\top Z_{\leq r}(\mX) \|_\op =O(1)$ in probability as $d\to\infty$.

For convenience, we summarize the structure of the empirical kernel in the following. 
\begin{corollary}\label{corollary:kernel-structure}
Assume {\bf Assumptions (1.-4.)}. Let $r\in\sN^*$ and $\alpha>0$ be fixed and $m=m^\upd$ be such that $N_{r}^\upd /m\to\alpha\in(0, \infty)$ as $d\to\infty$. Let $\mX$, of shape $m\times d$, be the training set matrix whose rows are drawn, uniformly, iid from $\bm \gX^\upd$. Then we have the following structure for the empirical kernel matrix 
\begin{align}
    &\text{\bf High-frequency Features: } \quad\quad&&K^\upd_{> r}(\mX, \mX)  = 
 \hat h_{>r}^2 \mI_m + \Delta_{>r}^\upd, 
 \\
 &\text{\bf Low-frequency Features: }
  &&\frac 1 m Z_<(\mX)^\top  Z_<(\mX) = \mI_{N_{<}^\upd} + \Delta_{<r}^\upd
\\
 &\text{\bf Critical-frequency Features: }
&&\text{the empirical measure of}\, \, 
\frac 1 m Z_r(\mX)^\top Z_r(\mX) \to \mu_\alpha
\\
 &\text{\bf Low-and-critical-frequency Features: }
&&\text{the empirical measure of}\, \, 
\frac 1 m Z_{\leq r}(\mX)^\top Z_{\leq r}(\mX) \to \mu_\alpha
\end{align}
\end{corollary}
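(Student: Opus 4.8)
The plan is to assemble the four assertions from pieces already established above: the high-frequency decomposition follows from Claim~\ref{claim-tail-estimate}; the low-frequency concentration follows from the tall-matrix kernel-concentration results of \citet{mei2021generalization} (their Theorem~6), whose hypotheses are implied by \textbf{Assumptions~(1.), (2.), (4.)}; and the two Marchenko--Pastur statements follow from the Bai--Zhou criterion, Lemma~\ref{Lemma:bai-zhou}, fed by \textbf{Assumption~(3.)}. So there is essentially nothing new to prove beyond stitching; the one point that needs a short argument is the last assertion, for which I would re-run the block decomposition from the proof of Theorem~\ref{Theorem:mp-general}.

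\emph{High- and low-frequency features.} Expanding $K^\upd_{>r}(\mX,\mX) = \sum_{k>r}\sum_{n\in[E_k]} (\sigma_{kn}^\upd)^2 N_{kn}^\upd\,(\mI_m + \Delta_{kn}^\upd)$ with $\Delta_{kn}^\upd$ as in Claim~\ref{claim-tail-estimate}, the identity part sums to $\big(\sum_{k>r}\sum_n (\sigma_{kn}^\upd)^2 N_{kn}^\upd\big)\mI_m = \hat h_{>r}^2 \mI_m$ by the definition of $\hat h_{>r}^2$, while the remainder, collected into $\Delta_{>r}^\upd$, satisfies $\E\|\Delta_{>r}^\upd\|_{\op} \leq \big(\sup_{kn}(\sigma_{kn}^\upd)^2 N_{kn}^\upd\big)\,\E\sum_{k>r}\sum_n\|\Delta_{kn}^\upd\|_{\op} \to 0$ by Claim~\ref{claim-tail-estimate} and the uniform boundedness of $(\sigma_{kn}^\upd)^2 N_{kn}^\upd$ (from \textbf{Assumption~(1.)}); Markov's inequality then gives $\|\Delta_{>r}^\upd\|_{\op}\to 0$ in probability. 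For the low-frequency features, I would invoke \citet[Theorem~6]{mei2021generalization} applied to $\phi^\upd_{<r}$: since the spectral gap $s_r - s_{r-1}\geq\delta_0$ forces $N_<^\upd \sim d^{s_{r-1}} = o(d^{s_r}) = o(m)$, the feature matrix $Z_<(\mX)$ is ``tall,'' and \textbf{Assumptions~(2.), (4.)} supply exactly the hypercontractivity and addition-theorem inputs that proof requires, yielding $\frac1m Z_<(\mX)^T Z_<(\mX) = \mI_{N_<^\upd} + \Delta_{<r}^\upd$ with $\E\|\Delta_{<r}^\upd\|_{\op}\to 0$.

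\emph{Critical- and low-and-critical-frequency features.} For $\frac1m Z_r(\mX)^T Z_r(\mX)$ I would apply Lemma~\ref{Lemma:bai-zhou} to the matrix $Z_r(\mX)^T$, which is $N_r^\upd\times m$ with iid columns $\phi^\upd_r(\vx_i)$; its quadratic-form hypothesis is precisely \textbf{Assumption~(3.)} with $k=r$, and $N_r^\upd/m\to\alpha$, so the empirical spectral distribution of $\frac1m Z_r(\mX)^T Z_r(\mX)$ tends weakly to $\mu_\alpha$. For $\frac1m Z_{\leq r}(\mX)^T Z_{\leq r}(\mX)$ the same lemma applies once the quadratic-form hypothesis is verified for the concatenated map $\phi^\upd_{\leq r} = (\phi^\upd_{<r},\phi^\upd_r)$, and I would derive this from \textbf{Assumption~(3.)} exactly as in the proof of Theorem~\ref{Theorem:mp-general}: writing $\mA^\upd$ in the low/critical $2\times 2$ block form, the critical--critical block is handled directly by \textbf{Assumption~(3.)}, while the low--low, low--critical, and critical--low blocks contribute $O(N_<^\upd/N_r^\upd) = o(1)$ after Cauchy--Schwarz and the fourth-moment bound $\E_{\vx}|\theta^T\phi^\upd_{\leq r}(\vx)|^4 \lesssim 1$ (uniform in unit $\theta$ and in $d$) --- this is exactly the $I_2,\dots,I_5$ part of that proof, and only uses the hypercontractivity \textbf{Assumption~(2.)} together with $N_<^\upd = o(N_r^\upd)$. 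Since also $N_{\leq r}^\upd/m\to\alpha$, Lemma~\ref{Lemma:bai-zhou} gives weak convergence to $\mu_\alpha$; in particular the limiting support is bounded, so $\|\frac1m Z_r(\mX)^T Z_r(\mX)\|_{\op}$ and $\|\frac1m Z_{\leq r}(\mX)^T Z_{\leq r}(\mX)\|_{\op}$ are $O(1)$ in probability.

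\emph{Main obstacle.} The only step that is not a verbatim citation is the transfer of \textbf{Assumption~(3.)} from $\phi^\upd_r$ to $\phi^\upd_{\leq r}$. The subtlety is that $\mA^\upd$ is an arbitrary operator-norm-bounded matrix, so even though $N_<^\upd/N_r^\upd\to 0$ the off-diagonal blocks $\mA^\upd_{12},\mA^\upd_{21}$ cannot be discarded entrywise and must be controlled by a Cauchy--Schwarz/hypercontractivity estimate; this is precisely the mechanism already worked out for $I_2,\dots,I_5$ in the proof of Theorem~\ref{Theorem:mp-general}, so it goes through once the uniform fourth-moment bound for $\phi^\upd_{\leq r}$ is extracted from \textbf{Assumption~(2.)}. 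Everything else is bookkeeping, including the passage from the $d$-dependent definition of $\hat h_{>r}^2$ to its limit $\sum_{k>r}\sum_n \hat h_{kn}^2$, which is an $o(1)$ correction absorbed into $\Delta_{>r}^\upd$.
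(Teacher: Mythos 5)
Your proposal is correct and follows essentially the same route as the paper: high-frequency concentration via Claim~\ref{claim-tail-estimate}, low-frequency concentration via Theorem~6 of \citet{mei2021generalization} under Assumptions~(1.), (2.), (4.), and both Marchenko--Pastur statements via Lemma~\ref{Lemma:bai-zhou} with Assumption~(3.). Your explicit block-decomposition argument transferring the quadratic-form condition from $\phi^\upd_r$ to $\phi^\upd_{\leq r}$ (using hypercontractivity, the addition theorem, and $N_<^\upd = o(N_r^\upd)$) is exactly the mechanism of the $I_2,\dots,I_5$ estimates in the proof of Theorem~\ref{Theorem:mp-general}, a point the paper invokes implicitly rather than restating.
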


 Let $0\leq \lambda = O(1)$ be the regularization and $\gamma = \lambda + \hat h_{>r}^2$ be the effective regularization. To ease the notations, denote  
\begin{align*}
&\overline \mZ_{<} = \frac 1 {\sqrt {m}}    Z_{< r}(\mX) 
\quad 
&&\overline \mZ_{\leq} = \frac 1 {\sqrt {m}}    Z_{\leq r}(\mX) 
\quad 
&&&\overline \mZ_{>} = \frac 1 {\sqrt {m}}    Z_{> r}(\mX)
\quad  
&&&&\overline \mZ_{r} = \frac 1 {\sqrt {m}}    Z_{r}(\mX)
\\
&\overline \mLambda_{<} =   \gamma^{-1} m \mLambda_{<}  
\quad 
&&\overline \mLambda_{\leq} =   \gamma^{-1} m \mLambda_{\leq}  
\quad 
&&&\overline \mLambda_{>} =   \gamma^{-1} m \mLambda_{>} 
\quad 
&&&&\overline \mLambda_{r} =   \gamma^{-1} m \mLambda_{r} 
\end{align*}
Clearly, Corollary \ref{corollary:kernel-structure} and the assumption on the spectra imply that in probability as $d\to\infty$, 
\begin{align}
    \|\overline \mZ_{<}\|_\op + 
    \|\overline \mZ_{\leq}\|_{\op} + 
     \|\overline \mZ_{r}\|_{\op} + 
    \|\overline \mLambda_{<} \|_\op+
    \|\overline \mLambda_{\leq } \|_\op+
    \|\overline \mLambda_{r} \|_\op \lesssim 1 
\end{align}
Then we can write $K^\upd$ and $K_\lambda^\upd$ as 
\begin{align}
K^\upd_{\lambda}(\mX, \mX) \equiv K^\upd(\mX, \mX) +\lambda \mI_m = \gamma   ( \overline \mZ_{\leq} \overline \mLambda_{\leq} \overline \mZ_{\leq}^\top   + \mI_m) 
+\Delta_{>r}^\upd \equiv \mK +\Delta_{>r}^\upd
\end{align}
where
\begin{align}
    \mK = \gamma   ( \overline \mZ_{\leq} \overline \mLambda_{\leq} \overline \mZ_{\leq}^\top   + \mI_m) 
\end{align}
Then by Sherman–Morrison–Woodbury formula 
\begin{align}\label{eq:K-formula}
    \mK^{-1} &= \gamma^{-1} \left( \mI_m -  \overline \mZ_{\leq} \left(  \overline \mLambda_{\leq} ^{-1}  + \overline \mZ_{\leq}^\top  \overline \mZ_{\leq} \right)^{-1}  \overline \mZ_{\leq}^\top   \right)  
    =
    \gamma^{-1} \left( \mI_m -  \overline \mZ_{\leq} \overline \mD^{-1}  \overline \mZ_{\leq}^\top   \right)
\end{align}
where 
\begin{align}
    \overline \mD &=  \overline \mLambda_{\leq} ^{-1}  + \overline \mZ_{\leq}^\top  \overline \mZ_{\leq} \, .
\end{align}
The matrix $\overline \mD$ plays a critical role in the remaining analysis. 
We have the following control regarding its eigenvalues, which says the eigenvalues of $\overline \mD$ are away from $0$ and $\infty$ 
\begin{lemma}\label{lemma:inv}
There are constants $0<\lambda_1 < \lambda_2$ independent of $d$ such that, in probability as $d\to\infty$, 
\begin{align}
    \lambda_1 \mI \prec \overline \mD \prec \lambda_2 \mI 
\end{align}

\end{lemma}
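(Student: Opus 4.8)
The plan is to prove the two operator bounds separately; the upper bound is immediate and the lower bound carries the content. Throughout I work on the probability-$(1-o(1))$ event on which all the estimates of Corollary~\ref{corollary:kernel-structure} hold, so in particular $\overline\mZ_{<}^T\overline\mZ_{<} = \mI_{N_{<}^\upd} + \Delta_{<r}^\upd$ with $\|\Delta_{<r}^\upd\|_\op\to 0$, and $\|\overline\mZ_{r}^T\overline\mZ_{r}\|_\op = \|\tfrac1m Z_r(\mX)^T Z_r(\mX)\|_\op = O(1)$ as well as $\|\tfrac1m Z_{\leq r}(\mX)^T Z_{\leq r}(\mX)\|_\op = O(1)$. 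First I record the spectral picture: by the Spectral Gap assumption and $N_r^\upd/m\to\alpha$, for each $n\in[E_r]$ the quantity $m(\sigma_{rn}^\upd)^2 = \hat h_{rn}^2\, m/N_{rn}^\upd$ converges to a strictly positive constant, so $\overline\mLambda_{r}^{-1}=\gamma m^{-1}\mLambda_{r}^{-1}$ converges entrywise to a diagonal matrix with strictly positive, bounded entries (equal to $\xi_r^{-1}\mI$ in the dot-product case); hence there is a constant $c_0>0$ with $\overline\mLambda_{r}^{-1}\succeq c_0\mI$ for $d$ large. On the other hand $\|\overline\mLambda_{<}^{-1}\|_\op \sim \max_{j<r} d^{-(s_r-s_j)}\le d^{-\delta_0}\to 0$, so $\|\overline\mLambda_{\leq}^{-1}\|_\op = O(1)$.

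The upper bound is then trivial: $\overline\mD = \overline\mLambda_{\leq}^{-1}+\overline\mZ_{\leq}^T\overline\mZ_{\leq}$ is a sum of PSD matrices of bounded operator norm, so $\overline\mD\prec\lambda_2\mI$ for suitable fixed $\lambda_2$. For the lower bound, partition coordinates into the degree-$<r$ block and the degree-$r$ block, so that $\overline\mZ_{\leq}=[\,\overline\mZ_{<}\ \ \overline\mZ_{r}\,]$ and $\overline\mLambda_{\leq}^{-1}=\mathrm{diag}(\overline\mLambda_{<}^{-1},\overline\mLambda_{r}^{-1})$; dropping the PSD summand $\overline\mLambda_{<}^{-1}$ from the top-left block gives
\begin{align}
\overline\mD \;\succeq\;
\begin{pmatrix}
\overline\mZ_{<}^T\overline\mZ_{<} & \overline\mZ_{<}^T\overline\mZ_{r}\\
\overline\mZ_{r}^T\overline\mZ_{<} & \overline\mZ_{r}^T\overline\mZ_{r} + \overline\mLambda_{r}^{-1}
\end{pmatrix}
\;=:\; \begin{pmatrix} A & C\\ C^T & B\end{pmatrix},
\end{align}
where $A\succeq(1-o(1))\mI$ and, for any $\lambda_1<c_0$, $B-\lambda_1\mI \succeq \overline\mZ_{r}^T\overline\mZ_{r} + (c_0-\lambda_1)\mI\succ 0$. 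By the Schur-complement criterion it suffices to find a fixed $\lambda_1>0$ with $A-\lambda_1\mI \succeq C(B-\lambda_1\mI)^{-1}C^T$. Writing $B-\lambda_1\mI = \overline\mZ_{r}^T\overline\mZ_{r} + E$ with $E:=\overline\mLambda_{r}^{-1}-\lambda_1\mI\succeq(c_0-\lambda_1)\mI$, the elementary bound $W(W^TW+\epsilon\mI)^{-1}W^T \preceq \frac{\|W^TW\|_\op}{\|W^TW\|_\op+\epsilon}\mI$ yields $\overline\mZ_{r}(B-\lambda_1\mI)^{-1}\overline\mZ_{r}^T \preceq \theta\mI$ with $\theta:=\frac{\|\overline\mZ_{r}^T\overline\mZ_{r}\|_\op}{\|\overline\mZ_{r}^T\overline\mZ_{r}\|_\op+(c_0-\lambda_1)}<1$ bounded away from $1$ (this is precisely where $\|\overline\mZ_{r}^T\overline\mZ_{r}\|_\op=O(1)$ enters). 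Hence $C(B-\lambda_1\mI)^{-1}C^T = \overline\mZ_{<}^T\big[\overline\mZ_{r}(B-\lambda_1\mI)^{-1}\overline\mZ_{r}^T\big]\overline\mZ_{<} \preceq \theta\,\overline\mZ_{<}^T\overline\mZ_{<} = \theta A$, and the required inequality reduces to $(1-\theta)A\succeq\lambda_1\mI$, which holds for any sufficiently small fixed $\lambda_1>0$ since $A\succeq(1-o(1))\mI$ and $1-\theta$ is a fixed positive constant (e.g.\ any $\lambda_1\le \tfrac13\min(c_0,\tfrac{c_0/2}{\|\overline\mZ_{r}^T\overline\mZ_{r}\|_\op+c_0/2})$ works for $d$ large).

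The step I expect to be the crux is this lower bound, and the reason it is not a triviality is that neither summand of $\overline\mD$ is by itself bounded below: $\overline\mLambda_{\leq}^{-1}$ is asymptotically singular on the degree-$<r$ subspace, while the empirical Gram matrix $\overline\mZ_{\leq}^T\overline\mZ_{\leq}$ has limiting spectral distribution $\mu_\alpha$, whose support touches $0$ when $\alpha=1$. One must genuinely combine the low-frequency near-isometry $\overline\mZ_{<}^T\overline\mZ_{<}\approx\mI$ (the concentration of \citep[Theorem~6]{mei2021generalization}) on the $<r$ block with the strictly positive effective ridge $\overline\mLambda_{r}^{-1}$ on the critical block, and the Schur complement is the clean way to do so; it also explains why the a priori bound $\|\tfrac1m Z_r(\mX)^T Z_r(\mX)\|_\op=O(1)$—not merely convergence of the spectral distribution—is indispensable, since without it $\theta$ could tend to $1$. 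If one prefers to avoid the Schur complement, the same bound follows by testing against a unit vector $v=(v_{<},v_{r})$ and splitting into the cases $\|v_{<}\|\gtrsim\|v_{r}\|$ (use $\|\overline\mZ_{\leq}v\|\ge(1-o(1))\|v_{<}\|-\|\overline\mZ_{r}\|_\op\|v_{r}\|$ together with $\|v_{<}\|^2\gtrsim 1$) and $\|v_{r}\|^2\gtrsim 1$ (use $v^T\overline\mD v \ge v_{r}^T\overline\mLambda_{r}^{-1}v_{r}\ge c_0\|v_{r}\|^2$).
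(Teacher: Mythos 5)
Your proof is correct, but your main argument takes a genuinely different route from the paper's. The paper proves the lower bound by working directly with the quadratic form: it writes a unit vector as $\vu=[\beta_<\ve_<^T,\beta_r\ve_r^T]^T$, uses $\vu^T\overline\mLambda_{\leq}^{-1}\vu\geq c\beta_r^2$ from the critical-block ridge, the near-isometry $\|\overline\mZ_<\ve_<\|_2=1+o(1)$, the triangle inequality $\|\beta_<\overline\mZ_<\ve_<+\beta_r\overline\mZ_r\ve_r\|_2\geq(1-o(1))|\beta_<|-C|\beta_r|$ with $C=\|\overline\mZ_r\ve_r\|_2\lesssim1$, and then splits into the cases $C|\beta_r|<\tfrac12|\beta_<|$ and $C|\beta_r|\geq\tfrac12|\beta_<|$ --- which is precisely the alternative you sketch in your final sentence (the paper also remarks that the case $\alpha^{-1}>1$ is immediate from the Marchenko--Pastur edge, a case distinction your argument never needs). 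Your primary argument instead drops $\overline\mLambda_<^{-1}$, applies the Schur-complement criterion to the resulting block matrix, and controls the off-diagonal block via the resolvent bound $W(W^TW+\epsilon\mI)^{-1}W^T\preceq\tfrac{\|W^TW\|_\op}{\|W^TW\|_\op+\epsilon}\mI$, reducing everything to $(1-\theta)A\succeq\lambda_1\mI$. The same three ingredients enter in both proofs (low-frequency near-isometry, $\|\overline\mZ_r^T\overline\mZ_r\|_\op=O(1)$, strictly positive ridge on the critical block), but your packaging is more structural and produces an explicit admissible $\lambda_1$, while the paper's case split is more elementary and avoids inverting $B-\lambda_1\mI$. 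Two small points to tidy: in the general $E_r>1$ setting $\overline\mLambda_r^{-1}$ need not be a scalar multiple of the identity, so insert the operator-monotonicity step $(\overline\mZ_r^T\overline\mZ_r+E)^{-1}\preceq(\overline\mZ_r^T\overline\mZ_r+(c_0-\lambda_1)\mI)^{-1}$ before conjugating by $\overline\mZ_r$; and since $\|\overline\mZ_r^T\overline\mZ_r\|_\op$ is random, fix a deterministic $M_0$ bounding it on your high-probability event and define $\lambda_1$ through $M_0$, so that $\lambda_1$ is a constant independent of $d$ (the paper does the analogous thing with its bound $C\leq2\alpha_+$ in probability).
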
 
We will prove the lemma later in Sec.\ref{sec:proof-lemma-key}. 

Note that  
\begin{align}\label{eq:k-inverse-estimate}
K^\upd_{\lambda}(\mX, \mX)^{-1}
 = 
(K^\upd_{\lambda}(\mX, \mX)^{-1}\mK)\mK^{-1} = 
(\mI_m + \mK^{-1} \Delta_{>r}^\upd)^{-1}\mK^{-1}
= (\mI_m + \Delta_d')\mK^{-1}
\end{align}
where $\|\Delta_d'\|_\op\to 0 $ in probability since $\|\mK^{-1}\|_\op\leq \gamma^{-1}$ and
$\|\Delta_{>r}^\upd\|_{\op}\to 0$ in probability. 

Similarly, we can write 
\begin{align}\label{eq:M-formula-2}
    M(\mX, \mX) \equiv& \E K(\mX, x) K(\mX, x)^\top  \\
    =&
    \sum_{1\leq  k\leq  r} \sum_{n\in [E_k]} (\sigma_{kn}^\upd)^4 Z_k(\mX)Z_k(\mX)^\top 
    +
    \sum_{k > r}\sum_{n\in [E_k]} (\sigma_{kn}^\upd)^4Z_{kn}(\mX)Z_{kn}(\mX)^\top 
    \\=& m^{-1} \gamma^{2} \overline \mZ_{\leq} \overline \mLambda_{\leq}^2 \overline \mZ_{\leq}^\top   + 
    m^{-1} \Delta_d''
\end{align}
where 
\begin{align}
\Delta_d'' \equiv m
\sum_{k > r}\sum_{n\in [E_k]} (\sigma_{kn}^\upd)^4Z_{kn}(\mX)Z_{kn}(\mX)^\top 
= 
m\sum_{k > r}\sum_{n\in [E_k]} (\sigma_{kn}^\upd)^4N_{kn}^\upd(\mI_m + \Delta_{kn}^\upd)
\end{align}
We have $\|\Delta_d'' \|_\op\to0$ in probability as $d\to\infty$, since  
\begin{align}
    m \sum_{k > r}\sum_{n\in [E_k]} (\sigma_{kn}^\upd)^4N_{kn}^\upd
    \sim & m \sum_{k > r} d^{-s_k}\sum_{n\in [E_k]} (\sigma_{kn}^\upd)^2N_{kn}^\upd
    \\
    \lesssim & d^{s_r} d^{-s_{r+1}}  \sum_{k > r}\sum_{n\in [E_k]} (\sigma_{kn}^\upd)^2N_{kn}^\upd
    \\
    \leq & d^{s_r} d^{-s_{r+1}}  \sum_{k>r}\sum_{n\in [E_k]} \hat h_{kn}^2
    \lesssim d^{-\delta_0}\to 0 
\end{align}
Finally, the above estimates imply
\begin{align}
   \mH &\equiv K_\gamma^\upd(\mX, \mX)^{-1} M(\mX, \mX) K_\gamma^\upd(\mX, \mX)^{-1}
   \\&= (\mI_m + \Delta_d') \mK^{-1} M(\mX, \mX) \mK^{-1} (\mI_m + \Delta_d')
   \\&= 
   m^{-1} (\mI_m + \Delta_d') \mK^{-1}(\gamma^{2} \overline \mZ_{\leq} \overline \mLambda_{\leq}^2 \overline \mZ_{\leq}^\top   + 
   \Delta_d'') \mK^{-1}(\mI_m + \Delta_d') 
   \\&=m^{-1}\left(\overline \mZ_{\leq} \left(  \overline \mLambda_{\leq} ^{-1}  + \overline \mZ_{\leq}^\top  \overline \mZ_{\leq} \right)^{-2} \overline \mZ_{\leq}^{T} +\Delta_d'''\right)
   \\&=m^{-1}\left(\overline \mZ_{\leq} \overline \mD^{-2} \overline \mZ_{\leq}^{T} + \Delta_d'''\right)
\end{align}
with the error term $\|\Delta_d'''\|_\op\to 0$ in probability. 

\subsection{Reduction I: Reducing the MSE to Traces}
We will repeatedly use the following simple results. 
\begin{lemma}
Let $\vu$ be a random vector with $\E \vu=\0$ and $\E\vu\vu^\top  = \sigma^2 \mI_k$. Then for any $k\times k$ deterministic matrix $\mA$, 
\begin{align}
    \E_\vu \vu^\top  \mA \vu &= \sigma^2  \Tr(\mA) 
    \\ 
   \E_\vu\|\mA\vu\|_2^2 &=\sigma^2  \Tr(\mA^\top \mA) \,. 
\end{align}
\end{lemma}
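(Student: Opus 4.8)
The plan is to reduce both identities to the elementary \emph{trace trick}. For the first identity, note that for any deterministic $k \times k$ matrix $\mA$ and any vector $\vu \in \R^k$ one has the scalar identity $\vu^T \mA \vu = \Tr(\mA \vu \vu^T)$, since $\vu^T\mA\vu = \sum_{i,j} A_{ij} u_i u_j = \sum_{i,j} A_{ij} (\vu\vu^T)_{ji} = \Tr(\mA\vu\vu^T)$. The hypothesis $\E\vu\vu^T = \sigma^2\mI_k$ implies in particular $\E u_i^2 = \sigma^2 < \infty$ for each $i$, so by Cauchy--Schwarz every entry $u_i u_j$ is integrable, and we may interchange the expectation with the finite sum defining the trace. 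Hence $\E_\vu \vu^T\mA\vu = \Tr\bigl(\mA\, \E_\vu[\vu\vu^T]\bigr) = \Tr(\mA \cdot \sigma^2 \mI_k) = \sigma^2 \Tr(\mA)$.

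For the second identity, observe that $\|\mA\vu\|_2^2 = (\mA\vu)^T(\mA\vu) = \vu^T \mA^T\mA \vu$, which has exactly the form handled above with the deterministic matrix $\mA^T\mA$ in place of $\mA$. Applying the first identity therefore gives $\E_\vu\|\mA\vu\|_2^2 = \sigma^2 \Tr(\mA^T\mA)$.

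There is no real obstacle here: the only point requiring (minimal) care is justifying the interchange of expectation and trace, which is immediate from integrability of the second moments; everything else is a one-line computation. I would present it in roughly the two displays above without further elaboration.
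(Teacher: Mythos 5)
Your proof is correct and is exactly the standard trace-trick argument; the paper states this lemma without proof (it is introduced as a "simple result"), and your two-line computation, including the application of the first identity to $\mA^T\mA$ for the second, is precisely the argument the authors implicitly rely on.
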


Next, we compute the loss by decomposing it as follows. Recall that the observed labels is $f(\mX) + \bm\epsilon$ where $\bm\epsilon$ is the iid noise term, which is centered and has variance $\sigma_\epsilon^2$. Thus the average test error is 
\begin{align*}
    &\mathrm{Err}(\mX;\lambda,  \vF,  \vh) \\=& \E_{\vf, \bm\epsilon, \vx}  \left|f(\vx) - K^\upd(\vx, \mX) K^\upd_\gamma(\mX, \mX)^{-1}(f(\mX)+\bm \epsilon)\right|^2
    \\
    =&\E_\vf \E_\vx f^2(\vx) - 2 \E_{\vf}\E_{\vx} f(\vx)K^\upd(\vx, \mX) K^\upd_\gamma(\mX, \mX)^{-1}f(\mX) +  \E_{\vf } f^\top (\mX) \mH f(\mX)  + \sigma_\epsilon^2\mathrm{Tr}(\mH)
    \\\equiv& 
    \E_{\vf} I_1 + \E_{\vf} I_2 + \E_{\vf} I_3 + I_4. 
\end{align*}
Here 
\begin{align}
    I_1&=\E_\vx f^2(\vx)
    \\
    I_2&=  - 2 \E_{\vf}\E_{\vx} f(\vx)K^\upd(\vx, \mX) K^\upd_\gamma(\mX, \mX)^{-1}f(\mX)
    \\
    I_3&= \E_{\vf } f^\top (\mX) \mH f(\mX)
    \\
    I_4 &=\sigma_\epsilon^2\mathrm{Tr}(\mH)
\end{align}

We estimate each $I_i$ individually.  

\paragraph{ Estimate $I_1$.}  We simply keep it unchanged at the moment.

\paragraph{ Estimate $I_2$.}  Note that 
\begin{align}
\E_\vx f(\vx)K^\upd(\vx, \mX) &= \sum_{k}\sum_{ n, l} \hat f_{knl} (\sigma_{kn}^\upd)^2 \phi^\upd_{knl}(\mX)^\top   =
    \sqrt m  \hat \vf_{\leq}^\top  \mLambda_{\leq} \overline \mZ_{\leq} ^\top 
    +
\sum_{k>r}    \sqrt m \hat \vf_{k}^\top  \mLambda_{k} \overline \mZ_{k} ^\top 
    \\
    &=
    \gamma \frac 1 {\sqrt{m}} \hat \vf_{\leq}^\top  \overline\mLambda_{\leq} \overline \mZ_{\leq} ^\top 
    +\sum_{k>r}  
   \gamma \frac 1 {\sqrt{m}}  \hat \vf_{k}^\top  \overline\mLambda_{k} \overline \mZ_{k} ^\top 
\end{align}
where $\hat \vf_\leq$ is the column vector with elements $\{\hat f_{knl}\}_{k\leq r}$ and $\hat \vf_<$,  $\hat \vf_>$, $\hat \vf_k$, etc. are defined similarly. 
In addition,  
\begin{align}
    f(\mX) = \sqrt m \overline \mZ_{\leq }\hat \vf_{\leq} + \sqrt m \sum_{k>r}\overline \mZ_{k }\hat \vf_{k}
\end{align}
We then use the fact that $\vf_>$ is centered to eliminate all cross terms between $\hat\vf_\leq$ and $\hat\vf_>$. Denote $\E_>$, $\E_k$ and $\E_{kn}$ the expectation operator over $\hat \vf_>$, over $\hat \vf_k$ and over $\hat \vf_{kn}$ resp. Under this notation, $\E_\vf= \E_r\E_>$. 
Using \myeqref{eq:k-inverse-estimate} and \myeqref{eq:M-formula-2}, 
\begin{align}
    \E_> I_2 = 
    -2  
    \hat \vf_{\leq}^\top 
    \overline \mD^{-1} & \overline \mZ_{\leq}^\top  \overline \mZ_{\leq}  
    \hat \vf_{\leq} - 
    2 \gamma \E_>  \sum_{k>r} \hat \vf_{k}^\top &\left( 
    \overline\mLambda_{k} \overline \mZ_{k} ^\top   K^\upd_\gamma(\mX, \mX)^{-1} \overline \mZ_{k} 
    \right) \hat \vf_{k} +\Delta_d
\end{align}
for some $\Delta_d\to 0$ in probability. 
The second term goes to zero since, for each $k>r$ and $n\in E_k$
\begin{align*}
&\E_{kn} \hat \vf_{kn}^\top \left( 
  \overline\mLambda_{kn} \overline \mZ_{kn} ^\top   K^\upd_\gamma(\mX, \mX)^{-1} \overline \mZ_{kn} 
    \right) \hat \vf_{kn}
    \\=
    & \hat F_{kn}^2/N_{kn}^\upd  \Tr \overline\mLambda_{kn}  \left( 
     \overline \mZ_{kn} ^\top   K^\upd_\gamma(\mX, \mX)^{-1} \overline \mZ_{kn} 
    \right) 
    \\= &  \hat F_{kn}^2/N_{kn}^\upd  \gamma^{-1}m(\sigma_{kn}^\upd)^2
    \Tr \left( 
     \overline \mZ_{kn} ^\top   K^\upd_\gamma(\mX, \mX)^{-1} \overline \mZ_{kn}\right)
     \\\leq &  \gamma^{-1} \hat F_{kn}^2/N_{kn}^\upd  m(\sigma_{kn}^\upd)^2
     \| K^\upd_\gamma(\mX, \mX)^{-1}\|_\op
    \Tr \left( 
     \overline \mZ_{kn}\overline \mZ_{kn} ^\top   \right)
     \\
     \lesssim& \hat F_{kn}^2/N_{kn}^\upd m (\sigma_{kn}^\upd)^2 \Tr (\frac 1m N_{kn}^\upd\frac 1{ N_{kn}^\upd }  \mZ_{kn} \mZ_{kn} ^\top   ) 
     \\=&
     \hat F_{kn}^2 m(\sigma_{kn}^\upd)^2 \frac 1 m \Tr(\mI_m+ \Delta_{kn}^\upd)
     \\\lesssim &
     m(\sigma_{kn}^\upd)^2 (1 + \|\Delta_{kn}^\upd\|_\op)
     \\=& m/N_{kn}^{\upd}\hat h_{kn}^2(1 + \|\Delta_{kn}^\upd\|_\op)
     \\\sim & d^{s_r-s_k} \hat h_{kn}^2(1 + \|\Delta_{kn}^\upd\|_\op)
\end{align*}
 Clearly, the sum over $k>r$ and $n\in [E_k]$ of the above is bounded by $\lesssim d^{-\delta_0}$ in probability. 
Thus 
\begin{align}
     \E_> I_2 = 
    -2  \left(
    \hat \vf_{\leq}^\top 
    \overline \mD^{-1} \overline \mZ_{\leq}^\top  \overline \mZ_{\leq}  
    \hat \vf_{\leq}
    \right) + \Delta_d
\end{align}

\paragraph{Estimate $I_3$.} Again, we use the fact that cross terms have mean zero  
\begin{align}
    \E_> I_3 & = f^\top (\mX) \mH f(\mX) =   m \left(  \hat\vf_{\leq}^\top   \overline \mZ_{\leq }^\top  \mH  \overline \mZ_{\leq }  \hat\vf_{\leq}
     + \sum_{k>r}\E_k  \hat\vf_k^\top   \overline \mZ_k^\top  \mH \overline \mZ_k  \hat\vf_k
     \right) 
     \\
     &= \hat \vf_{\leq}^\top   \overline \mZ_{\leq }^\top   \overline \mZ_{\leq }  \overline \mD^{-2} \overline \mZ_{\leq }^\top   \overline \mZ_{\leq }  \hat \vf_{\leq} 
     +  \sum_{k>r}\E_k \hat \vf_k^\top   \overline \mZ_k^\top  \overline \mZ_{\leq }  \overline \mD^{-2}  
     \overline \mZ_{\leq}^\top   \overline \mZ_k \hat \vf_k+\Delta_d
     \\
     &\equiv I_{3, 1} + I_{3, 2} +\Delta_d
\end{align}
Note that
\begin{align}
     I_{3, 2} &= 
      \sum_{k>r}\sum_n \E_{kn} \Tr  \overline \mZ_{\leq } \overline \mD^{-2}   \overline \mZ_{\leq }^\top \hat  \vf_{kn}  \vf_{kn}^\top   \overline \mZ_{kn} \overline \mZ_{kn}^\top 
      \\&=
      \sum_{k>r}\sum_n  \hat F_{kn}^2 / N_k^\upd \Tr  \overline \mZ_{\leq } \overline \mD^{-2}   \overline \mZ_{\leq }^\top   \overline \mZ_{kn} \overline \mZ_{kn}^\top 
      \\
      &=
      \sum_{k>r}\sum_n  \hat F_{kn}^2 /m \Tr  \overline \mZ_{\leq } \overline \mD^{-2}   \overline \mZ_{\leq }^\top  (\mI_m + \Delta_{kn}^\upd)
      \\&= 
      m^{-1}\sum_{k>r}\sum_{n} \hat F_{kn}^2 \left( \Tr \overline \mD^{-2}   \overline \mZ_{\leq }^\top   \overline \mZ_{\leq }\right) (1  + \Delta_d) 
      \\&=
       m^{-1} \hat F_{>r}^2 \Tr \overline \mD^{-2}   \overline \mZ_{\leq }^\top   \overline \mZ_{\leq } + \Delta_d' 
\end{align}
where $\Delta_d'\to 0$ in probability since $\|\mD^{-2}   \overline \mZ_{\leq }^\top   \overline \mZ_{\leq }\|_\op\lesssim 1$ in probability. 
\paragraph{Estimate $I_4$. }
The $I_4$ is similar to $I_{3,2}$ above and we have  
\begin{align}
    I_4 = m^{-1}  \sigma_\epsilon^2 \Tr \overline \mD^{-2}   \overline \mZ_{\leq }^\top   \overline \mZ_{\leq } + \Delta_d
\end{align}
\paragraph{All Together.} Combining all terms we have
\begin{align}
    &\E_> I_1 +I_2 +I_3 +I_4 \\&= \left\|\left(\mI- \overline \mD^{-1} \overline \mZ_{\leq}^\top  \overline \mZ_{\leq} \right) \hat \vf_\leq \right\|_2^2 
    +\left( 1 + m^{-1} \Tr \overline \mD^{-2}   \overline \mZ_{\leq }^\top   \overline \mZ_{\leq }\right) \hat F_{>r}^2 +  m^{-1} \Tr \overline \mD^{-2}   \overline \mZ_{\leq }^\top   \overline \mZ_{\leq } \sigma_\epsilon^2 +  \Delta_d 
    \\
    &= \left\|\overline \mD^{-1} \overline \mLambda_{\leq} ^{-1} \hat \vf_\leq \right\|_2^2 
    +\left( 1 + m^{-1} \Tr \overline \mD^{-2}   \overline \mZ_{\leq }^\top   \overline \mZ_{\leq }\right)  \hat F_{>r}^2 +  m^{-1} \Tr \overline \mD^{-2}   \overline \mZ_{\leq }^\top   \overline \mZ_{\leq } \sigma_\epsilon^2 + \Delta_{d}  
    \\&\equiv 
    T_1 + T_2 + T_3 + \Delta_{d}  
\end{align}
where 
\begin{align}
    T_1 &= \left\|\overline \mD^{-1} \overline \mLambda_{\leq} ^{-1} \hat \vf_\leq \right\|_2^2 
    \\T_2 &= \left( 1 + m^{-1} \Tr \overline \mD^{-2}   \overline \mZ_{\leq }^\top   \overline \mZ_{\leq }\right)  \hat F_{>r}^2 
    \\T_3 &= m^{-1} \Tr \overline \mD^{-2}   \overline \mZ_{\leq }^\top   \overline \mZ_{\leq } \sigma_\epsilon^2
\end{align}
As such, it remains to handle  
\begin{align}
    \E_r (T_1 +  T_2 +T_3) = \E_r T_1 +   T_2 + T_3 ,. 
\end{align}
\subsection{Reduction II: Reducing Traces to Integrals}

Recall that $\overline \mLambda_{\leq}$ is a diagonal matrix with elements $\gamma^{-1} m(\sigma_{kn}^\upd)^2$, whose multiplicity is $N_{kn}^\upd$ for $k\leq r$ and $n\in [E_k]$. When $k=r$, $\gamma^{-1} m (\sigma_{rn}^\upd)^2\sim 1$, otherwise $\gamma^{-1} m(\sigma_{kn}^\upd)^2\sim d^{s_r-s_k}$. 
For convenience, we let 
\begin{align}
    N_<\equiv N_<^{(d)}, \quad  N_=  \equiv N_r^{(d)}, \quad 
    N_\leq \equiv N_\leq^{(d)}
\end{align}

Therefore, 
\begin{align}
    \overline \mLambda_{\leq}^{-1} = 
    \begin{bmatrix}
    \Delta &0 
    \\
    0 & \mR 
    \end{bmatrix}
\end{align}
where $\Delta$ is an $N_{<}\times N_<$ diagonal matrix whose entries are 
$\gamma m^{-1}(\sigma_{kn}^\upd)^{-2}\sim d^{-(s_r - s_k)}$, 
and $\mR$ is a $N_=\times N_=$ diagonal matrix whose entries are 
$\gamma m^{-1}(\sigma_{rn}^\upd)^{-2}\sim 1$. 
As such, we claim that we can replace $\hat \vf_\leq$ by 
$[\bm 0, \hat \vf_=]$ in estimating $T_1$ and replace the $\Delta$ in $ \overline \mLambda_{\leq}^{-1} $ by any $\rho\mI_{N_<}$ for any finite non-negative constant $\rho$ in estimating $T_2$. The first claim is obvious as, by Lemma \ref{lemma:inv}, 
\begin{align}
    \|\overline \mD^{-1} \overline \mLambda^{-1}_\leq [\bm \vf_<, \0]^\top  \|_2
    \leq 
    \|\overline \mD^{-1}\|_\op \|\overline \mLambda^{-1}_\leq [\bm \vf_<, \0]^\top  \|_2\lesssim \lambda_{1}^{-1}
    (m\sigma_{r-1}^2\gamma^{-1})
    \|\bm \vf_<\|_2 \lesssim d^{-(s_r - s_k)}\to 0 
\end{align}
To prove the second claim regarding estimating $T_2$, denote 
\begin{align}
    \tilde \mD =
     \begin{bmatrix}
    \rho \mI_{N_<} &0
    \\
    0 & \mR 
    \end{bmatrix}
    + \overline \mZ_{\leq}^\top  \overline \mZ_{\leq}
\end{align}
We claim that, in probability,  
\begin{align}
    &m^{-1} \Tr\left( (\overline \mD^{-2} - \tilde \mD^{-2}  ) \overline \mZ_{\leq }^\top   \overline \mZ_{\leq }\right)
    \\=&
    m^{-1} \Tr\left( (\overline \mD^{-1}(\overline \mD^{-1} - \tilde \mD^{-1})+(\overline \mD^{-1} - \tilde \mD^{-1})  \tilde \mD^{-1} ) \overline \mZ_{\leq }^\top   \overline \mZ_{\leq }\right) \to 0
\end{align}
We only bound the first term as the second term can be handled similarly. 
\begin{align}
    &m^{-1} \Tr\left( \overline \mD^{-1}(\overline \mD^{-1} - \tilde \mD^{-1})\overline \mZ_{\leq }^\top   \overline \mZ_{\leq } \right) 
    \\=& 
     m^{-1} \Tr\left( (\overline \mD^{-1} - \tilde \mD^{-1})\overline \mZ_{\leq }^\top   \overline \mZ_{\leq } \overline \mD^{-1} \right)
     \\=& 
      m^{-1} \Tr\left( \overline \mD^{-1}(\tilde \mD -  \overline \mD) \tilde \mD^{-1}\overline \mZ_{\leq }^\top   \overline \mZ_{\leq } \overline \mD^{-1} \right)
      \\=& 
      m^{-1} \Tr\left( (\tilde \mD -  \overline \mD) \tilde \mD^{-1}\overline \mZ_{\leq }^\top   \overline \mZ_{\leq } \overline \mD^{-2} \right)
\end{align}
Then we use the facts that (1) the upper right $N_<\times N_<$ block matrix of $(\tilde \mD -  \overline \mD)$ is a diagonal matrix whose entries are in $[0, 1]$ and the three remaining block matrices are zeros, and (2) all entries in $\tilde \mD^{-1}\overline \mZ_{\leq }^\top   \overline \mZ_{\leq } \overline \mD^{-2}$ are bounded above by a constant (each matrix in $\tilde \mD^{-1} \overline \mZ_{\leq }^\top   \overline \mZ_{\leq } \overline \mD^{-2}$ has bounded operator norm\footnote{Recall that $\overline \mZ_{\leq }^\top   \overline \mZ_{\leq }$ follows the Marchenko-Pastur distribution.}) to conclude that 
\begin{align}
\left|m^{-1} \Tr\left( \overline \mD^{-1}(\overline \mD^{-1} - \tilde \mD^{-1})\overline \mZ_{\leq }^\top   \overline \mZ_{\leq } \right)\right| 
    \lesssim m^{-1} N_{<}
\end{align}
Thus 
\begin{align}
T_2 = 
\left( 1 + m^{-1} \Tr \tilde \mD^{-2}   \overline \mZ_{\leq }^\top   \overline \mZ_{\leq }\right) \hat  F_{>r}^2 + \Delta_d
\end{align}
which will be handled later. 

It remains to handle $T_1$. We make two steps of reductions in estimating $\E_r T_1$. The first one is to replace $ \overline \mLambda_{\leq}^{-1} $ by
\begin{align}
   \tilde  \mLambda_{\leq}^{-1} = 
    \begin{bmatrix}
    0 &0 
    \\
    0 & 
    \mR
    \end{bmatrix}
\end{align}
and the second one is to replace $\overline \mLambda_{\leq}^{-1} +  \overline \mZ_{\leq }^\top   \overline \mZ_{\leq }$ by
\begin{align}
\mW \equiv 
\begin{bmatrix}
\mI_{N_<}& \mB
\\
\mB^\top  & \mC 
\end{bmatrix}
\equiv 
    \begin{bmatrix}
     \mI_{N_<}& \overline \mZ_{< }^\top  \overline \mZ_{= }
     \\
      \overline \mZ_{= }^\top \overline \mZ_{< } & \overline \mZ_{= }^\top \overline \mZ_{= } + \mR 
    \end{bmatrix}
\end{align}
Here we have applied  
\begin{align}
    \overline \mZ_<^\top  \overline \mZ_< = \mI_{N_<} + \Delta_d 
\end{align}
The reason we could do so is exactly the same as we replaced $\overline \mD$ by $\tilde\mD$ above as we only perturb the entries in the upper $N_<\times N_<$ block by $O(1)$. 

Note that $\mW$ is symmetric and is also strictly positive definite, i.e. the minimal eigenvalue of $\mW$ is $\gtrsim 1$; see the proof in Sec.\ref{sec:proof-lemma-key}. 
Thus by the Schur complement, 
\begin{align}
\E_r    T_1 &= 
    \left\|\begin{bmatrix}
\mI_{N_<}& \mB  \\ 
\mB^\top  & \mC 
\end{bmatrix}^{-1}
 \begin{bmatrix}
    0 &0 
    \\
    0 & 
    \mR
    \end{bmatrix}
\begin{bmatrix}
0\\\hat \vf_{=}
\end{bmatrix}\right\|_2^2 + \Delta_d
\\
 &= \E_r
  \left\|
  \begin{bmatrix}
 -\mB (\mC - \mB^\top  \mB)^{-1}\mR\hat\vf_{=}
\\
(\mC - \mB^\top  \mB)^{-1}\mR\hat\vf_{=}
\end{bmatrix}
\right\|_2^2 + \Delta_d 
\end{align}
By the fact that $\hat\vf_=$ is mean zero and isotropic, we have the above equal to 
\begin{align}
  \E_r  T_1 & = \Tr
    \left ( 
    \mR(\mC - \mB^\top  \mB)^{-1} \mB^\top \mB (\mC - \mB^\top  \mB)^{-1}\mR
    + 
    \mR(\mC - \mB^\top  \mB)^{-2}\mR 
    \right)\hat F_r^2 / N_= + \Delta_d
    \\&= \Tr(\mR\mC^{-2}\mR)
   \hat F_{r}^2 / N_= + \Delta_d ' + \Delta_d
\end{align}
where   
\begin{align}
   \Delta_d' = \Tr
    &\left (\mR(\mC - \mB^\top  \mB)^{-1} \mB^\top \mB (\mC - \mB^\top  \mB)^{-1}\mR\right)/ N_= + \\ 
   & \Tr \left ( \mR((\mC - \mB^\top  \mB)^{-2} -  
    \mC ^{-2})\mR
    \right) /N_= \,. 
\end{align}
We claim that $\Delta_d'\to 0$ in probability. For the first term, we have 
\begin{align}
    &\Tr
    \left (\mR(\mC - \mB^\top  \mB)^{-1} \mB^\top \mB (\mC - \mB^\top  \mB)^{-1}\mR\right)/ N_=
    \\=&
    \Tr
    \left ((\mC - \mB^\top  \mB)^{-1}\mR^2(\mC - \mB^\top  \mB)^{-1} \mB^\top \mB \right)/ N_=
    \\=&\|(\mC - \mB^\top  \mB)^{-1}\mR^2(\mC - \mB^\top  \mB)^{-1}\|_\op
    \Tr
    \left (\mB^\top \mB \right)/ N_=
    \\\leq& \|(\mC - \mB^\top  \mB)^{-1}\|_\op \|\mR^2\|_\op 
    \|(\mC - \mB^\top  \mB)^{-1}\|_\op \Tr
    \left (\mB^\top \mB \right)/ N_=
    \\\lesssim& \Tr
    \left (\mB^\top \mB \right)/ N_= \sim N_</N_= \to 0 \, 
\end{align}
in probability as $d\to\infty$. We have used 
\begin{align}
\|\mR\|_\op\lesssim  1 
\\
    \|(\mC - \mB^\top  \mB)^{-1}\|_{\op} \leq \|\mW^{-1}\|_\op \lesssim 1 
    \\
    \frac 1 {N_=}\Tr\left (\mB^\top \mB \right) \sim N_</N_=\,. 
\end{align}
The last one holds because $\mB^\top \mB$ is a rank $N_<$ matrix with operator norm $\lesssim 1$. Note that this also implies that $\mB^\top \mB$ has at most $N_<$ many non-zero singular values, which is upper bounded by $\lesssim 1$. Using Von Neumann's trace inequalities, for any matrix  $\mA$, we have 
\begin{align}\label{eq:temp23}
 |\Tr \mA \mB^\top \mB | \leq \sum_{j}\sigma_j(\mA) \sigma_j(\mB^\top \mB)\lesssim N_< \|\mA\|_\op 
\end{align}
where $\sigma_j(\mA)$ is the $j$-th (in descending order) singular value of a matrix $\mA$. Now we proceed to control the second term.
Note that 
\begin{align}
  & (\mC - \mB^\top  \mB)^{-2} -  
    \mC ^{-2} 
    \\=& 
   (\mC - \mB^\top  \mB)^{-2} -  (\mC - \mB^\top  \mB)^{-1}\mC^{-1} + (\mC - \mB^\top  \mB)^{-1}\mC^{-1} -
    \mC ^{-2}
    \\
    =& (\mC - \mB^\top  \mB)^{-2} \mB^\top  \mB   \mC^{-1} 
    + (\mC - \mB^\top  \mB)^{-1} \mB^\top  \mB \mC^{-2}
\end{align}
As such, by \myeqref{eq:temp23} we have 
\begin{align}
    &|\Tr \mR(\mC - \mB^\top  \mB)^{-2} \mB^\top  \mB   \mC^{-1} \mR|/N_=
    \\=& 
    |\Tr \mC^{-1}  \mR^2(\mC - \mB^\top  \mB)^{-2} \mB^\top  \mB| /N_=
    \\\lesssim& N_</N_= \|\mC^{-1}  \mR^2(\mC - \mB^\top  \mB)^{-2}\|_\op\to 0 \,.
\end{align}
The other term can be bounded similarly. This finishes the proof of $\Delta_d'\to 0$ in probability. 
To sum up, we have the test error to be 
\begin{align}
    \mathrm{Err}(\mX;\lambda,  \vF,  \vh)  = 
    \Tr
    \left ( \mR^2\mC^{-2} \right) / N_=  \hat F_{r}^2  + 
    \left( 1 + m^{-1} \Tr \tilde \mD^{-2}   \overline \mZ_{\leq }^\top   \overline \mZ_{\leq }\right) \hat  F_{>r}^2
    + 
    \\m^{-1}\sigma_\epsilon^2  \Tr \tilde \mD^{-2}  \overline \mZ_{\leq }^\top   \overline \mZ_{\leq } 
    + 
    \Delta_d
\end{align}
\paragraph{Generalization Error via Marchenko-Pastur}
The next step is to reduce the traces to the integral form when $d\to\infty$. That is evaluating the followings as $d\to\infty$, 
\begin{align}
     &\Tr
    \left ( \mR^2\mC^{-2} \right) / N_= 
    \quad \mathrm{and }\quad 
    m^{-1}\sigma_\epsilon^2  \Tr \tilde \mD^{-2}  \overline \mZ_{\leq }^\top   \overline \mZ_{\leq }  
\end{align}
We begin with the simpler case $E_r=1$ and then consider $E_r>1$. 
\paragraph{
The $E_r=1$ case.} I.e., there is only one eigenspace with eigenvalues $\sim d^{-s_r}$. This is the case for one-hidden layer convolutional kernels and dot-product kernels. In this case, $n=0$ and 
\begin{align}
    \mR =  (\overline \xi_r^\upd)^{-1} \mI_{N_\leq}, \quad \mathrm{with} \quad 
    \overline\xi_r^\upd = \gamma^{-1} m(\sigma_{rn}^{(d)})^2  = \frac{m}{N_=} \hat h^2_r \gamma^{-1} \to \overline \xi_r = \alpha^{-1} \hat h^2_r \gamma^{-1} 
\end{align}
Choosing $\rho=(\overline \xi_r^\upd)^{-1}$ and applying Theorem \ref{Theorem:mp}, we have when\footnote{Note that $N_{\leq} = N_=(1 +o(1))$} $N_=/m\to\alpha\in (0, \infty)$ 
\begin{align}
    \overline \xi_r^{-2}
    \Tr
    \left ( \mC^{-2} \right) / N_= &\longrightarrow	 
    \int (1 + \overline \xi_r  t)^{-2} \mu_\alpha(t)d t\,\,
    \\
    \label{eq:finite-width-effect-variance}
    \frac{N_{\leq}}{m}  \frac 1 {N_{\leq}}  \Tr \tilde \mD^{-2}   \overline \mZ_{\leq }^\top   \overline \mZ_{\leq } 
    &\longrightarrow	 
    \alpha \overline \xi_r ^2
    \int {t}(1+ \overline \xi_r t)  ^{-2}\mu_\alpha(t) dt
\end{align}
Therefore, 
\begin{align}
    \mathrm{Err}(\mX;\lambda,  \vF,  \vh)  = \left(\hat F_r^2 \cdot 
    \int \frac{ \mu_\alpha(t)}{(1 + \overline \xi_r  t)^{2}}d t 
     + \hat F_{>r}^2  \right) 
    + \left( \hat F_{>r}^2 +\sigma_\epsilon^2\right) \,\cdot  
    \alpha \overline \xi_r ^2
    \int \frac{ t\mu_\alpha(t)}{(1 + \overline \xi_r  t)^{2}} dt + \Delta_d 
\end{align}
Both integrals have closed form formulas and they are computed in Sec.\ref{sec:computing integrals}. 

\paragraph{The $E_r>1$ Case.} Recall that $\mR$ is a diagonal matrix with entries 
$
\gamma (m(\sigma_{rn}^{(d)})^2)^{-1}   
$
whose multiplicity is $N_{rn}^{(d)}$. We assume 
the limiting density exist 
\begin{align}
\gamma (m(\sigma_{rn}^{(d)})^2)^{-1}  \to 
\gamma \alpha \hat h_{rn}^{-2}
\quad \mathrm{and} \quad 
    N_{rn}^{(d)} / \sum_{n\in [E_k]}  N_{rn}^{(d)} \to \tau_{rn} 
\end{align} 
and let $\nu_\vh(r)$ denote this distribution. For convenience, we still $\mR$ to represent a (sequence of) diagonal matrix with limiting spectral $\nu_\vh(r)$. By our assumptions on $\vh$, the support of $\nu_\vh(r)$ is bounded away from $0$ and $\infty$. Thus, ignoring vanishing correction term between $\overline \mZ_{\leq }^\top   \overline \mZ_{\leq } $ and $\overline \mZ_{= }^\top   \overline \mZ_{= } $, we need to compute the limit of the following 
\begin{align}
    &\frac 1 {N_\leq}\Tr \mR^2  (\mR + \overline \mZ_{\leq }^\top   \overline \mZ_{\leq } )^{-2} = 
    \mR^{1/2}  (1+ \mR^{-1/2}\overline \mZ_{\leq }^\top   \overline \mZ_{\leq } \mR^{-1/2})^{-1} \mR^{-1} (1+ \mR^{-1/2}\overline \mZ_{\leq }^\top   \overline \mZ_{\leq } \mR^{-1/2})^{-1}\mR^{1/2}\label{eqn:trace1}
    \\
    &\frac 1 {N_\leq}\Tr   (\mR + \overline \mZ_{\leq }^\top   \overline \mZ_{\leq } )^{-2}  \overline \mZ_{\leq }^\top   \overline \mZ_{\leq } = 
    \frac 1 {N_\leq}\Tr   \left((\mR + \overline \mZ_{\leq }^\top   \overline \mZ_{\leq } )^{-1}  -   (\mR + \overline \mZ_{\leq }^\top   \overline \mZ_{\leq } )^{-2}\mR  \right)\label{eqn:trace2} 
\end{align}
.

To evaluate the limit, we may need extra assumptions on the eigenfunctions $\phi_{knl}^\upd$ to ensure $\mR$ and $\mZ_{\leq }$ are asymptotically free. Nevertheless, under the freeness assumption, computing self-consistent equations that characterize the asymptotic values of the trace objects in~\myeqref{eqn:trace1} and~\myeqref{eqn:trace2} is then straightforward using tools from operator-valued free probability~\citep{mingo2017free}. We do not elaborate on the details here, but refer the reader so many related works for examples of how to apply these tools~\citep{far2006spectra,adlam2020neural,adlam2020understanding,tripuraneni2021covariate,tripuraneni2021overparameterization}.

\subsection{Computing the Integrals.}\label{sec:computing integrals}
It remains to compute the above integrals. Note that 
\begin{align}
    \overline \xi_r ^2
    \int \frac{ t\mu_\alpha(t)}{(1 + \overline \xi_r  t)^{2}} dt
    = \overline \xi_r \left( 
    \int \frac{ \mu_\alpha(t)}{(1 + \overline \xi_r  t)} dt
    -\int \frac{ \mu_\alpha(t)}{(1 + \overline \xi_r  t)^{2}} dt \right) 
\end{align}
As such we only need to compute, for $k=1$ and $k=2$, 
\begin{align}
    \zeta_\alpha(\xi; \alpha, k) = \int \frac{ \mu_\alpha(t)}{(1 +  \xi  t)^{k}} dt
\end{align}
Note that one only needs $\zeta_\alpha(\xi; \alpha, 1)$ as $\zeta_\alpha(\xi; \alpha, k)$ can be obtained from $\zeta_\alpha(\xi; \alpha, k-1)$ by taking derivative w.r.t. $\overline \xi$.
Denote $b_{\pm}=(1 \pm\sqrt{\alpha})^2$ and $\Delta = \alpha_+ - \alpha_-$. Then 
\begin{align}
    \mu_\alpha(t) = \left(1-\frac1\alpha\right)^{+} \delta_{0}(t) +  \frac{\sqrt{(\alpha_+ - t)(t-\alpha_-)}}{2\pi \alpha t}\1_{[\alpha_-, \alpha_+]}(t) dt
\end{align}
With $b=(1+\overline \xi_r \alpha_-)/(\overline \xi_r (\alpha_+ - \alpha_-))$ and $c = \overline \xi_r  \alpha_-/(\overline \xi_r (\alpha_+ - \alpha_-)) = \alpha_-/ (\alpha_+ - \alpha_-) $,  
\begin{align}
    &\int (1 + \overline \xi_r t)  ^{-k}\mu_\alpha (t) dt
    \\
    =& 
   \left(1-\frac1\alpha\right)^{+}  +
    \int_{[\alpha_-, \alpha_+]} (1 + \overline \xi_r t)  ^{-k}\frac{\sqrt{(\alpha_+ - t)(t-\alpha_-)}}{2\pi \alpha t} dt
    \\=&
    \left(1-\frac1\alpha\right)^{+} + 
    \frac{1}{2\pi\alpha\overline \xi_r} (\overline\xi_r (\alpha_+ - \alpha_-)) ^{1-k} \int_0^1 (b+t)^{-k} (c+t)^{-1} \sqrt{t(1-t)}dt 
\end{align}
Thanks to   
\href{https://www.wolframalpha.com/}{wolframalpha.com}, we have, after doing some algebra,  
\begin{align}
    \int_{0}^1 \frac{\sqrt{t(1-t)}}{(t+b)(t+c)} dt 
    &= \pi \left( -1 +\frac{1+b+c}{\sqrt{b(1+b)} +\sqrt{c(1+c)}}\right)
    \\
    \int_{0}^1 \frac{\sqrt{t(1-t)}}{(t+b)^2(t+c)} dt &
    = \frac{\pi}{2 \sqrt{b^2+b}
    ((b+c+2bc ) +2 \sqrt{(b+1)(c+1)bc})
    }
\end{align}
\subsection{Proof of Lemma \ref{lemma:inv}.}\label{sec:proof-lemma-key}

Note that this lemma is trivial if $\lim_{d\to\infty}m/N_{\leq r}^\upd = \alpha^{-1}>1$ as 
$\overline \mZ_{\leq}^\top  \overline \mZ_{\leq}$ follows the Marchenko-Pastur distribution, and the smallest eigenvalues is bounded from below by $\alpha_{-} = (1-\sqrt{\alpha})^{2}$. When $\alpha^{-1}\leq 1$, we need to use the regularization term $\overline \mLambda_{\leq} ^{-1}$. We provide the details below.

Recall that $\overline \mZ_{\leq }^\top  = [\overline \mZ_<^\top , \overline \mZ_r^\top ] $, where $\overline \mZ_<^\top $ is a $m\times N_{<r}^\upd$ matrix consisting of low frequency modes and $\overline \mZ_r^\top $ is a $m\times N_r^{(d)}$ is a matrix consisting of critical frequency modes. We have $N_{<r}^\upd  \sim d^{s_{r-1}}$, $N_r^\upd \sim N_{\leq r}^\upd  \sim d^{s_r}$ and 
\begin{align}
    \overline \mZ_{< }^\top  \overline \mZ_{<} = \mI_{N_{<r}^\upd} + \Delta_d  
\end{align}
where $\E\|\Delta_d\|_{op} \to 0$ as $d\to\infty$ in probability. Let $\vu = [\beta_< \ve_<^\top , \beta_r \ve_r^\top ]^\top $ be a unit vector in $\sR^{N_{\leq r}^\upd}$, where $\ve_<$ and $\ve_r$ are unit vectors in $\sR^{N_{<r}^\upd}$ and $\sR^{N_{r}^\upd}$ resp., and $\beta_<^2+\beta_r^2=1$. We want to show that for some $\lambda_1>0$
\begin{align}
\lambda_1\leq     \vu^\top  D\vu 
= \vu^\top   \overline \mLambda_{\leq} ^{-1}  \vu 
+ \vu^\top  \overline \mZ_{\leq}^\top  \overline \mZ_{\leq} \vu 
\end{align}
Note that the entries in $\overline \mLambda_{\leq} ^{-1}$ corresponding to the critical-frequencies are $m(\sigma_{nr}^\upd)^2 \sim 1$. Thus there is a constant $c>0$ such that 
\begin{align}
    \vu^\top   \overline \mLambda_{\leq} ^{-1}  \vu \geq c \beta_r^2
\end{align}
In addition, if $C \equiv \|\overline \mZ_{r}\ve_r\|_2 $ then $C\leq  2\alpha_+$ in probability. 
Thus by the triangle inequality, 
\begin{align}
\vu^\top  D\vu 
&\geq 
c\beta_r^2 + \|\beta_<\overline \mZ_{<}\ve_< + \beta_r\overline \mZ_{r}\ve_r \|_2^2
\\&\geq 
c\beta_r^2 
+ (\|\beta_<\overline \mZ_{<}\ve_<\|_2 - \|\beta_r\overline \mZ_{r}\ve_r\|_2)^2
\\
&\geq 
c\beta_r^2  + ((1-\Delta_d) |\beta_<| - C|\beta_r|)^2
\end{align}
where $\Delta_d\to0$ in probability. 
If $C|\beta_r|< \frac 12  |\beta_<|$, the above is greater than $(1/2-\Delta_d) \beta_<^2 + c\beta_r^2 \gtrsim 1$; otherwise 
$C|\beta_r|\geq  \frac 12  |\beta_<|$ and 
$\vu^\top  D\vu \geq c\beta_r^2 \geq c (\frac 1{2C}\beta_<)^2$ and as a result 
\begin{align}
    \vu^\top  D\vu \geq  2 c\beta_r^2 /2 \geq    (c\beta_r^2 + c (\frac 1{2C}\beta_<)^2)/2 \geq  c \min(1, \frac1 {2C})^{2}/2 \gtrsim 1
\end{align}  

The other direction is easier as both $\overline \mLambda^{-1}_{\leq }$ and $\overline \mZ_{\leq}^\top  \overline \mZ_{\leq}$ have operator norms bounded above.  

\subsection{Proof of Claim \ref{claim-tail-estimate}} \label{sec:proof-of-claim-tail}
The proof is split into two part: the ultra-high frequency parts $k\geq j_0$ and the median-high-frequency part, $r<k<j_0$. The first part is done by a moment-based calculation and the second part is done by matrix concentration \citep{vershynin2010introduction}.
\paragraph{Controlling the Ultra-high-frequency.}
 Recall that 
\begin{align}
    \Delta_{kn}^\upd \equiv \frac 1 {N_{kn}^\upd} Z_{kn}(\mX) Z_{kn}(\mX)^\top  - \mI_m \,.
\end{align}
By {\bf Assumption (4.)}, the diagonals are zero and we have 
\begin{align}
\Delta_{kn}^\upd   &= [Z_{kn}(\vx_i)^\top Z_{kn}(\vx_j)/N_{kn}^\upd]_{i, j\in [m], i\neq j}  
\end{align}

Then 
\begin{align}
    \E \|\Delta_{kn}^\upd \|_\op^2 \leq \E \| \Delta_{kn}^\upd  \|_{\mathrm{F}}^2 
    =& \E\sum_{i\neq j} |Z_{kn}(\vx_i)^\top Z_{kn}(\vx_j)/N_{kn}^\upd|^2
    \\=&
     \frac 1 {(N_{kn}^\upd)^2}\E\sum_{i\neq j} \sum_{l, l'}\phi^\upd_{knl}(\vx_i)\phi^\upd_{knl}(\vx_j)\phi^\upd_{knl'}(\vx_i)\phi^\upd_{knl'}(\vx_j)
     \\=&
     \frac 1 {(N_{kn}^\upd)^2}\E\sum_{i\neq j} \sum_{l }\phi^\upd_{knl}(\vx_i)^2\phi^\upd_{knl}(\vx_j)^2
     \\=& \frac 1 {N_{kn}^\upd} m(m-1) \leq \frac 1 {N_{kn}^\upd} m^2
\end{align}
Recall that $E_k$ grows at most exponentially, i.e. $E_k\leq C^k$ for some constant $C$. Thus, choosing $d$ large enough such that $Cd^{-\delta_0/4}<1$ and summing over $k>j_0 \equiv [4s_r/\delta_0 + 4]+4 $, 
\begin{align}
     \E \sum_{k>j_0}\sum_{n\in E_k} \|\Delta_{kn}^\upd\|_\op 
    \lesssim& \sum_{k>j_0} C^k (m^2 / N_{kn}^\upd )^{1/2}
    \\ \lesssim& \sum_{k>j_0} C^k d^{-s_k/2 + s_r}
    \\ \leq & \sum_{k>j_0} C^k d^{-k \delta_0/ 2 + s_r} 
    \\\lesssim & \sum_{k>j_0} d^{-k \delta_0/ 4 + s_r} 
    \lesssim d^{-j_0\delta_0/4 +s_r}
    \lesssim d^{-\delta_0}
\end{align}

\paragraph{Controlling the Median-high-frequency.}
It remains to show, for some $\epsilon>0$ 
\begin{align}
     \E \sum_{r< k\leq  j_0}\sum_{n\in E_k}\|\Delta_{kn}^\upd\|_\op  \lesssim d^{-\epsilon} \,.
\end{align}
As there are only finitely many terms in this sum, we only need to show that for each $k$ and $n$, 
$$
\E\|\Delta_{kn}^\upd\|_\op \lesssim d^{-\epsilon}\,. 
$$
We use the following theorem from \citet{vershynin2010introduction} regarding matrix concentration to prove this claim. 

\begin{theorem}[Theorem 5.62 \citet{vershynin2010introduction}] Let $\mA$ be a $N\times m$ ($N\geq m$) matrix whose columns $A_j$ are independent isotropic random vectors in $\sR^N$ with $\|A_j\|_2=N$ a.s. Let $K$ be defined as 
\begin{align}
    K = \frac 1 N \E \max_{j\leq m} \sum_{i\in [m], i \neq j} |A_j^\top A_i|^2
\end{align}
Then 
\begin{align}
    \E \| \mA^\top  \mA /N -\mI_m\|_\op \lesssim \sqrt{K \log(m)/N} 
\end{align}
\end{theorem}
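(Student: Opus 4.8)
The plan is to bound $\E\|M\|_{\op}$ where $M := \tfrac1N\mA^T\mA - \mI_m$ is symmetric. The first step is a reduction: since each column satisfies $\|A_j\|_2^2 = N$ almost surely, the diagonal of $\tfrac1N\mA^T\mA$ is identically $1$, so $M$ has vanishing diagonal and $M = \tfrac1N\sum_{i\neq j}\langle A_i,A_j\rangle\, e_i e_j^T$. Hence all the randomness in $M$ lives in the off-diagonal inner products, and $K$ is, up to the factor $1/N$, exactly the expected squared maximal row norm of $NM$ — the natural variance scale for this degree-two chaos in the independent columns $A_1,\dots,A_m$. I will also use the elementary bound $K\ge m-1$ (from $\E\langle A_i,A_j\rangle^2 = N$ for $i\neq j$ together with Cauchy--Schwarz), so that lower-order contributions such as $\sqrt{m/N}$ get absorbed into $\sqrt{K\log m/N}$ at the end.

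The core of the argument is to estimate this off-diagonal chaos with a $\sqrt{\log m}$ factor rather than the $\sqrt m$ a naive argument would produce, and I would do this in two moves. First, \emph{decoupling}: viewing $M$ as a chaos in the $A_i$ valued in the space of matrices with the operator norm, a Banach-space decoupling inequality (de la Pe\~na--Montgomery-Smith) bounds $\E\|M\|_{\op}$ by a universal constant times $\E\big\|\tfrac1N\mA^T\mA'\big\|_{\op}$, where $\mA'$ is an independent copy of $\mA$ (the diagonal terms one adds back cost only $\tfrac1N\E\max_i|\langle A_i,A_i'\rangle|\lesssim\sqrt{m/N}$ and are discarded). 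Second, \emph{symmetrization and a noncommutative Khintchine / Rudelson-type inequality}: conditioning on one factor, $\tfrac1N\mA^T\mA'$ is a sum of independent, mean-zero rank-one matrices in the remaining columns; applying Rudelson's bound (whose hallmark is a $\sqrt{\log m}$ and, crucially, the absence of the large-deviation term that matrix Bernstein would add) and identifying the resulting variance proxies — one a diagonal matrix whose top entry is a decoupled version of $K/N$, the other controlled through $\tfrac1N\|\mA'^T\mA'\|_{\op} = \|\mI_m + M'\|_{\op}$ with $M'$ an independent copy of $M$ — leads, after iterating away a self-referential term, to the claimed $\E\|M\|_{\op}\lesssim\sqrt{K\log m/N}$.

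I expect the second move to be the main obstacle, and the reason the bound carries $\sqrt{\log m}$ and not $\sqrt m$ is exactly what is at stake there. A plain $\epsilon$-net over $S^{m-1}$ (of size at most $9^m$) together with a Bernstein tail bound for the scalar form $\langle Mx,x\rangle$ after decoupling does \emph{not} suffice: for a general isotropic column the only control available on $\langle A_i,w\rangle$ is its second moment together with the deterministic Cauchy--Schwarz bound $|\langle A_i,w\rangle|\le\sqrt N\,\|w\|$, so the sub-exponential tail term produced this way fails to improve under the union bound, leaving only the weaker estimate $\sqrt{Km/N}$. It is precisely the Rudelson / noncommutative-Khintchine step — and the accompanying check that the factor multiplying the circular $\E\|M\|_{\op}$ term is of order $\sqrt{(\log m)/N}$, hence small enough to absorb — that requires care. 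The Banach-space decoupling constant, the comparison of the decoupled row sums with $K$, and the elementary quadratic-inequality manipulation are routine, and I would defer them.
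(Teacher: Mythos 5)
You should note at the outset that the paper contains no proof of this statement: it is imported verbatim (modulo the typo $\|A_j\|_2=N$, which should be $\|A_j\|_2^2=N$, as both your reading and the paper's application to $Z_{kn}(\mX)^T$ assume) from Vershynin's lecture notes, Theorem 5.62, and used as a black box in the proof of Claim~\ref{claim-tail-estimate}. So the relevant comparison is with the known proof of that theorem. Your architecture --- zero diagonal from the exact column norms, decoupling, a noncommutative Khintchine/Rudelson step to produce $\sqrt{\log m}$, absorption of a self-referential term by a quadratic inequality, and $K\geq m-1$ to swallow the $\sqrt{m/N}$-type remainders --- is indeed the standard route. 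The one structural difference is that Vershynin decouples with random selectors splitting $[m]$ into two halves of the \emph{same} family, which has the concrete benefit that the incoherence quantity appearing after conditioning is literally dominated by $K$; your de la Pe\~na--Montgomery-Smith version with an independent copy instead produces the decoupled quantity $\E\max_j\sum_i \langle A_i,A'_j\rangle^2$, and relating it back to $K$ (e.g.\ via the reverse direction of the decoupling inequality applied to the max-column-$\ell^2$ norm) is an extra step you should not file under ``routine'' without saying how.

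The genuine gap is at the step you yourself flag as the crux, and as described it does not go through. Conditioning on $\mA'$ and writing $\frac1N \mA^T\mA' = \frac1N\sum_i e_i v_i^T$ with $v_i = \mA'^T A_i$, the two matrix variance proxies in the rectangular Khintchine bound are $\max_i\|v_i\|_2$ (your ``decoupled $K/N$'' term, fine) and $\bigl\|\sum_i v_iv_i^T\bigr\|_{\op}^{1/2} = \|\mA'^T\mA\mA^T\mA'\|_{\op}^{1/2} = \|\mA^T\mA'\|_{\op}$, i.e.\ $N$ times the very quantity being bounded, entering with a prefactor $\sqrt{\log m}\geq 1$ that cannot be iterated away. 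It is \emph{not} $\frac1N\|\mA'^T\mA'\|_{\op} = \|\mI_m+M'\|_{\op}$: that identification holds only for the conditional expectation $\E_{\mA}\sum_i v_iv_i^T = m\,\mA'^T\mA'$ (by isotropy), and justifying the replacement of the proxy by its expectation is exactly the missing argument. The standard repair --- and what Vershynin actually does --- is to square first: bound $\|\frac1N\mA^T\mA'\|_{\op}^2 = \frac1{N^2}\|\sum_i v_iv_i^T\|_{\op}$, split off the conditional mean $m\,\mA'^T\mA'$, and control the fluctuation by symmetrization plus Rudelson's inequality for the symmetric rank-one sum $\sum_i \varepsilon_i v_iv_i^T$, whose geometric-mean structure makes the self-referential quantity appear only under a square root multiplied by roughly $\sqrt{KN\log m}$; solving the two resulting quadratic inequalities gives a circular coefficient of order $\sqrt{m/N}$ --- not $\sqrt{(\log m)/N}$ as you assert --- which is then absorbed using $N\geq m$ and $K\geq m-1$. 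Separately, ``mean-zero'' is unjustified: isotropic columns need not be centered. This is repairable (the conditional mean term has norm at most $\sqrt{m/N}\,\sqrt{1+\|M'\|_{\op}}$ since $\|\E A_i\|_2\leq 1$, and the same quadratic absorption handles it), and in the paper's application the columns are centered anyway because the eigenfunctions $\phi_{knl}$ with $k\geq1$ are orthogonal to constants, but as a proof of the quoted theorem the claim needs this extra care.
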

We apply this theorem to $\mA = Z_{kn}(\mX)^\top $. The columns of $Z_{kn}(\mX)^\top $ are $A_j=\phi^\upd_{kn}(\vx_j)$, $j\in [m]$ which are independent. Let $N = N_{kn}^\upd$. By Assumption (4.),
\begin{align}
    A_j^\top A_j = \sum_{l}\phi^\upd_{knl}(\vx_j)^2 = N. 
\end{align}
We claim that $K\lesssim_{k,q} m^{1+ \frac 1 q}$ for any $q\geq 1$.  Indeed, let 
\begin{align}
    B_{j} = \sum_{i\in [m], i \neq j} |A_j^\top A_i|^2
\end{align}
We then remove the maximal function by paying an $m^{1/q}$ factor  
\begin{align}
    K = \frac 1 N \E \max_{j\leq m} B_j 
    \leq \frac 1 N m^{1/q }|\E B_j^q  |^{1/q}
     = \frac 1 N m^{1/q }\left|\E (\sum_{i\in [m], i \neq j} |A_j^\top A_i|^2)^{2q /2}  \right|^{1/q}
\end{align}
Next we apply the Minkowski inequality to swap the $L^{2q}$-norm and the $l^2$-norm, 
\begin{align}
    K \leq \frac 1 N m^{1/q }\sum_{i\in [m], i \neq j}  (\E |A_j^\top A_i|^{2q})^{1/2q  \times 2 }
    \leq \frac 1 N m^{1/q +1 } (\E |A_j^\top A_i|^{2q})^{1/2q  \times 2 } \leq C_{k,q}^2 m^{1/q + 1}
\end{align}
if $(\E |A_j^\top A_i|^{2q})^{1/2q  \times 2 } \leq C_{k, q}^2 N$, which is done by hypercontractivities below. Indeed, for $\vx_j$ fixed, $Z_{rn}(\vx_j)^\top Z_{rn}(\vx_i)$ is a linear combination of $\phi_{knl}^\upd$, {\bf Assumption (2.)} gives 
\begin{align}
    \E_{\vx_i} |A_j^\top A_i|^{2q}
    =& \E_{\vx_i} |Z_{rn}(\vx_j)^\top Z_{rn}(\vx_i)|^{2q}
    \\\leq&  \left( C_{k, q} (\E_{\vx_i} |Z_{rn}(\vx_j)^\top Z_{rn}(\vx_i)|^{2})^{1/2}\right)^{2q}
    \\=& C_{k, q}^{2q} N^q 
\end{align}
where we applied  
\begin{align}
    \E_{\vx_i} |Z_{rn}(\vx_j)^\top Z_{rn}(\vx_i)|^{2}
    =& \E_{\vx_i} |\sum_{l} \phi_{knl}^\upd(\vx_i)\phi_{knl}^\upd(\vx_j)|^2
    \\=& \E_{\vx_i} \sum_{ll'} \phi_{knl}^\upd(\vx_i)\phi_{knl}^\upd(\vx_j)
    \phi_{knl'}^\upd(\vx_i)\phi_{knl'}^\upd(\vx_j)
     \\=& \sum_{l}\phi_{knl}^\upd(\vx_j)^2 
      \\=& N 
\end{align}
Therefore with $\mA  = Z_{kn}(\mX)^\top $, we have 
\begin{align}
\E \|\Delta_{kn}^\upd\|_\op =    \E \|Z_{kn}(\mX) Z_{kn}(\mX)^\top  /N_{kn}^\upd -\mI_m\|_\op
    \lesssim_{k, q} \sqrt{m^{1+ 1/q }\log m /N_{kn}^\upd} \,.
\end{align}
For each fixed $k > r$, $s_k -s_r\geq \delta_0$, by choosing $q$ sufficiently large (depending on $k$ and $\delta_0$), we have 
\begin{align}
    \E \|\Delta_{kn}^\upd\|_\op \lesssim_{k} d^{-\delta_0/2}\,.
\end{align}

\section{Additional Plots}\label{sec:additional-plots}
To simulate the learning curves for higher-order scalings, e.g. $r=4$, we must chose $d$ small. As such, we are in a strong finite-size correction regime. In this section, we vary $d$ to visualize the finite-size effect of the predictions. Note that for larger $d$ ($=60$ here), we can only simulate up to the quadratic scaling. 
For smaller $d$ ($d=10$), we observe noticeable finite-size correction. However, the predictions match the simulations quite well. When $d$ become larger $d=60$, the predicted learning curves match the simulation almost perfectly.   
\begin{figure}
    \centering
    \includegraphics[width=.59\textwidth]{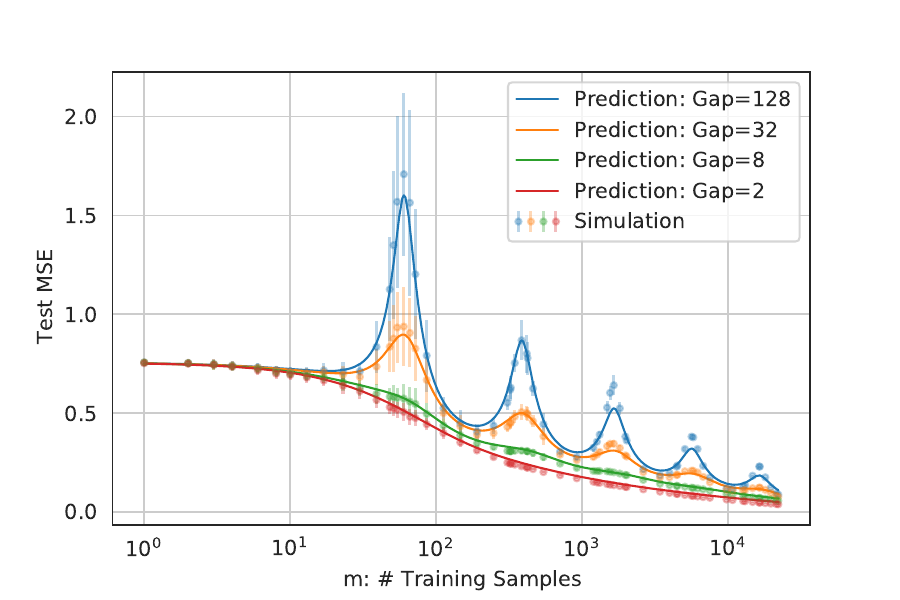}
         \caption{{\bf Tiny $d=10$ }}
    \label{fig:tiny-d-multiple-descent}
    \includegraphics[width=.59\textwidth]{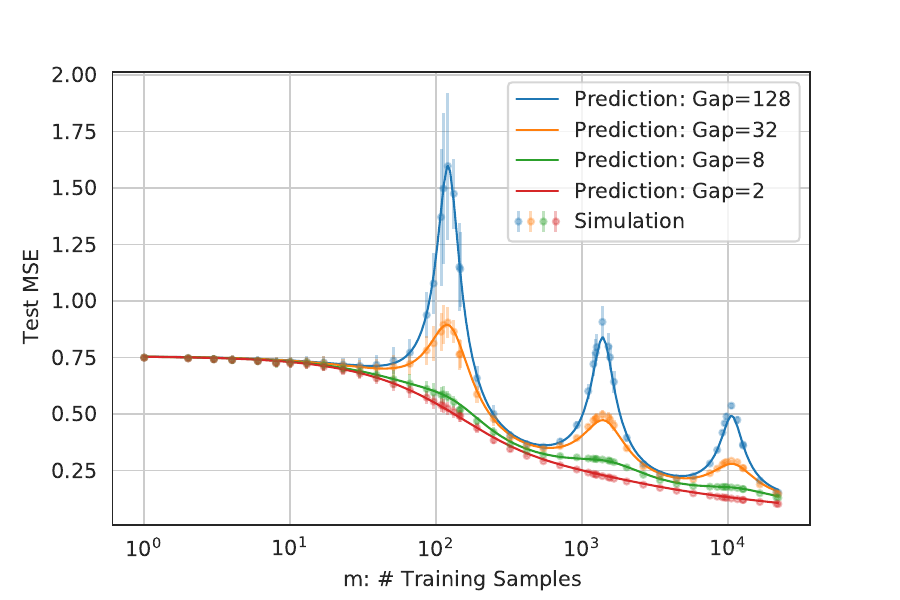}
        \caption{{\bf Small $d=20$}}
    \label{fig:small-d-multiple-descent}
    \includegraphics[width=.59\textwidth]{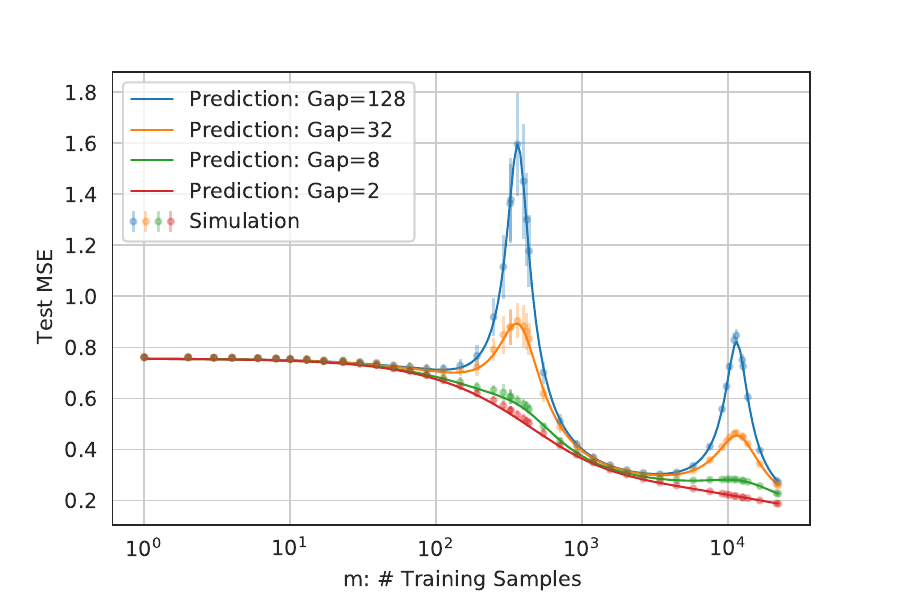}
    \caption{{\bf Large $d=60$ }}
    \label{fig:large-d-multiple-descent}
\end{figure}

\section{Further Analysis}

\subsection{Reducing finite-size Effect.}
There are two non-obvious improvements in our results that lead to near perfect agreements between simulations and predictions even for small $d$. The first one is to use $m=N(d, \leq r)$ to compute $r$-th peak vs. $m=N(d, r)$ (or $m=d^r/r!$). As it is shown in Fig.~\ref{fig:correcting-finite-size-effect} (a), using $m=N(d, r)$ as the peak in the theoretical prediction, the prediction is a bit off to the left. The second one is to use the sum over all contributions from all critical scaling $m=N(d, \leq r)$ (i.e., \myeqref{eq:learning-curves} rather than the contribution from a single critical scaling (i.e., \myeqref{eq:generalization formula}.) As it is shown in Fig.~\ref{fig:correcting-finite-size-effect} (b), the predictions are a bit smaller than the simulations when using the latter. These two improvements together lead to accurate agreement Fig.~\ref{fig:correcting-finite-size-effect} (c).

\begin{figure}
     \centering
     \begin{subfigure}[b]{0.49\textwidth}
         \centering
         \includegraphics[width=\textwidth]{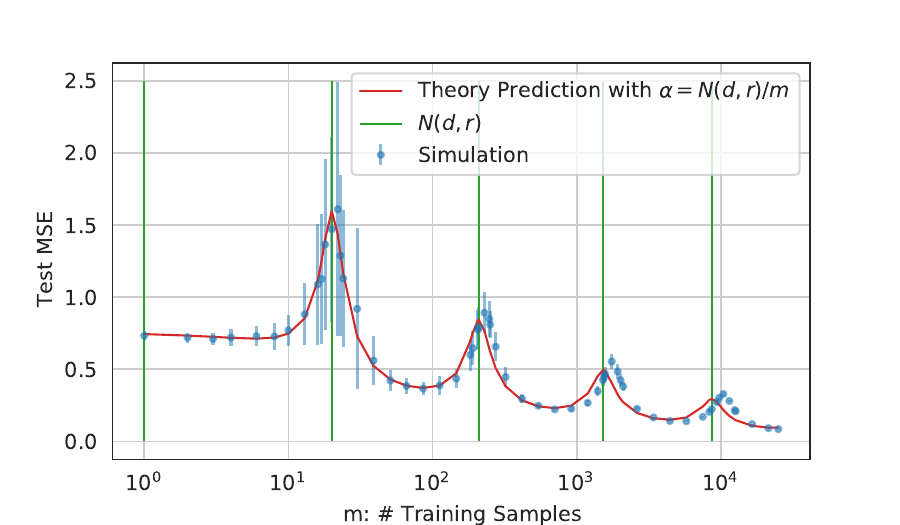}
         \caption{Using $\alpha=N(d,r)/m$}
         \label{fig:y equals x}
     \end{subfigure}
     \hfill
     \begin{subfigure}[b]{0.49\textwidth}
         \centering
         \includegraphics[width=\textwidth]{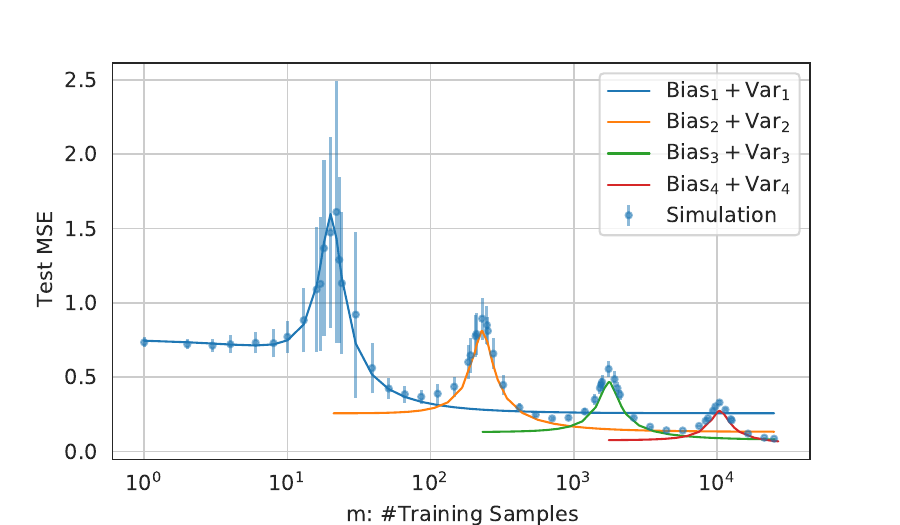}
         \caption{Using \myeqref{eq:generalization formula}}
     \end{subfigure}
     \hfill
     \begin{subfigure}[b]{0.49\textwidth}
         \centering
         \includegraphics[width=\textwidth]{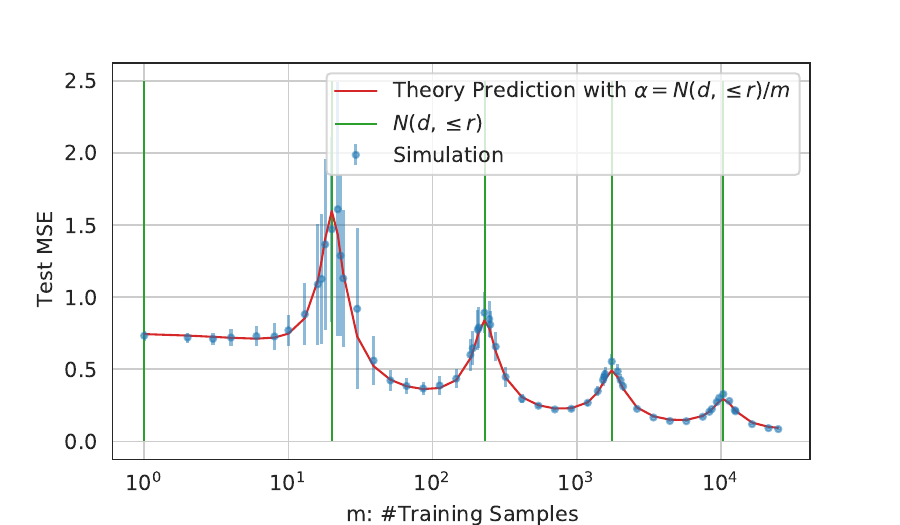}
         \caption{Using \myeqref{eq:learning-curves}}
         \label{fig:five over x}
     \end{subfigure}
        \caption{{\bf Two improvements reduce the finite-size effect.} (a) The theoretical prediction is a bit off to the left when estimating  $\alpha$ using $N(d,r)/m$. (b) The prediction from \myeqref{eq:generalization formula} is a bit smaller than the simulation due to the finite-size effect. 
        (c) Almost perfect agreement between the prediction and the simulation after two improvements (1) replacing \myeqref{eq:generalization formula} by \myeqref{eq:learning-curves} and (2) estimating $\alpha$ with $N(d, \leq r)/m$ rather than  $N(d, r)/m$. Here $d=20$ and $p=1$, i.e., inner product kernel. 
        }
        \label{fig:correcting-finite-size-effect}
\end{figure}

\subsection{Choosing the number of peaks by choosing the right regularization.}
Recall that the height of the $r$-th variance term scales like  
\begin{align}
        \xi_r(\hat\vh,\lambda, 1)^{1/2} = \left(\frac{\hat h_r^2}{\lambda + \hat h_{>r}^2}\right)^{1/2}. 
\end{align}
If $\hat h_r^2 \gg \hat h_{>r}^2$ and $\lambda  \leq \hat h_{>r}^2$, then $\xi_r(\hat\vh, \lambda)^{1/2}$ is large, which could lead to a peak at $m=N(d,\leq r)$. To eliminate this peak, we could choose $\lambda \sim \hat h_r^2$ which implies $\xi_r(\hat\vh,\lambda, 1)\lesssim 1$. We verify this observation in Fig.~\ref{fig:controlling-peaks}. When $\lambda=0$, the unregularized learning curve have 4 peaks. By increasing $\lambda$ to $1e-7, 1e-5, 1e-3, 1e-1$, the number of peaks are reduced to 3, 2, 1, 0, respectively. A similar result has also been observed in a linear design setting \citep{wu2020optimal}. The similarity between the linear design in \citep{wu2020optimal} and the nonlinear design here shouldn't be surprising, as we prove a "Gaussian equivalence conjecture," which implies that the polynomial scalings are essential "replicas" of linear designs with different scales.

\begin{figure}
    \centering
    \begin{subfigure}[b]{0.49\textwidth}
    \includegraphics[width=\textwidth]{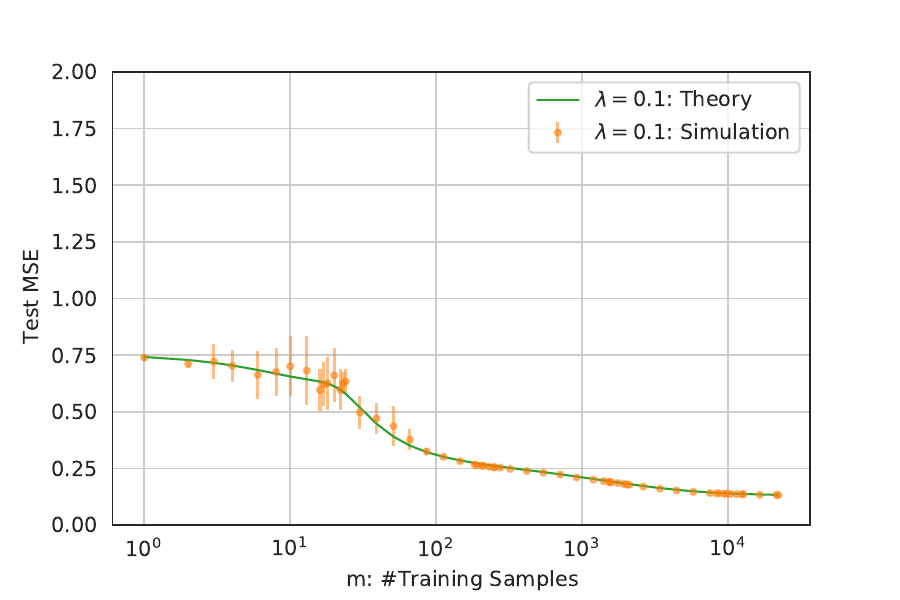}
     \caption{No peak: $\lambda=0.1$}
    \end{subfigure}
    \begin{subfigure}[b]{0.49\textwidth}
    \includegraphics[width=\textwidth]{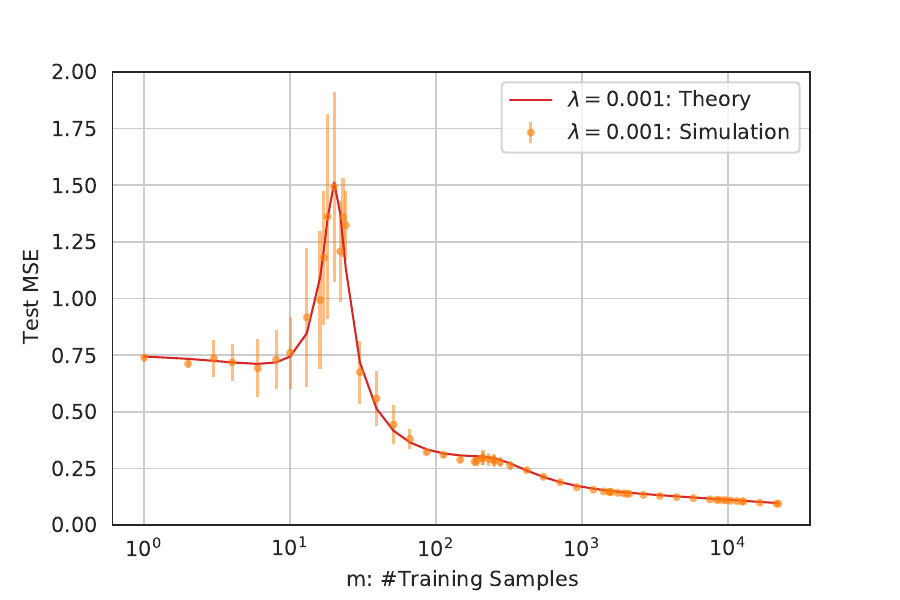}
         \caption{One peak: $\lambda=10^{-3}$}
    \end{subfigure}
    \begin{subfigure}[b]{0.49\textwidth}
    \includegraphics[width=\textwidth]{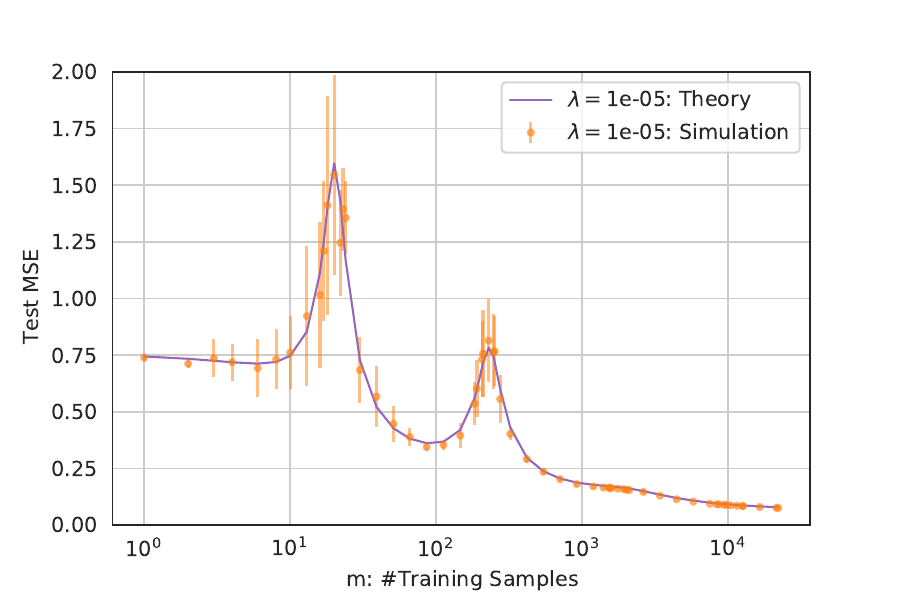}
             \caption{Two peaks: $\lambda=10^{-5}$}
    \end{subfigure}
    \begin{subfigure}[b]{0.49\textwidth}
    \includegraphics[width=\textwidth]{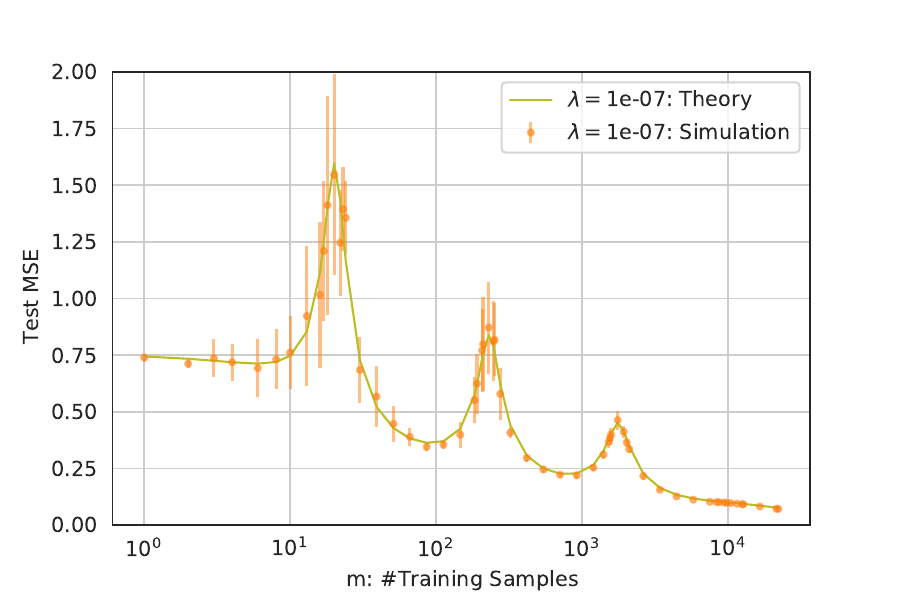}
                 \caption{Three peaks: $\lambda=10^{-7}$}
    \end{subfigure}
    \begin{subfigure}[b]{0.49\textwidth}
    \includegraphics[width=\textwidth]{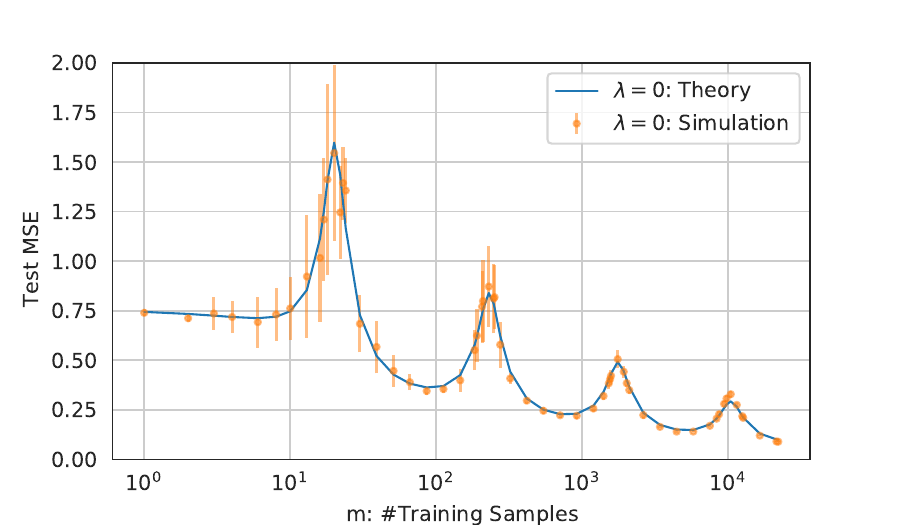}
                 \caption{Four peaks: $\lambda=0$}
    \end{subfigure}
        \begin{subfigure}[b]{0.49\textwidth}
    \includegraphics[width=\textwidth]{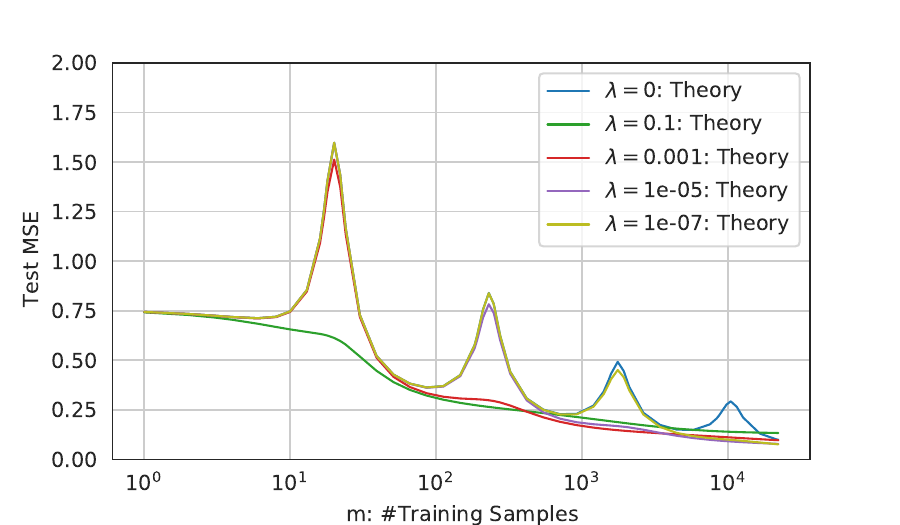}
                 \caption{\# Peaks vs Regularization.}
    \end{subfigure}

    \caption{{\bf Controlling the number of peaks by varying the strength of regularization.} }
    \label{fig:controlling-peaks}
\end{figure}

\subsection{Natural Data vs. Spherical Data}
We compare the spectrum of the NTKs of CIFAR10 associated with three architectures (FCN: fully-connected networks, CNN-VEC: convolutional networks without pooling, and CNN-GAP: convolutional networks with a global average pooling) against the one-layer convolutional kernels with spherical-type of data. Recall that the larger spectral gap between eigenspaces triggers the multiple-descent phenomenon. This phenomenon disappears, and the learning curve becomes monotonic when the spectral gap is small. Fortunately, for CIFAR10, the spectrum of the NTKs are continuous, and the learning curves are monotonic (power-law decay.) As such, there is still a gap between our results/assumptions and natural data.

\begin{figure}
    \centering
    \begin{subfigure}[b]{0.49\textwidth}
    \includegraphics[width=\textwidth]{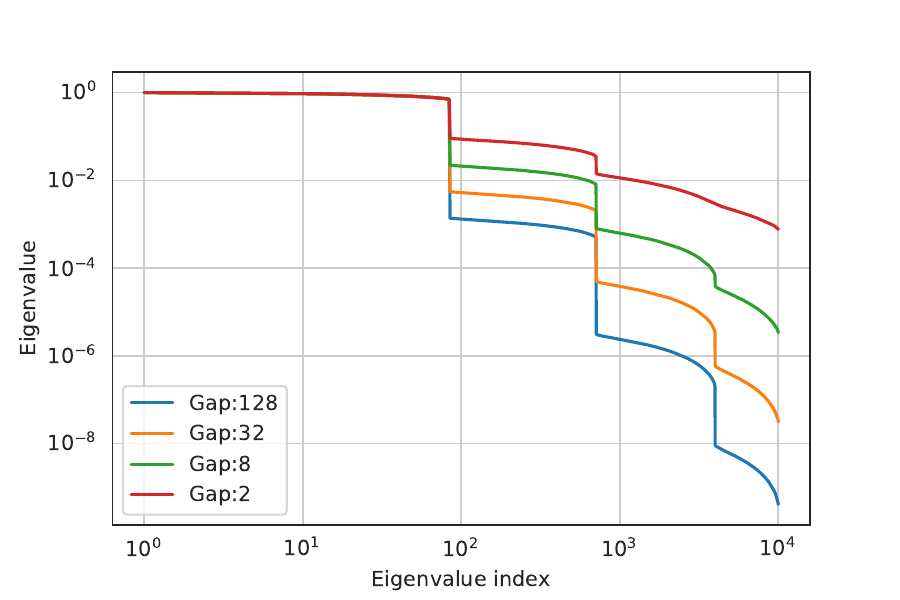}
    \caption{Spectrum: Spherical Data}
    \end{subfigure}
    \begin{subfigure}[b]{0.49\textwidth}
    \includegraphics[width=\textwidth]{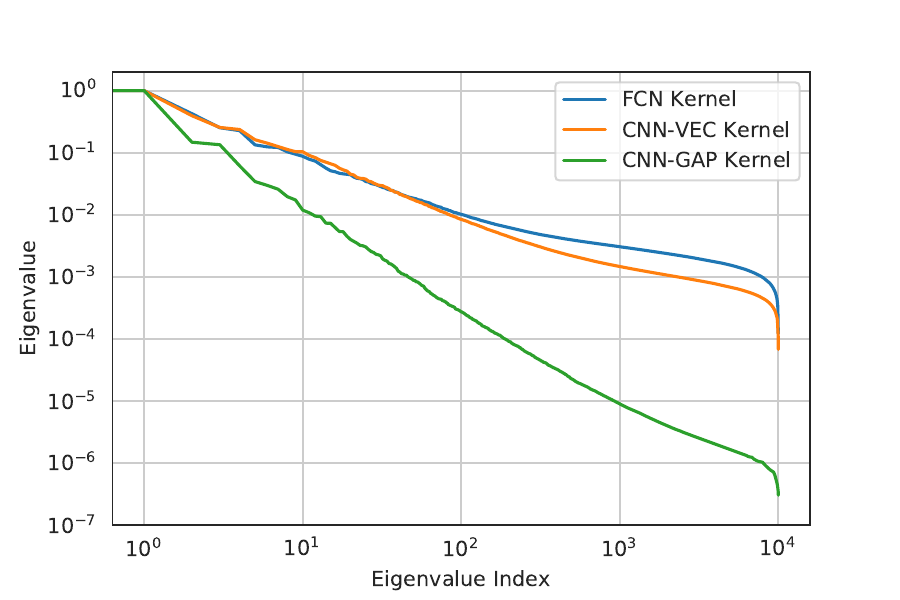}
    \caption{Spectrum: CIFAR10}
    \end{subfigure}
    \begin{subfigure}[b]{0.49\textwidth}
    \includegraphics[width=\textwidth]{multiple-descent-var-sp-theory-sim-d20-p=6.pdf}
    \caption{Test MSE: Spherical Data}
    \end{subfigure}
    \begin{subfigure}[b]{0.49\textwidth}
    \includegraphics[width=\textwidth]{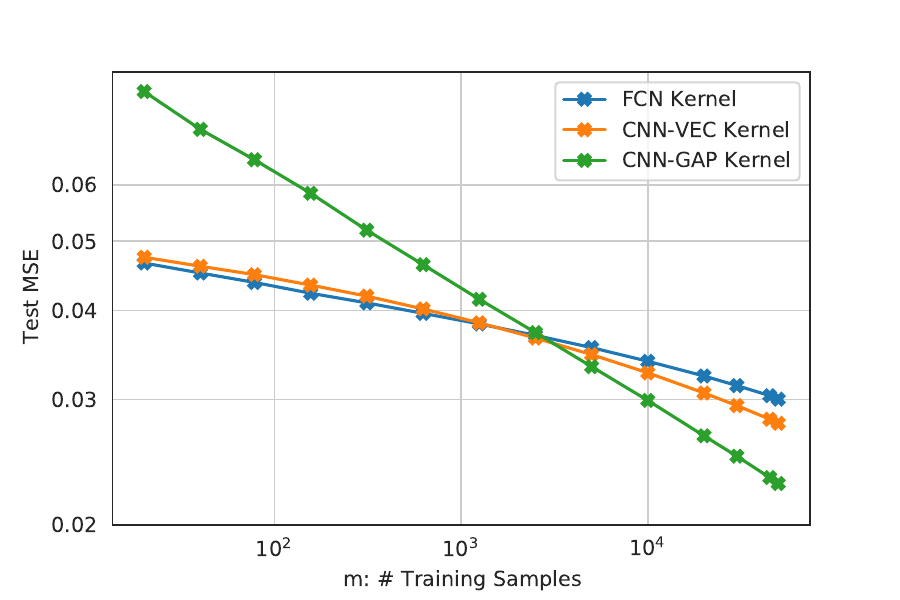}
        \caption{Test MSE: CIFAR10}
    \end{subfigure}
\caption{ {\bf Spectrum (top) and learning curves (bottom) of Spherical data (left) vs. CIFAR10 (right.)} The spectrum of CIFAR10 has a power-law decay and does not contain any sizable spectral gap, which is the main cause of the multiple-descent phenomena. The learning curves of CIFAR10 have power-law decay for all three kernels.}
    \label{fig:my_label}
\end{figure}

\end{document}